\documentclass{article} % For LaTeX2e
\usepackage{iclr2025_conference,times}

% Optional math commands from https://github.com/goodfeli/dlbook_notation.
%%%%% NEW MATH DEFINITIONS %%%%%

\usepackage{amsmath,amsfonts,bm}
\usepackage{amssymb, amsthm}

% Mark sections of captions for referring to divisions of figures

% Highlight a newly defined term

% Figure reference, lower-case.

% Figure reference, capital. For start of sentence

% Section reference, lower-case.

% Section reference, capital.

% Reference to two sections.

% Reference to three sections.

% Reference to an equation, lower-case.
%\def\eqref#1{equation~\ref{#1}}
% Reference to an equation, upper case

% A raw reference to an equation---avoid using if possible

% Reference to a chapter, lower-case.

% Reference to an equation, upper case.

% Reference to a range of chapters

% Reference to an algorithm, lower-case.

% Reference to an algorithm, upper case.

% Reference to a part, lower case

% Reference to a part, upper case

\def\1{\bm{1}}

% Random variables

% rm is already a command, just don't name any random variables m

% Random vectors

% Elements of random vectors

% Random matrices

% Elements of random matrices

% Vectors

% Elements of vectors

% Matrix

% Tensor
\DeclareMathAlphabet{\mathsfit}{\encodingdefault}{\sfdefault}{m}{sl}
\SetMathAlphabet{\mathsfit}{bold}{\encodingdefault}{\sfdefault}{bx}{n}

% Graph

% Sets

% Don't use a set called E, because this would be the same as our symbol
% for expectation.

% Entries of a matrix

% entries of a tensor
% Same font as tensor, without \bm wrapper

% The true underlying data generating distribution

% The empirical distribution defined by the training set

% The model distribution

% Stochastic autoencoder distributions

 % Laplace distribution

\newcommand{\E}{\mathbb{E}}

\newcommand{\R}{\mathbb{R}}

\newcommand{\Var}{\mathrm{Var}}

% Wolfram Mathworld says $L^2$ is for function spaces and $\ell^2$ is for vectors
% But then they seem to use $L^2$ for vectors throughout the site, and so does
% wikipedia.

 % See usage in notation.tex. Chosen to match Daphne's book.
\newcommand{\dd}{\mathrm{d}}

%\newcommand{\cev}[1]{\reflectbox{\ensuremath{\vec{\reflectbox{\ensuremath{#1}}}}}}

%\makeatletter
%\DeclareRobustCommand{\cev}[1]{%
%  {\mathpalette\do@cev{#1}}%
%}

%\newcommand{\do@cev}[2]{%
%  \vbox{\offinterlineskip
%    \sbox\z@{$\m@th#1 x$}%
%    \ialign{##\cr
%      \hidewidth\reflectbox{$\m@th#1\vec{}\mkern4mu$}\hidewidth\cr
%      \noalign{\kern-\ht\z@}
%      $\m@th#1#2$\cr
%    }%
%  }%
%}
%\makeatother
%%%%%%% Commands related to diffusion based sampling

\makeatletter
\DeclareRobustCommand{\cev}[1]{%
  \mathpalette\do@cev{#1}%
}
\newcommand{\do@cev}[2]{%
  \fix@cev{#1}{+}%
  \reflectbox{$\m@th#1\vec{\reflectbox{$\fix@cev{#1}{-}\m@th#1#2\fix@cev{#1}{+}$}}$}%
  \fix@cev{#1}{-}%
}
\newcommand{\fix@cev}[2]{%
  \ifx#1\displaystyle
    \mkern#23mu
  \else
    \ifx#1\textstyle
      \mkern#23mu
    \else
      \ifx#1\scriptstyle
        \mkern#22mu
      \else
        \mkern#22mu
      \fi
    \fi
  \fi
}

\makeatother

\DeclareMathOperator*{\argmin}{arg\,min}

\DeclareMathOperator{\Tr}{Tr}

% \definecolor{mfvi}{RGB}{214,39,40} % red
% \definecolor{gmmvi}{RGB}{176,176,176} % darkgray176
% \definecolor{nfvi}{RGB}{255,127,14} % orange
% \definecolor{smc}{RGB}{23,190,207} % lightblue
% \definecolor{aft}{RGB}{44,160,44} % green
% \definecolor{craft}{RGB}{188,189,34} % gold
% \definecolor{fab}{RGB}{127,127,127} % grey
% \definecolor{dds}{RGB}{148,103,189} % purple
% \definecolor{pis}{RGB}{227,119,194} % pink
% \definecolor{mcd}{RGB}{140,86,75} % brown
% \definecolor{ldvi}{RGB}{31,119,180} % blue
% \definecolor{cmcd}{RGB}{227,119,194} % black

% \definecolor{darkgray176}{RGB}{176,176,176}

\usepackage{hyperref} %[hidelinks]
\hypersetup{colorlinks=true,allcolors=[rgb]{0,0,0.3}}% hyperlinks
\usepackage{url}
\usepackage{mathtools}
\usepackage{booktabs} % for professional tables
\usepackage{colortbl}
\usepackage{subcaption}
\usepackage{cleveref}
\usepackage{csquotes}
\usepackage{multirow}
\usepackage{enumerate}
\usepackage[inline]{enumitem}
\usepackage{etoc}

\definecolor{5Kic}{rgb}{1, 0.35, 0.2}
\definecolor{nE8S}{rgb}{0.39, 0.42, 0.22}
\definecolor{Qe9N}{rgb}{0.858, 0.188, 0.478}
\definecolor{xTDP}{rgb}{0.9, 0.1, 0.1}
\definecolor{znkS}{rgb}{0, 0, 0.9}

\usepackage[disable]{todonotes}

\newcommand{\lowpriotodo}[2][]{%
    \todo[size=\small,backgroundcolor=gray!10,bordercolor=gray!10,#1]{Low priority:~\textcolor{black!40}{#2}}%
}
\renewcommand{\lowpriotodo}[2][]{}

\theoremstyle{plain}
\newtheorem{theorem}{Theorem}[section]
\newtheorem{proposition}[theorem]{Proposition}
\newtheorem{lemma}[theorem]{Lemma}

\theoremstyle{definition}

\newtheorem{remark}[theorem]{Remark}
\newtheorem{problem}[theorem]{Problem}

\definecolor{forestgreen4416044}{RGB}{44,160,44}
\definecolor{crimson}{RGB}{214,39,40}
\usepackage{tikz}

\usepackage{xfrac}
\usepackage[noend]{algpseudocode}
\usepackage{algorithm}
\usepackage{tikz}
\usepackage{bbm}

\renewcommand{\div}{\operatorname{div}}
\renewcommand{\P}{\mathbbm{P}}

\newcommand{\dt}{\Delta}

% save space
\usepackage[skip=8pt]{caption}
\captionsetup{size=small}

\makeatletter
%\crefname{proposition}{Prop.}{Props.}
%\crefname{figure}{Fig.}{Figs.}
%\crefname{table}{Tab.}{Tabs.}
%\crefname{appendix}{App.}{Apps.}
%\crefname{theorem}{Thm.}{Thms.}

\def\section{\@startsection {section}{1}{\z@}{-0.3ex}{0.3ex}{\large\sc\raggedright}}
\def\subsection{\@startsection{subsection}{2}{\z@}{-0.2ex}{0.2ex}{\normalsize\sc\raggedright}}
\def\subsubsection{\@startsection{subsubsection}{3}{\z@}{-0.1ex}{0.1ex}{\normalsize\sc\raggedright}}
\def\paragraph{\@startsection{paragraph}{4}{\z@}{0ex}{-1em}{\normalsize\bf}}
\def\subparagraph{\@startsection{subparagraph}{5}{\z@}{0ex}{-1em}{\normalsize\sc}}
\newcommand{\zerodisplayskips}{%
  \setlength{\abovedisplayskip}{1.8mm}%
  \setlength{\belowdisplayskip}{1.8mm}%
  \setlength{\abovedisplayshortskip}{1.8mm}%
  \setlength{\belowdisplayshortskip}{1.8mm}}
\appto{\normalsize}{\zerodisplayskips}
\appto{\small}{\zerodisplayskips}
\appto{\footnotesize}{\zerodisplayskips}
\makeatother

\setlist[itemize]{itemsep=0.25pt, topsep=0.25pt}
\setlist[enumerate]{itemsep=0.25pt, topsep=0.25pt}

\title{Underdamped Diffusion Bridges \\ with Applications to Sampling}
%\title{Underdamped Diffusion Bridges with \\ Applications to Sampling}

% Authors must not appear in the submitted version. They should be hidden
% as long as the \iclrfinalcopy macro remains commented out below.
% Non-anonymous submissions will be rejected without review.

\author{Denis Blessing\thanks{Equal contribution. \quad $^\dagger$Work partially done at Caltech.}$^{\phantom{\ast},1}$, Julius Berner$^{\ast,2,\dagger}$, Lorenz Richter$^{\ast,3,4}$, \textbf{Gerhard Neumann}$^{1,5}$
\\
$^1$Karlsruhe Institute of Technology, $^2$NVIDIA, $^3$Zuse Institute Berlin, \\ $^4$dida Datenschmiede GmbH, $^5$FZI Research Center for Information Technology
}

% The \author macro works with any number of authors. There are two commands
% used to separate the names and addresses of multiple authors: \And and \AND.
%
% Using \And between authors leaves it to \LaTeX{} to determine where to break
% the lines. Using \AND forces a linebreak at that point. So, if \LaTeX{}
% puts 3 of 4 authors names on the first line, and the last on the second
% line, try using \AND instead of \And before the third author name.

\iclrfinalcopy % Uncomment for camera-ready version, but NOT for submission.
\begin{document}

\maketitle

\begin{abstract}
We provide a general framework for learning diffusion bridges that transport prior to target distributions. It includes existing diffusion models for generative modeling, but also underdamped versions with degenerate diffusion matrices, where the noise only acts in certain dimensions. Extending previous findings, our framework allows to rigorously show that score matching in the underdamped case is indeed equivalent to maximizing a lower bound on the likelihood. Motivated by superior convergence properties and compatibility with sophisticated numerical integration schemes of underdamped stochastic processes, we propose \emph{underdamped diffusion bridges}, where a general density evolution is learned rather than prescribed by a fixed noising process. We apply our method to the challenging task of sampling from unnormalized densities without access to samples from the target distribution. Across a diverse range of sampling problems, our approach demonstrates state-of-the-art performance, notably outperforming alternative methods, while requiring significantly fewer discretization steps and no hyperparameter tuning.
\end{abstract}

\todo{Lorenz: Potentially make equations larger again; omit abbreviations (Figure instead of Fig.)}

\section{Introduction}

In this paper we propose a general diffusion-based framework for sampling from a density
    \begin{equation}
        p_\mathrm{target} = \frac{\rho_\mathrm{target}}{\mathcal{Z}}, \qquad \mathcal{Z} := \int_{\R^d} \rho_\mathrm{target}(x) \, \mathrm dx,
    \end{equation}
where $\rho_\mathrm{target} \in C(\R^d, \R_{\ge 0})$ can be evaluated pointwise, but the normalization constant $\mathcal{Z}$ is typically intractable. This task is of great practical relevance in the natural sciences, e.g., in fields such as molecular dynamics and statistical physics \citep{stoltz2010free,schopmans2025temperature}, but also in Bayesian statistics \citep{gelman2013bayesian}.

Recently, multiple approaches based on diffusion processes have been proposed, where the overall idea is to learn a stochastic process in such a way that it transports an easy prior distribution to the potentially complicated target over an artificial time. Typically, the process is defined as an ordinary It\^{o} diffusion, in particular, demanding non-degenerate noise. In this work, we aim to generalize this setting to diffusion processes with degenerate noise. This is motivated by the following model from statistical physics.

Classical sampling approaches based on stochastic processes have been extensively conducted using some version of the \textit{overdamped Langevin dynamics}
\begin{equation}
\label{eq: simple overdamped Langevin}
    \mathrm d X_s = \nabla \log p_\mathrm{target}(X_s) \, \dd s + \sqrt{2} \, \dd W_s, \quad  X_0 \sim p_{\mathrm{prior}},
\end{equation}
whose stationary distribution is given by $p_\mathrm{target}$ (under some rather mild technical assumptions on the target and on the prior $p_{\mathrm{prior}}$). Furthermore, we can define an extended dynamics by introducing an additional variable, bringing the so-called \textit{underdamped Langevin dynamics}
    \begin{subequations}
\label{eq: simple underdamped Langevin}
\begin{align}
\dd X_s &= Y_s \, \dd s, \quad  X_0 \sim p_{\mathrm{prior}},\\
\dd Y_s &= (\nabla \log p_\mathrm{target}(X_s) -  Y_s ) \, \dd s + \sqrt{2} \, \dd W_s, \quad  Y_0 \sim \mathcal{N}(0, \operatorname{Id}),
\end{align}
\end{subequations} 
where now the stationary distribution is given by $\tau(x,y) := p_\mathrm{target}(x) \mathcal{N}(y; 0, \operatorname{Id})$ (and $\pi(x,y)\coloneqq p_{\mathrm{prior}}(x)\mathcal{N}(y; 0, \operatorname{Id})$ can be defined as an extended prior distribution). Intuitively, the $y$-variable can be interpreted as a velocity, which is coupled to the space variable $x$ via Hamiltonian dynamics.

While both \eqref{eq: simple overdamped Langevin} and \eqref{eq: simple underdamped Langevin} converge to the desired (extended) target distribution after infinite time, their convergence speed can be exceedingly slow, in particular for multimodal targets \citep{eberle2019couplings}.
At the same time, it has been observed numerically that the underdamped version can be significantly faster \citep{stoltz2010free}\lowpriotodo{Lorenz: Better reference could be found.}. This might be attributed to the fact that the Brownian motion is only indirectly coupled to the space variable, leading to smoother paths of $X$ and lower discretization error in numerical integrators (since $\nabla \log p_\mathrm{target}$ only depends on $X$, but not on $Y$). In particular, for smooth and strongly log-concave\footnote{Or, more general, log-concave outside of a region.} targets, the number of steps to obtain KL divergence $\varepsilon$ can be reduced from $\widetilde{\mathcal{O}}(d/\varepsilon^2)$ to $\widetilde{\mathcal{O}}(\sqrt{d}/\varepsilon)$~\citep{ma2021there}.

The idea of \emph{learned} diffusion-based sampling is to reach convergence to multimodal targets after finite time. In particular, for overdamped diffusion models, the convergence rate can be shown to match the one of Langevin dynamics \emph{without} the need for log-concavity assumptions as long as the learned model exhibits sufficiently small approximation error~\citep{chen2022sampling}. In the overdamped setting, this can be readily formulated as adding a control function to the dynamics \eqref{eq: simple overdamped Langevin},
\begin{equation}
\label{eq: simple controlled Langevin}
    \mathrm d X_s = (\nabla \log p_\mathrm{target}(X_s) + u(X_s, s)) \, \dd s + \sqrt{2} \, \dd W_s,
\end{equation}
where the task is to learn $u\in C(\R^d\times [0, T], \R^d)$ as to reach $X_T \sim p_\mathrm{target}$~\citep{richter2023improved,vargas2023transport}; see \Cref{fig: overdamped vs. underdamped controlled vs. uncontrolled} for an illustration. It is now natural to ask the question whether we can use the same control ideas to the (typically better behaved) underdamped dynamics \eqref{eq: simple underdamped Langevin}. Motivated by this guiding question this paper includes the following:

%%%%%%%%%%%%%%%%%%%%%%%%%%%%%%%%%%%%%%%%%%%%%%%%%%%%%%%%%%%%%%%%%%%%%%%
 \begin{figure*}[t!]
 \centering
        \begin{minipage}[t!]{\textwidth}
            \centering 
            \includegraphics[width=0.85\textwidth]{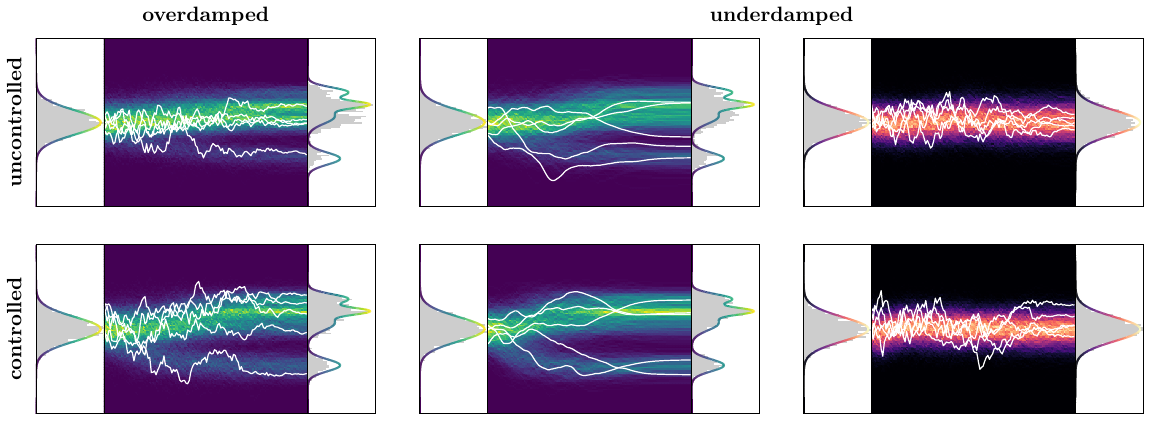}
        \end{minipage}
        \vspace{-0.5em}
        \caption[ ]
        {Illustration of uncontrolled (see~\eqref{eq: simple overdamped Langevin} and~\eqref{eq: simple underdamped Langevin}) and controlled (see~\eqref{eq: simple controlled Langevin} and~\eqref{eq: sde split}) diffusion processes in the overdamped and underdamped regime, transporting the Gaussian prior distribution to the target. For the underdamped case, we show both the positional coordinate (left/blue) as well as the velocity (right/black). While the underdamped version enjoys better convergence guarantees, both uncontrolled diffusions only converge asymptotically. Learning the control, we can achieve convergence in finite time.}
        \label{fig: overdamped vs. underdamped controlled vs. uncontrolled}
        \vspace{-1em}
    \end{figure*}
%%%%%%%%%%%%%%%%%%%%%%%%%%%%%%%%%%%%%%%%%%%%%%%%%%%%%%%%%%%%%%%%%%%%%%%

\vspace*{-0.3em}
\begin{itemize}[leftmargin=*]
    \item \textbf{Controlled diffusions with degenerate noise:} Building on previous work based on path space measures, we generalize diffusion-based sampling to processes with degenerate noise, in particular including controlled underdamped Langevin equations (\Cref{sec: general framework}).
    \item \textbf{Underdamped methods in generative modeling:} This framework can be used to derive and analyze underdamped methods in generative modeling. In particular, we derive the ELBO and variational gap for diffusion bridges where both forward and reverse-time processes are learned.
    \item \textbf{Novel underdamped samplers:} Moreover, our framework culminates in underdamped versions of existing sampling methods and in particular in the novel \textit{underdamped diffusion bridge sampler} (\Cref{sec:underdamped}). In extensive numerical experiments, we can demonstrate significantly improved performance of our method. 
    \item \textbf{Numerical integrators and ablation studies:} We provide careful ablation studies of our improvements, including the benefits of the novel integrators for controlled diffusion bridges as well as end-to-end training of hyperparameters (\Cref{sec: numerical experiments}). %
    We note that the latter eliminates the need for tuning and also significantly improves existing methods in the overdamped regime.
\end{itemize}

\subsection{Related work}
\lowpriotodo[inline]{This can still be significantly improved.}

Many approaches to sampling problems build an augmented target by using a sequence of densities bridging the prior and target distribution and defining forward and backward kernels to approximately transition between the densities, often referred to as \emph{annealed importance sampling} (AIS)~\citep{neal2001annealed}.
For instance, taking uncorrected overdamped Langevin kernels leads to \emph{Unadjusted Langevin Annealing} (ULA)~\citep{Thin:2021,wu2020stochastic}. 
Moreover, \emph{Monte Carlo Diffusion} (MCD) optimizes the extended target distribution in order to minimize the variance of the marginal likelihood estimate~\citep{doucet2022annealed}.
Going one step further, \emph{Controlled Monte Carlo Diffusion} (CMCD)~\citep{vargas2023transport} proposes an objective for directly optimizing the transition kernels to match the annealed density and \emph{Sequential Controlled Langevin Diffusion} (SCLD) \citep{chen2024sequential} connects this attempt with resampling strategies.

Furthermore, methods relying on diffusion processes that prescribe the backward transition kernel have recently been suggested. 
For instance, those include the \emph{Path Integral Sampler} (PIS)~\citep{zhang2021path, vargas2023bayesian, richter2021solving}, \emph{Time-Reversed Diffusion Sampler} (DIS)~\citep{berner2022optimal}, \emph{Diffusion generative flow samplers} (DGFS) \citep{zhang2023diffusion}, \emph{Denoising Diffusion Sampler} (DDS)~\citep{vargas2023denoising}, as well as the \emph{Particle Denoising Diffusion Sampler}~\citep{phillips2024particle}. % combining the latter with SMC components. 
For those diffusion-based samplers, the optimal forward transition corresponds to the score of the current density, which can also be learned via its associated Fokker-Planck equation~\citep{sun2024dynamical} or its representation via the Feynman-Kac formula~\citep{akhound2024iterated}.
Finally, there are methods learning both kernels separately, e.g., the \emph{(Diffusion) Bridge\footnote{We clarify the connection to \emph{Schrödinger bridges} and other \emph{diffusion bridges} in~\Cref{rem:bridge}.} Sampler} (DBS)~\citep{richter2023improved}.

For some of the above methods improved convergence has been observed when 
using underdamped versions or Hamiltonian dynamics, which can be viewed as a form of momentum. In particular, ULA has been extended to \emph{Uncorrected Hamiltonian Annealing} (UHA)~\citep{Geffner:2021,ZhangAIS2021}, MCD has been extended to \emph{Langevin Diffusion Variational Inference} (LDVI) \citep{geffner2022langevin}, and the works on DDS and CMCD also proposed underdamped versions.

Our proposed framework in principle encompasses all these works as special cases (see \Cref{table:categoriazation} in the appendix), noting, however, that each of the previously existing method brings some respective additional details. Moreover, we can easily derive novel algorithms using our framework, ranging from an underdamped version of DIS to an underdamped version of the Diffusion Bridge Sampler (\Cref{app: special underdamped diffusion samplers}). Further, our unifying framework allows us to easily share integrators and training techniques for the different methods. First, we improve tuning for all considered methods by learning hyperparameters end-to-end, overall resulting in better performance (\Cref{fig:avg_params}). 
Second, we advance underdamped methods with our novel integrator (\Cref{fig:avg_integrator} and \Cref{fig:integrator}). Third, we show how to scale DBS to more complex targets by using a suitable parametrization (\Cref{table:drift ablation table} \& \Cref{figure:drift ablation figure}) and divergence-free training objective (\Cref{prop: RND} vs.\@ \Cref{prop:rnd_divergence}). This makes our underdamped version of DBS a state-of-the-art method across a wide range of tasks (\Cref{table:main_results}, \Cref{fig:ess}, \& \Cref{table:elbo_comparison}).

\section{Diffusion bridges with degenerate noise} % and measures on path space
\label{sec: general framework}

In this section, we lay the theoretical foundations for diffusion bridges with degenerate noise, extending the frameworks suggested in \cite{richter2023improved} and \cite{vargas2023transport}. Relating to the example from the introduction, we note that this includes cases where the noise only appears in certain dimensions of the stochastic process and in particular includes underdamped dynamics. We refer to \Cref{sec:notation,sec:ass} for a summary of our notation and assumptions.

 \begin{figure*}[t!]
 \centering
        \begin{minipage}[t!]{\textwidth}
            \centering 
            \includegraphics[width=0.85\textwidth]{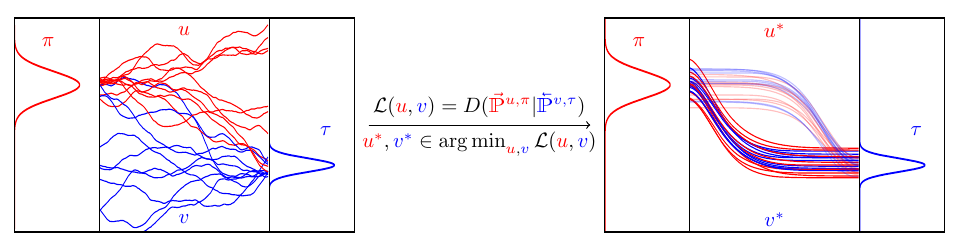}

        \end{minipage}
        \vspace{-0.5em}
        \caption[ ]
        {Illustration of our general framework for learning diffusion bridges with degenerate noise. \textbf{Left:} We consider forward and reverse-time SDEs (see~\eqref{eq: forward process Z} and~\eqref{eq: backward process Z}), starting at the (extended) prior $\pi$ and target $\tau$ and being controlled by $u$ and $v$, respectively. \textbf{Middle:} We learn optimal controls $u^*$ and $v^*$ of the corresponding SDEs \eqref{eq: forward process Z} and \eqref{eq: backward process Z}, respectively, by minimizing a suitable divergence $D$ between the associated path measures $\vec{\P}^{u,\pi}$ and $\cev{\P}^{v,\tau}$ on the SDE trajectories (\Cref{prob: time-reversal}). \textbf{Right:} In general, the optimal controls are not unique and we depict an alternative solution (transparent). However, every solution leads to a perfect time-reversal and, in particular, represents a \emph{diffusion bridge} with the correct marginals $\pi$ at time $s=0$ and $\tau$ at time $s=T$. We note that the trajectories can be smooth since we allow for degenerate diffusion coefficients, where the Brownian motion noise only acts in certain dimensions. 
        }
        \label{fig:bridge_vis}
        \vspace{-1.25em}
    \end{figure*}
    
The general idea of diffusion bridges is to learn a stochastic process that transports a given prior density to the prescribed target. This can be achieved via the concept of time-reversal (see, e.g., \Cref{fig:bridge_vis}). To this end, let us define the forward and reverse-time SDEs
\lowpriotodo[inline]{JB: Strictly, speaking, I think we also need to write $\cev{\dd}Z$ here in the second eq.
\\ D: and also, $\vec{\dd}Z_s$?}
    \begin{align}
    \label{eq: forward process Z}
        \dd Z_s  &= \left(f + \eta \, u \right)(Z_s, s) \, \dd s + \eta(s) \, \vec{\dd} W_s, && Z_0 \sim \pi, \\
    \label{eq: backward process Z}
        \dd Z_s &= \left(f + \eta \, v \right)(Z_s, s) \, \dd s +  \eta(s) \, \cev{\dd} W_s, && Z_T \sim \tau,
    \end{align}
on the state space $\R^D$, where $\vec{\dd} W_s$ and $\cev{\dd} W_s$ denote forward and backward Brownian motion increments (see~\Cref{sec:notation} for details), respectively, both living in dimension $d \le D$. The function $f \in C(\R^D \times [0, T], \R^D)$ is typically fixed and maps to the full space, whereas the control functions $u, v \in C(\R^D \times [0, T], \R^d)$ will be learned as to approach the desired bridge. In our setting, the noise coefficient $\eta \in C([0, T], \R^{D \times d})$ may be degenerate in the sense that it has the shape $\eta = (\mathbf{0}, \sigma)^\top$, where $\mathbf{0} \in \R^{D - d \times d}$ and $\sigma \in C([0, T], \R^{d \times d})$ is assumed to be invertible for each $t \in [0, T]$. Importantly, the (scaled) control functions and the (scaled) Brownian motions operate in the same dimensions. For simplicity, we assume that $f=(\widehat{f},\widetilde{f})^\top$ with $\widetilde{f}\in C(\R^D \times [0, T], \R^{d})$ and $\widehat{f}\in C(\R^D \times [0, T], \R^{D-d})$, where $\widehat{f}$ only depends on the last $d$ coordinates of its $D$-dimensional inputs. Referring to the underdamped Langevin equation \eqref{eq: simple underdamped Langevin}, we may think of $Z = (X, Y)^\top$.

The general idea is to learn the control functions $u$ and $v$ such that the two processes defined in \eqref{eq: forward process Z} and \eqref{eq: backward process Z} are time reversals of each other. This task can be approached via measures on the space of continuous trajectories $C([0, T], \R^D)$, also called \textit{path space} (see~\Cref{sec:notation} for details). To this end, let us denote by $\vec{\P}^{u, \pi}$ the measure of the forward process \eqref{eq: forward process Z} and by $\cev{\P}^{v, \tau}$ the measure of the backward process \eqref{eq: backward process Z}. We may consider the following optimization problem; see also~\Cref{fig:bridge_vis}.

\begin{problem}[Time-reversal]
    \label{prob: time-reversal}
Let $D : \mathcal{P} \times \mathcal{P} \to \R_{\ge 0}$ be a divergence and let $\mathcal{U} \subset C(\R^D \times [0, T], \R^d)$ be the set of admissible controls\footnote{We refer to~\Cref{sec:ass} for assumptions on $\mathcal{U}$. We note that a \emph{divergence} $D$ is zero if and only if both its arguments coincide (in the space of probability measures $\mathcal{P}$ on $C([0,T],\R^D)$; see~\Cref{sec:notation}).}. We aim to identify optimal controls \(u^*, v^*\) such that
\begin{equation}
\label{eq: divergence minimization}
u^*, v^* \in \argmin_{u,v \in \mathcal{U} \times \mathcal{U}} D\big(\vec{\mathbbm{P}}^{u,{\pi}} | \cev{\mathbbm{P}}^{v, {\tau}}\big).
\end{equation}
\end{problem}

Clearly, if we can drive the divergence in \eqref{eq: divergence minimization} to zero, we have solved the time-reversal task and it readily follows for the time marginals\footnote{We denote the marginal of a path space measure $\P$ at time $t\in[0,T]$ by $\P_t$. Similarly, we denote by $\mathbbm{P}_{s|t}$ the conditional distribution of $\mathbbm{P}_{s}$ given $\mathbbm{P}_{t}$; see~\Cref{sec:notation}.} that $\vec{\P}^{u^*,\pi}_T = \tau$ and $\cev{\P}^{v^*,\tau}_0 = \pi$. We note that optimality in \Cref{prob: time-reversal} can be expressed by a local condition on the level of time marginals for any time in between prior and target.
%This can be viewed as an extension of Nelson's relation to the degenerate case.
\begin{lemma}[Nelson's relation]
\label{lem:nelson}
The following statements are equivalent:
\begin{enumerate}[label=(\roman*)]
    \item $\vec{\mathbbm{P}}^{u, {\pi}} = \cev{\mathbbm{P}}^{v, \tau}$.
    \item $u(\cdot, t) - v(\cdot, t) = \eta^\top(t) \nabla \log \vec{\mathbbm{P}}_t^{u, {\pi}}$ for all $t\in (0,T]$ and $\vec{\mathbbm{P}}^{u,\pi}_T = \tau$.
    \item $u(\cdot, t) - v(\cdot, t) = \eta^\top(t) \nabla \log \cev{\mathbbm{P}}_t^{v, {\tau}}$ for all $t\in [0,T)$ and $\cev{\mathbbm{P}}^{v,\tau}_0 = \pi$.
\end{enumerate}
\end{lemma}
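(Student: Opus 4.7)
My plan is to reduce the three-way equivalence to the (extended, degenerate-noise) Anderson--Haussmann--Pardoux time-reversal formula applied once to each of \eqref{eq: forward process Z} and \eqref{eq: backward process Z}, and then read off the Nelson relation by matching drifts. Concretely, under the regularity stipulated in \Cref{sec:ass}, the time reversal of the forward SDE admits the backward-SDE representation
\begin{equation*}
\dd Z_s = \bigl(f + \eta u - \eta\eta^\top \nabla \log \vec{\P}_s^{u,\pi}\bigr)(Z_s,s)\,\dd s + \eta(s)\,\cev{\dd} W_s,\qquad Z_T \sim \vec{\P}_T^{u,\pi},
\end{equation*}
and, analogously, the time reversal of \eqref{eq: backward process Z} admits a forward-SDE representation with drift $f + \eta v + \eta\eta^\top \nabla \log \cev{\P}_s^{v,\tau}$ and initial law $\cev{\P}_0^{v,\tau}$.

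\textbf{(i)$\Leftrightarrow$(ii).} Assuming (i), matching time marginals at $s=T$ gives $\vec{\P}_T^{u,\pi}=\tau$. Since the displayed backward SDE and \eqref{eq: backward process Z} are then both backward-SDE representations of the same path measure $\vec{\P}^{u,\pi}=\cev{\P}^{v,\tau}$, weak uniqueness forces their drifts to agree, i.e.\@
\begin{equation*}
\eta(t)\bigl(u(\cdot,t)-v(\cdot,t)-\eta^\top(t)\nabla \log \vec{\P}_t^{u,\pi}\bigr) = 0,\qquad t\in(0,T].
\end{equation*}
The block structure $\eta = (\mathbf{0},\sigma)^\top$ with $\sigma$ invertible implies $\ker \eta = \{0\}$, which yields (ii). Conversely, substituting (ii) into the displayed backward SDE turns it into \eqref{eq: backward process Z} with terminal law $\tau$; weak uniqueness of that SDE then gives $\cev{\P}^{v,\tau}=\vec{\P}^{u,\pi}$, i.e., (i). The equivalence (i)$\Leftrightarrow$(iii) is completely symmetric: I would apply the time-reversal formula to the backward SDE instead, which replaces $\vec{\P}_t^{u,\pi}$ by $\cev{\P}_t^{v,\tau}$ in the Nelson relation and swaps the range $(0,T]$ for $[0,T)$ and the marginal condition at $T$ for one at $0$.

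\textbf{Main obstacle.} The principal technical point is justifying the time-reversal formula above in the \emph{degenerate} case. Classical Anderson--Haussmann--Pardoux statements assume non-degeneracy so that marginal densities are smooth enough for $\eta\eta^\top \nabla \log \rho_t$ to make sense. Here $\eta\eta^\top$ has rank $d < D$, yet the correction only probes derivatives in the noisy block, $\eta\eta^\top \nabla \log \rho_t = \eta\,\sigma^\top \nabla_y \log \rho_t$, i.e., exactly the directions in which the marginal inherits regularity from the SDE. I would discharge this either by a vanishing-viscosity perturbation $\eta_\varepsilon = (\varepsilon I,\sigma)^\top \downarrow \eta$, passing to the limit in the non-degenerate Anderson formula, or by direct verification at the level of the forward Kolmogorov equation, showing that the two candidate path measures in each implication satisfy the same Fokker--Planck equation with matching initial/terminal data and hence coincide.
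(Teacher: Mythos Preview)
Your proposal is correct and follows essentially the same approach as the paper: the paper's proof merely cites the classical time-reversal literature (Nelson, Anderson, F\"ollmer) together with references covering the degenerate case (Haussmann--Pardoux, Millet et al., Chen et al.), while you spell out the drift-matching argument and the injectivity of $\eta$ that those references furnish. Your identification of the degenerate-noise regularity issue and the proposed workarounds (vanishing viscosity or direct Fokker--Planck verification) are precisely what the cited degenerate-case references address.
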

\lowpriotodo[inline]{Lorenz: We could write $u^*, v^*$ here. JB: I think I would leave it to lighten notation. Lorenz: okay.}
\begin{proof}\vspace{-0.5em}
    The equivalence follows from the classical Nelson relation~\citep{nelson1967dynamical,anderson1982reverse,follmer1986time}, which also holds for degenerate $\eta$; cf.~\cite{haussmann1986time,millet1989integration,chen2022sampling}.
\end{proof}\vspace{-1em}

However, we cannot directly use \Cref{lem:nelson} to approach \Cref{prob: time-reversal} since the marginals $\cev{\mathbbm{P}}_t^{v, {\tau}}$ and $\vec{\mathbbm{P}}_t^{u, {\pi}}$ are typically intractable. Instead, to turn~\eqref{eq: divergence minimization} into a feasible optimization problem, we need a way to explicitly compute divergences between path measures, which (%i.e., some kind of 
in analogy to a likelihood ratio) typically involves the Radon-Nikodym derivative between those measures. A key observation is that this can indeed be achieved for forward and reverse-time processes, as stated in the following proposition.

\begin{proposition}[Likelihood of path measures]
\label{prop: RND}
    Let $\eta^+(s)$ be the pseudoinverse of $\eta(s)$ for each $s \in [0, T]$. Then for $\vec{\mathbbm{P}}^{u,{\pi}}$-almost every $Z\in C([0,T], \R^D)$ it holds that
    \begin{align*}
        \log \frac{\dd \vec{\mathbbm{P}}^{u,{\pi}}}{\dd \cev{\mathbbm{P}}^{v, {\tau}}}(Z) &= \log \frac{\pi(Z_0)}{\tau(Z_T)}  - \frac{1}{2} \int_0^T \|  \eta^+ f+  u  \|^2(Z_s, s) \, \dd s  + \frac{1}{2} \int_0^T \| \eta^+ f + v  \|^2(Z_s, s) \, \dd s  \\
        & \quad +\int_0^T ( \eta^+ f+  u )(Z_s, s) \cdot \eta^+(s) \, \vec{\dd} Z_s - \int_0^T (\eta^+ f + v)(Z_s, s) \cdot \eta^+(s) \, \cev{\dd} Z_s.
    \end{align*}
\end{proposition}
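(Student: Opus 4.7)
The plan is to express both path measures with respect to a common reference by chaining a Girsanov-type change of measure for each of the controls $u$ and $v$ with a time-reversal identity between two uncontrolled references. Concretely, introduce $\vec{\P}^{0,\pi}$, the path measure of the uncontrolled forward SDE $\dd Z_s = f(Z_s,s)\,\dd s + \eta(s)\,\vec{\dd W}_s$ with $Z_0 \sim \pi$, and $\cev{\P}^{0,\tau}$, the analogous uncontrolled reverse-time measure with $Z_T \sim \tau$. Writing
\begin{equation*}
\log \frac{\dd \vec{\P}^{u,\pi}}{\dd \cev{\P}^{v,\tau}} = \log \frac{\dd \vec{\P}^{u,\pi}}{\dd \vec{\P}^{0,\pi}} + \log \frac{\dd \vec{\P}^{0,\pi}}{\dd \cev{\P}^{0,\tau}} - \log \frac{\dd \cev{\P}^{v,\tau}}{\dd \cev{\P}^{0,\tau}}
\end{equation*}
splits the computation into three separately tractable pieces.

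The two outer ratios are handled by Girsanov's theorem. Since $u,v$ take values in $\R^d$ and $\eta^+(s)\eta(s) = I_d$ by the block structure of $\eta$, the drift change $f \mapsto f + \eta u$ poses no degeneracy issue and yields
\begin{equation*}
\log \frac{\dd \vec{\P}^{u,\pi}}{\dd \vec{\P}^{0,\pi}}(Z) = \int_0^T u \cdot \eta^+ (\vec{\dd Z}_s - f\,\dd s) - \tfrac12 \int_0^T \|u\|^2\,\dd s,
\end{equation*}
with a fully analogous identity in terms of $v$ and $\cev{\dd Z}_s$ on the reverse side. These contribute the linear-in-$u$ (resp.\ $v$) stochastic integrals and $\|u\|^2$ (resp.\ $\|v\|^2$) integrands appearing in the target expression.

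The remaining factor $\log \dd\vec{\P}^{0,\pi}/\dd\cev{\P}^{0,\tau}$ is supplied by the Nelson / Haussmann-Pardoux time-reversal identity, extended to the degenerate diffusion setting via the references invoked in \Cref{lem:nelson}. It collapses to the endpoint-density ratio plus the conversion between forward and backward stochastic integrals of $\eta^+ f$,
\begin{equation*}
\log \frac{\dd \vec{\P}^{0,\pi}}{\dd \cev{\P}^{0,\tau}}(Z) = \log \frac{\pi(Z_0)}{\tau(Z_T)} + \int_0^T \eta^+ f \cdot \eta^+ (\vec{\dd Z}_s - \cev{\dd Z}_s).
\end{equation*}
Substituting all three ratios and expanding $\|\eta^+ f + u\|^2 = \|\eta^+ f\|^2 + 2\,\eta^+ f\cdot u + \|u\|^2$ (and the analogue for $v$), the $\eta^+ f \cdot u$ cross-term merges $\|u\|^2$ into $\|\eta^+ f + u\|^2$ and the forward integral of $\eta^+ f$ combines with $u\cdot\eta^+\vec{\dd Z}_s$ into $(\eta^+ f + u)\cdot \eta^+ \vec{\dd Z}_s$, with the same happening on the reverse side. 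This reproduces the claimed formula.

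The genuinely delicate ingredient is the time-reversal identity for the uncontrolled reference pair. In the non-degenerate regime it is a standard consequence of Nelson's relation plus Girsanov; with our block-degenerate $\eta$ one has to appeal to the hypoelliptic / degenerate-diffusion extensions noted after \Cref{lem:nelson} (Haussmann-Pardoux, Millet-Nualart-Sanz) to guarantee that the marginal densities of $\vec{\P}^{0,\pi}$ are smooth enough for their scores to exist, that the pseudoinverse $\eta^+$ legitimately substitutes for a genuine inverse inside all Girsanov expressions, and that the relevant stochastic integrals are well-posed. Once this step is granted, the remainder of the argument reduces to bookkeeping with squares and cross-terms.
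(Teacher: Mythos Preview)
Your decomposition is structurally the same as the paper's---split via a reference pair and apply forward/backward Girsanov on the outer factors---but the \emph{choice of reference} differs, and that is where the substance lies. The paper takes the reference control $w=-\eta^{+}f$, so the reference SDE has zero drift in the noisy block and a deterministic ODE in the noiseless block. For that process, the forward path law conditional on $Z_0$ and the backward path law conditional on $Z_T$ coincide (Brownian motion is its own reversal, and an ODE is trivially reversible), hence disintegration gives the middle factor \emph{exactly} as $\log\tfrac{\pi(Z_0)}{\tau(Z_T)}$ with no stochastic-integral residue. Girsanov against this reference then produces the $\|\eta^{+}f+u\|^{2}$ and $\|\eta^{+}f+v\|^{2}$ terms directly, with no algebraic recombination needed.

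Your reference $w=0$ keeps the full drift $f$, so the middle factor $\log\dd\vec{\P}^{0,\pi}/\dd\cev{\P}^{0,\tau}$ is \emph{not} trivial, and this is where your argument has a gap. Invoking ``the Nelson / Haussmann--Pardoux identity'' does not by itself yield your claimed formula
\[
\log\tfrac{\pi(Z_0)}{\tau(Z_T)}+\int_0^T \eta^{+}f\cdot\eta^{+}(\vec{\dd Z}_s-\cev{\dd Z}_s).
\]
Nelson's relation expresses the time-reversal of $\vec{\P}^{0,\pi}$ as a backward SDE whose drift involves the unknown score $\eta^{\top}\nabla\log\vec{\P}^{0,\pi}_t$ and whose terminal law is $\vec{\P}^{0,\pi}_T$, not $\tau$. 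Comparing \emph{that} to $\cev{\P}^{0,\tau}$ via Girsanov leaves score- and density-dependent terms that must be shown to cancel (e.g.\ through the conversion formula between forward and backward It\^o integrals together with the Fokker--Planck equation). Your formula for the middle term is in fact correct---it is exactly the proposition specialized to $u=v=0$---but you have not supplied an independent derivation, so as written the step is either circular or an unproven lemma. Switching your reference to $w=-\eta^{+}f$ closes this gap cleanly and also spares you the square-expansion bookkeeping at the end.
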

\begin{proof}\vspace{-0.75em}
     Following~\citet[proof of Proposition 2.2]{vargas2023transport},
     %to degenerate diffusion coefficients, 
     the proof applies the Girsanov theorem to the forward and reverse-time processes; see \Cref{app: proofs}.
\end{proof}\vspace{-0.5em}

\lowpriotodo[inline]{JB: previously we said that "$Z$ is a generic process on the path space". We could also think about adding a remark that the RNDs hold almost surely w.r.t.\@ to the measure in the nominator. Lorenz: sounds good.}

We refer to \Cref{prop:rnd_divergence} in the appendix for an alternative version of \Cref{prop: RND}, which, for non-degenerate noise, has been used to define previous diffusion bridge samplers~\citep{richter2023improved}. However, this version relies on a divergence instead of backward stochastic processes, which renders it prohibitive for high dimensions and does not guarantee an ELBO after discretization; see also~\Cref{rem:discr_elbo}.

It is important to highlight that the optimization task in \Cref{prob: time-reversal} allows for infinitely many solutions. For numerical applications one may either accept this non-uniqueness (cf. \cite{richter2023improved}) or add additional constraints, such as regularizers (leading to, e.g., the so-called \textit{Schrödinger bridge}~\citep{vargas2021solving,de2021diffusion}), a prescribed density evolution \citep{vargas2023transport} or a fixed noising process \citep{berner2022optimal}. Those different choices lead to different algorithms, for which we can now readily state corresponding degenerate (and thus underdamped) versions using our framework, see \Cref{app: special underdamped diffusion samplers}.

\paragraph{Divergences and loss functions for sampling.}
%\label{sec: divergences and loss functions}

In order to solve \Cref{prob: time-reversal}, we need to choose a divergence $D$, in turn leading to a loss function $\mathcal{L} : \mathcal{U} \times \mathcal{U} \to \R_{\ge 0}$ via $\mathcal{L}(u, v) := D(\vec{\mathbbm{P}}^{u,{\pi}} | \cev{\mathbbm{P}}^{v, {\tau}})$. A common choice is the \emph{Kullback-Leibler} (KL) divergence, which brings the loss
\begin{equation}
\label{eq: reverse KL}
% \textstyle
    \mathcal{L}_\mathrm{KL}(u, v) := D_\mathrm{KL}\big(\vec{\mathbbm{P}}^{u,{\pi}} | \cev{\mathbbm{P}}^{v, {\tau}}\big) = \E_{Z \sim \vec{\mathbbm{P}}^{u,{\pi}}}\left[ \log \frac{\dd \vec{\mathbbm{P}}^{u,{\pi}}}{\dd \cev{\mathbbm{P}}^{v, {\tau}}}(Z) \right].
\end{equation}
While we will focus on the KL divergence in our experiments, we mention that our framework can be applied to arbitrary divergences. In particular, one can use divergences that allow for off-policy training and improved mode exploration, such as the log-variance divergence~\citep{nusken2021solving,richter2020vargrad}, see \Cref{rem:log-variance}.

\lowpriotodo[inline,caption={}]{
Lorenz: This subsection depends a bit on potential numerical LV results.
\begin{itemize}
    \item problem of mode-collapse; other divergences allow for off-policy training
\end{itemize}
}

\subsection{Implications for Generative modeling: The evidence lower bound}
\label{sec:elbo}

Contrary to the sampling setting described above, generative modeling typically assumes that one has access to samples $X \sim p_\mathrm{target}$, but cannot evaluate the (unnormalized) density. In this section we show how our general setup from the previous section can also be applied in this scenario. For instance, it readily brings an underdamped version of stochastic bridges~\citep{chen2021likelihood} and serves as a theoretical foundation for underdamped diffusion models stated in \cite{dockhorn2021score}.

To this end, we may approach \Cref{prob: time-reversal} with the forward\footnote{While we optimize the measures in both arguments of the KL divergence, the measure $\vec{\mathbbm{P}}^{u,{\pi}}$, corresponding to the generative process, is in the second component, which is typically referred to as \enquote{forward} KL divergence.} KL divergence 
\begin{equation}
\label{eq:forward_kl}
% \textstyle
    D_\mathrm{KL}(\vec{\mathbbm{P}}^{v, {\tau}} | \cev{\mathbbm{P}}^{u,{\pi}} ) = \E_{Z \sim \vec{\mathbbm{P}}^{v, {\tau}}}\left[ \log \frac{\dd \vec{\mathbbm{P}}^{v, {\tau}}}{\dd \cev{\mathbbm{P}}^{u,{\pi}}}(Z) \right].
\end{equation}
For the sake of notation, we have reversed time, which can be viewed as interchanging $\tau$ and $\pi$. Since the process corresponding to $\vec{\mathbbm{P}}^{v,{\tau}}$ starts at the target measure $\tau$, we indeed require samples from this measure to compute the divergence in~\eqref{eq:forward_kl}. At the same time, looking at \Cref{prop: RND}, we realize that the divergence cannot be computed directly, since $\tau$ cannot be evaluated. A workaround is to instead consider an evidence lower bound (ELBO) (or, equivalently, a lower bound on the log-likelihood).
In our setting, we have the following decomposition.
\begin{lemma}[ELBO for generative modeling]
\label{lem:elbo}
It holds that
\begin{equation}
\label{eq:elbo}
    \underbrace{\E_{Z_0 \sim \tau}[\log \cev{\mathbbm{P}}^{u,\pi}_0(Z_0) ]}_{\text{evidence /  log-likelihood}} = \underbrace{D_\mathrm{KL}(\vec{\mathbbm{P}}^{v, {\tau}} | \vec{\mathbbm{P}}^{\widetilde{v}, {\tau}})}_{\text{variational gap}} + \underbrace{\E_{Z_0 \sim \tau}\left[\log \tau(Z_0) \right] - D_\mathrm{KL}(\vec{\mathbbm{P}}^{v, {\tau}} | \cev{\mathbbm{P}}^{u,{\pi}} ),}_{\text{ELBO}}
\end{equation}
where $\widetilde{v}(\cdot, t) - u(\cdot, t) = \eta^\top(t) \nabla \log \cev{\mathbbm{P}}_t^{u, {\pi}}$.
\end{lemma}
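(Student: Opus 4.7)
My plan is to prove the identity by combining the chain rule of the KL divergence on path space with Nelson's relation (Lemma \ref{lem:nelson}) to identify the conditional kernels of $\cev{\mathbbm{P}}^{u,\pi}$ with those of an auxiliary forward measure. The core idea is that the drift perturbation $\widetilde v$ is exactly the one that, via Nelson's relation, turns the reverse-time process $\cev{\mathbbm{P}}^{u,\pi}$ into the forward-time process $\vec{\mathbbm{P}}^{\widetilde v, \cev{\mathbbm{P}}_0^{u,\pi}}$. Since the transition kernels of an SDE depend only on the drift and diffusion coefficients (not on the initial law), the conditional path law of $\cev{\mathbbm{P}}^{u,\pi}$ given $Z_0$ coincides with the conditional path law of $\vec{\mathbbm{P}}^{\widetilde v, \tau}$ given $Z_0$.

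Concretely, I would first disintegrate both $\vec{\mathbbm{P}}^{v,\tau}$ and $\cev{\mathbbm{P}}^{u,\pi}$ along the marginal at time $0$ and apply the standard chain rule
\begin{equation*}
D_{\mathrm{KL}}\bigl(\vec{\mathbbm{P}}^{v,\tau}\,\big|\,\cev{\mathbbm{P}}^{u,\pi}\bigr) = D_{\mathrm{KL}}\bigl(\tau\,\big|\,\cev{\mathbbm{P}}^{u,\pi}_0\bigr) + \E_{Z_0 \sim \tau}\!\left[D_{\mathrm{KL}}\bigl(\vec{\mathbbm{P}}^{v,\tau}_{\cdot|0}\,\big|\,\cev{\mathbbm{P}}^{u,\pi}_{\cdot|0}\bigr)\right],
\end{equation*}
using that $\vec{\mathbbm{P}}^{v,\tau}_0 = \tau$. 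Then I would invoke Lemma \ref{lem:nelson}, applied with the roles of $(u,v)$ replaced by $(\widetilde v, u)$: by construction $\widetilde v - u = \eta^\top \nabla \log \cev{\mathbbm{P}}^{u,\pi}_t$, so $\vec{\mathbbm{P}}^{\widetilde v, \cev{\mathbbm{P}}_0^{u,\pi}} = \cev{\mathbbm{P}}^{u,\pi}$ as measures on path space, and in particular their conditional laws given $Z_0$ agree. Because the conditional law $\vec{\mathbbm{P}}^{\widetilde v,\tau}_{\cdot|0}$ is determined by the SDE coefficients alone, we may rewrite the integrand as $D_{\mathrm{KL}}(\vec{\mathbbm{P}}^{v,\tau}_{\cdot|0}\,|\,\vec{\mathbbm{P}}^{\widetilde v,\tau}_{\cdot|0})$, and since $\vec{\mathbbm{P}}^{v,\tau}$ and $\vec{\mathbbm{P}}^{\widetilde v,\tau}$ share the same time-$0$ marginal $\tau$, the chain rule collapses back to $D_{\mathrm{KL}}(\vec{\mathbbm{P}}^{v,\tau}\,|\,\vec{\mathbbm{P}}^{\widetilde v,\tau})$. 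Expanding $D_{\mathrm{KL}}(\tau\,|\,\cev{\mathbbm{P}}^{u,\pi}_0) = \E_{Z_0 \sim \tau}[\log \tau(Z_0)] - \E_{Z_0 \sim \tau}[\log \cev{\mathbbm{P}}^{u,\pi}_0(Z_0)]$ and solving for the evidence then yields exactly the claimed identity \eqref{eq:elbo}.

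The main technical hurdle is justifying the chain rule on path space together with the fact that conditional kernels of a (possibly degenerate) SDE depend only on the drift and diffusion coefficients, independently of the initial law. For non-degenerate $\eta$ this is routine, but in the underdamped setting it requires that Nelson's relation and the associated path-measure equality hold for degenerate diffusion coefficients; fortunately, this is precisely what Lemma \ref{lem:nelson} provides under our standing assumptions. Once that is granted, the argument is a bookkeeping exercise, and a secondary check is that the Radon--Nikodym derivatives implicit in all KL terms are well-defined, which one could alternatively verify by specializing Proposition \ref{prop: RND} to $u \leftarrow v$, $\pi \leftarrow \tau$ and noting that the pathwise log-ratio factorizes into an initial-time term and a term depending only on the drifts.
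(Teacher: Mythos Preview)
Your proposal is correct and follows essentially the same approach as the paper: both proofs invoke Nelson's relation (Lemma~\ref{lem:nelson}) to identify $\cev{\mathbbm{P}}^{u,\pi}$ with the forward measure $\vec{\mathbbm{P}}^{\widetilde v,\widetilde\tau}$ (where $\widetilde\tau=\cev{\mathbbm{P}}^{u,\pi}_0$), and then apply the chain rule for the KL divergence to split off the marginal term $D_{\mathrm{KL}}(\tau\,|\,\cev{\mathbbm{P}}^{u,\pi}_0)$. The only cosmetic difference is ordering---the paper first substitutes via Nelson and then applies the chain rule, whereas you disintegrate first and then substitute at the level of conditional kernels---but the underlying argument is identical.
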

\begin{proof}\vspace{-0.75em}
   This follows from \Cref{lem:nelson} and the chain rule for KL divergences; see \Cref{app: proofs}.
\end{proof}\vspace{-0.5em}
Crucially, we observe that the ELBO in \Cref{lem:elbo} does not depend on the target $\tau$ anymore as the dependency cancels between the two terms (cf. \Cref{prop: RND}).
Moreover, the variational gap is zero if and only if $v=\widetilde{v}$ almost everywhere, i.e., the path measures are time-reversals conditioned on the same terminal condition due to~\Cref{lem:nelson}. The ELBO is maximized when additionally $\cev{\mathbbm{P}}^{u,\pi}_0$ equals the target measure $\tau$, i.e., if and only if we found a minimizer $(u^*,v^*)$ of~\Cref{prob: time-reversal}. 
In consequence, it provides a viable objective to learn stochastic bridges in an underdamped setting (or, more generally, with degenerate noise coefficients $\eta$) using samples from the target distribution $\tau$.

\lowpriotodo{Lorenz: This might be moved to the appendix.}We note that for non-degenerate coefficients $\eta$, the ELBO from~\Cref{lem:elbo} has already been derived in~\citet{chen2021likelihood}; see also~\cite{richter2023improved,vargas2023transport}. For diffusion models, i.e., $v=0$ and $f$ such that $\vec{\mathbbm{P}}^{0,\tau}_T \approx \pi$, this ELBO reduces to the one derived by~\citet{berner2022optimal,huang2021variational}. In particular, it has been shown that maximizing the ELBO is equivalent to minimizing the \emph{denoising score matching objective} (with a specific weighting of noise scales) typically used in practice.

For general forward and backward processes, allowing for degenerate noise, as stated in~\eqref{eq: forward process Z} and~\eqref{eq: backward process Z}, the derivation of the ELBO is less explored. %Even though 
For (underdamped) diffusion models with degenerate $\eta$, a corresponding \emph{(hybrid) score matching} loss has been suggested and connected to likelihood optimization by~\citet[Appendix B.3]{dockhorn2021score}. In the following proposition, we show that this also follows as a special case from~\Cref{lem:elbo}.
\begin{proposition}[Underdamped score matching maximizes the likelihood]
\label{prop: underdamped score matching}
For the ELBO defined in \eqref{eq:elbo} (setting $v=0$) it holds
\begin{equation*}
    % \textstyle
    \operatorname{ELBO}(u) = -\tfrac{T}{2} \E_{Z\sim \vec{\mathbbm{P}}^{0, \tau}, \,s\sim \mathrm{Unif}([0,T])}
    \left[ \big\| u(Z_s, s)  + \eta^\top(s) \nabla \log \vec{\mathbbm{P}}_{s|0}^{0, \tau} (Z_s | Z_0) \big\|^2 \right] + \text{const.},
\end{equation*}
where the constant does not depend on $u$. 
\end{proposition}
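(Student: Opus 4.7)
The plan is to expand the ELBO via \Cref{lem:elbo}, evaluate the resulting KL divergence with the help of \Cref{prop: RND} and \Cref{lem:nelson}, and finally reduce a marginal score matching objective to the conditional (denoising) form by a standard tower-property argument.

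First, setting $v=0$ in \Cref{lem:elbo} gives $\operatorname{ELBO}(u) = \E_{Z_0\sim\tau}[\log\tau(Z_0)] - D_\mathrm{KL}(\vec{\mathbbm{P}}^{0,\tau} \,|\, \cev{\mathbbm{P}}^{u,\pi})$, so it suffices to isolate the $u$-dependence of the KL term. Applying \Cref{prop: RND} (with the labels $\pi\leftrightarrow\tau$ and $u\leftrightarrow v$ swapped and $v=0$), I would express $\log(\dd\vec{\mathbbm{P}}^{0,\tau}/\dd\cev{\mathbbm{P}}^{u,\pi})$ as a sum of boundary, Lebesgue, and stochastic-integral terms, and then take expectations under $\vec{\mathbbm{P}}^{0,\tau}$. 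The forward It\^o integral reduces to $\E[\int_0^T \|\eta^+ f\|^2\,\dd s]$ because its martingale component vanishes. For the backward stochastic integral, the key observation invokes \Cref{lem:nelson}: the path measure $\vec{\mathbbm{P}}^{0,\tau}$ coincides with a backward SDE having drift $f + \eta u^*$, where $u^*(\cdot,s) := -\eta^\top(s)\nabla\log \vec{\mathbbm{P}}^{0,\tau}_s$. Under this representation the backward integral evaluates to $\E[\int_0^T (\eta^+ f + u)\cdot(\eta^+ f + u^*)\,\dd s]$.

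Collecting the resulting contributions and using the algebraic identity $\tfrac{1}{2}\|u\|^2 - u\cdot u^* = \tfrac{1}{2}\|u - u^*\|^2 - \tfrac{1}{2}\|u^*\|^2$, all quantities that do not depend on $u$---including $\E[\log\tau(Z_0)]$, $\E[\log\pi(Z_T)]$, and the $\|\eta^+ f\|^2$ and $\|u^*\|^2$ integrals---get absorbed into a constant, leaving $\operatorname{ELBO}(u) = -\tfrac{1}{2}\int_0^T \E[\|u(Z_s,s) + \eta^\top(s)\nabla\log\vec{\mathbbm{P}}^{0,\tau}_s(Z_s)\|^2]\,\dd s + C$.

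To finish, I would apply the standard denoising score matching identity: writing $\nabla\log\vec{\mathbbm{P}}^{0,\tau}_s(z) = \E[\nabla\log\vec{\mathbbm{P}}^{0,\tau}_{s|0}(Z_s|Z_0)\mid Z_s = z]$ via Bayes' rule and expanding the square, the cross terms agree by the tower property while the discrepancy between $\|\eta^\top\nabla\log\vec{\mathbbm{P}}^{0,\tau}_s\|^2$ and its conditional counterpart contributes only a $u$-independent constant. Rewriting $\int_0^T (\cdot)\,\dd s = T\cdot\E_{s\sim\mathrm{Unif}([0,T])}[\cdot]$ then yields the stated form. I expect the main obstacle to be the careful treatment of the backward It\^o integral under a degenerate diffusion; this is exactly where \Cref{lem:nelson} saves the day by providing the explicit backward drift $f+\eta u^*$. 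A secondary technical point is that $\eta^+\eta = I_d$ on the noise subspace, so $u$, $u^*$, and $\eta^\top\nabla\log\vec{\mathbbm{P}}^{0,\tau}_{s|0}$ all live in $\R^d$ and the squared norms pair up cleanly.
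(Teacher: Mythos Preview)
Your argument is correct and reaches the same conclusion, but it follows a genuinely different route from the paper. The paper first invokes \Cref{prop:rnd_divergence} (the divergence-based Radon--Nikodym derivative, obtained from \Cref{prop: RND} via the conversion formula in \Cref{lem:conversion}) to write the ELBO in \emph{implicit score matching} form, $-T\,\E\big[\tfrac{1}{2}\|u\|^{2}-\nabla\!\cdot(\eta u)\big]+\text{const.}$, and then applies Stokes' theorem to the conditional density $\vec{\mathbbm{P}}^{0,\tau}_{s|0}$ to identify $\E[\nabla\!\cdot(\eta u)\mid Z_{0}]$ with $-\E[u\cdot\eta^{\top}\nabla\log \vec{\mathbbm{P}}^{0,\tau}_{s|0}\mid Z_{0}]$, completing the square directly into the denoising objective. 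By contrast, you keep the backward stochastic integral from \Cref{prop: RND} and evaluate it using Nelson's relation: recognizing $\vec{\mathbbm{P}}^{0,\tau}$ as a backward SDE with drift $f+\eta u^{*}$, $u^{*}=-\eta^{\top}\nabla\log\vec{\mathbbm{P}}^{0,\tau}_{s}$, kills the backward martingale and lands you on the \emph{explicit/marginal} score matching loss $\tfrac{1}{2}\E\|u-u^{*}\|^{2}$, from which you pass to the conditional form by the standard Vincent identity and the tower property. In effect, the paper trades the backward integral for a divergence (via \Cref{lem:conversion}) and then integrates by parts once more against $p_{s|0}$, whereas you trade it for the time-reversal drift (via \Cref{lem:nelson}) and never see a divergence term. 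Your route is arguably cleaner in avoiding Stokes' theorem and the intermediate implicit objective; the paper's route has the advantage of exhibiting the implicit score matching form explicitly and working conditionally on $Z_{0}$ from the outset, so it goes straight to the denoising loss without the marginal detour.
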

\begin{proof}\vspace{-0.75em}
Following \citet[Appendix A]{huang2021variational}, the proof combines \Cref{prop: RND} with Stokes' theorem; see \Cref{app: proofs}. Note that in our notation $u$ learns the \emph{negative} and \emph{scaled} score.
\end{proof}\vspace{-0.5em}

\lowpriotodo[inline]{Think how to get rid of the "+" here; perhaps do not reverse time.}

\lowpriotodo[inline]{Lorenz: Potentially comment on PDE perspective, i.e. losses can also be derived with the Fokker-Planck equation and the BSDE loss}

\section{Underdamped Diffusion Bridges}
\label{sec:underdamped}

In order to approach~\Cref{prob: time-reversal} and minimize divergences (such as the KL divergence) in practice, we need to numerically approximate the Radon-Nikodym derivative in~\Cref{prop: RND}.
Analogously to~\citet[Proposition E.1]{vargas2023transport}, we can discretize the appearing integrals to show that
\begin{equation}
\label{eq:discrete_rnd}
    \frac{\dd \vec{\mathbbm{P}}^{u,{\pi}}}{\dd \cev{\mathbbm{P}}^{v, {\tau}}}(Z) \approx \frac{\pi(\widehat{Z}_0) \prod_{n=0}^{N-1}\vec{p}_{n+1|n}(\widehat{Z}_{n+1} \big| \widehat{Z}_{n})}{\tau (\widehat{Z}_N) \prod_{n=0}^{N-1}\cev{p}_{n|n+1}(\widehat{Z}_{n} \big| \widehat{Z}_{n+1})},
\end{equation}
where the expressions for the forward and backward transition kernels $\vec{p}$ and $\cev{p}$ depend on the choice of the integrator for $Z$. 

Since we have degenerate diffusion matrices, the backward kernel $\cev{p}$ can exhibit vanishing values, which requires careful choice of the integrators for $Z$. In particular, naively using an Euler-Maruyama scheme as an integrator is typically not well-suited~\citep{leimkuhler2004simulating, neal2012mcmc,doucet2022annealed}; see also~\Cref{fig:avg_integrator}.

We therefore consider alternative integration methods, specifically splitting schemes~\citep{bou2010long, melchionna2007design}, which divide the SDE into simpler parts that can be integrated individually before combining them. Such methods are particularly useful when certain parts can be solved exactly. To formalize splitting schemes, we leverage the Fokker-Planck operator framework, proposing a decomposition of the generator $\mathcal{L}$ for diffusion processes $Z$ of the form~\eqref{eq: forward process Z}.

We can define $\mathcal{L}$ via the (kinetic) Fokker-Planck equation\footnote{We denote by $\operatorname{Tr}$ the trace and by $\nabla$ the derivative operator w.r.t.\@ spatial variable $z$; see~\Cref{sec:notation}.}
\begin{equation}
\label{eq:fokker_planck}
    \partial_t p = \mathcal{L}p \quad \text{with} \quad \mathcal{L}p = - \nabla \cdot \big((f+ \eta u)p) + \tfrac{1}{2}\operatorname{Tr}(\eta \eta^\top \nabla^2 p),
\end{equation}
governing the evolution of the density $p(\cdot,t) = \vec{\P}_t^{u,\pi}$ of the solution to the SDE in~\eqref{eq: forward process Z}. In order to approximate the generator $\mathcal{L}$, we want to assume a suitable structure for $f$ and $\eta$, such that we decompose $\mathcal{L}$ into simpler pieces. For this, we come back to the setting of the underdamped Langevin equation stated in the introduction in equation \eqref{eq: simple underdamped Langevin}. We can readily see that its controlled counterpart can be incorporated in the framework presented in \Cref{sec: general framework} by making the choices $D = 2d$, $Z = (X, Y)^\top$, and
\begin{equation}
\label{eq: underdamped f}
    f(x, y, s) = (y, \widetilde{f}(x, s) - \tfrac{1}{2} \sigma\sigma^{\top}(s) y)^\top, \qquad \eta = (\mathbf{0}, \sigma)^\top
\end{equation}
in \eqref{eq: forward process Z} and \eqref{eq: backward process Z}, where $\mathbf{0} \in \R^{d \times d}$ and $\widetilde{f} \in C(\R^d \times [0, T], \R^d)$ is chosen appropriately. 
Following \cite{monmarche2021high, geffner2022langevin} we split the generator as $\mathcal{L} = \mathcal{L}_{\text{A}} + \mathcal{L}_{\text{B}} + \mathcal{L}_{\text{O}}$ (sometimes referred to as free transport, acceleration, and damping) with
\begin{equation}
     \mathcal{L}_{\text{A}}p = - y \cdot \nabla_x p, \quad \mathcal{L}_{\text{B}}p = - \widetilde{f} \cdot \nabla_y p, \quad \mathcal{L}_{\text{O}}p = - \nabla_y \cdot \big(g p) + \tfrac{1}{2}\operatorname{Tr}(\sigma \sigma^\top \nabla_y^2 p),
\end{equation}
where $g(x,y,s) := - \tfrac{1}{2} \sigma\sigma^{\top}(s)y + \sigma(s) u(x,y,s)$, resulting in 
\begin{align} \label{eq: sde split}
% \textstyle
\begin{bmatrix}
\dd X_s \\
\dd Y_s
\end{bmatrix}
=
\underbrace{\begin{bmatrix}
Y_s\\
0
\end{bmatrix}}_{{\text{A}}}
\dd s 
+
\underbrace{\begin{bmatrix}
0 \\
\widetilde{f}(X_s, s)
\end{bmatrix}}_{{\text{B}}}
\dd s 
+
\underbrace{\begin{bmatrix}
0 \\
\left(- \tfrac{1}{2} \sigma\sigma^{\top}(s) Y_s + \sigma u(Z_s, s)\right) \, \dd s +  \sigma(s) \vec{\dd} W_s
\end{bmatrix}}_{{\text{O}}},
\end{align}
where we use a standard normal for the last $d$ components of the initial and terminal distributions following \cite{geffner2022langevin}, i.e.,
\begin{equation}
\label{eq:pi_tau_underdamped}
    \pi(x,y) = p_{\mathrm{prior}}(x) \mathcal{N}(y; 0, \operatorname{Id}) \quad \text{and} \quad \tau(x,y) = p_{\mathrm{target}}(x) \mathcal{N}(y; 0, \operatorname{Id}).
\end{equation}

According to the Trotter theorem~\citep{trotter1959product} and the Strang splitting formula~\citep{strang1968construction}, the time evolution of the system can be approximated as: 
\begin{equation} \label{eq: trotter} 
e^{\left( \mathcal{L}_{\text{A}} + \mathcal{L}_{\text{B}} + \mathcal{L}_{\text{O}} \right)t} \approx \left[ e^{\mathcal{L}_{\text{A}} \dt}e^{\mathcal{L}_{\text{B}} \dt}e^{\mathcal{L}_{\text{O}} \dt}\right]^N + \mathcal{O}(N \dt^3), 
\end{equation}
where a finite number of time steps of length $\dt$ approximates the system dynamics. For a higher accuracy, symmetric splitting can be used: 
\begin{equation} \label{eq: symmetric trotter} 
e^{\left( \mathcal{L}_{\text{A}} + \mathcal{L}_{\text{B}} + \mathcal{L}_{\text{O}} \right)t} \approx \big[e^{\mathcal{L}_{\text{O}} \frac{\dt}{2}}e^{\mathcal{L}_{\text{B}} \frac{\dt}{2}}e^{\mathcal{L}_{\text{A}} \dt}e^{\mathcal{L}_{\text{B}} \frac{\dt}{2}}e^{\mathcal{L}_{\text{O}} \frac{\dt}{2}}\big]^N + \mathcal{O}(N\dt^2), 
\end{equation}
which reduces the approximation error~\citep{yoshida1990construction}. The optimal composition of terms is generally problem-dependent and has been extensively studied for uncontrolled Langevin dynamics~\citep{monmarche2021high}. For the controlled setting, prior works often use the OBAB ordering \citep{geffner2022langevin,doucet2022score}. In this work, we additionally consider OBABO and BAOAB, which show improved performance (cf.~\Cref{sec: numerical experiments}).

Further details on the integrators for forward and backward kernels $\vec{p}$ and $\cev{p}$ corresponding to these splitting schemes can be found in~\Cref{app:discretization}. We refer to \Cref{alg:underdamped sampler} in the appendix for an overview of our method and to~\Cref{app: computational details} for further details. A few remarks are in order (see also~\Cref{rem:higher_order_langevin} for a note on higher-order Langevin equations).

\begin{remark}[Mass matrix]  
Previous works, such as \citet{Geffner:2021} and \citet{doucet2022annealed}, consider incorporating a mass matrix $ M \in C([0, T], \mathbb{R}^{d \times d})$ into the SDE formulation in~\eqref{eq: sde split} and terminal conditions. For simplicity, we have omitted this consideration in the current section. However, additional details on its inclusion and effects can be found in~\Cref{app:mass_matrix}. Furthermore, we conduct experiments where we learn the mass matrix, as discussed in~\Cref{sec: numerical experiments}.
\end{remark}

\lowpriotodo[inline]{Lorenz: Maybe comment on overdamped limit}

\begin{remark}[Discrete Radon-Nikodym derivative]
\label{rem:discr_elbo}
We note that our discretization of the Radon-Nikodym derivative in~\eqref{eq:discrete_rnd} corresponds to a (discrete-time) Radon-Nikodym derivative between the joint distributions of the discretized forward and backward processes. 
In particular, we can analogously define a KL divergence which allows us to obtain a (guaranteed) lower bound for the log-normalization constant $\log \mathcal{Z}$ in discrete time. On the other hand, this is not 
the case if we discretize the divergence-based Radon-Nikodym derivative in~\Cref{prop:rnd_divergence} as done in previous work~\citep{berner2022optimal,richter2023improved}. 
Moreover, we can still optimize the divergences between the corresponding discrete path measures as presented in~\eqref{eq: reverse KL} and~\Cref{sec:further_exp}.
Finally, we note that the discretized Radon-Nikodym derivative does not depend on $\widetilde{f}$ for the integrators considered in~\Cref{app:discretization}. We thus choose $\widetilde{f}$ to have a good initialization for the process $Z$, see~\Cref{app: computational details}.
\end{remark}

\begin{remark}[Properties of the score]
Since the target density $p_{\mathrm{target}}$ in~\eqref{eq:pi_tau_underdamped} only appears in the coordinates where $\eta$ vanishes, Nelson's identity in~\Cref{lem:nelson} shows that 
%in the optimum it holds that
\begin{equation}
\label{eq:nelson_underdamped}
    u^*(x, y, T) - v^*(x, y, T) = \sigma^\top(T) \nabla_y \log \mathcal{N}(y; 0, \operatorname{Id}),
\end{equation}
i.e., the optimal controls $u^*$ and $v^*$ do not depend on the score of the target distribution, $\nabla_x \log p_\mathrm{target}$, at terminal time $T$, as in the case of corresponding overdamped versions. This can lead to numerical benefits in cases where this score would attain large values, e.g., when $p_\mathrm{target}$ is essentially supported on a lower dimensional manifold \citep{dockhorn2021score,chen2022sampling}.
\end{remark}

%%%%%%%%%%%%%%%%%%%%%%%%%%%%%%%%%%%%%%%%%%%%%%%%%%%%%%%%%%%%%%%%%%%%%%%%%%%%%%%%%%%%%%%%
\begin{table*}[t!]
% \vskip -0.15in
    \caption{Results for benchmark problems of various dimensions $d$, averaged across four runs. Evaluation criteria include importance-weighted errors for estimating the log-normalizing constant, $\Delta \log \mathcal{Z}$, effective sample size ESS, Sinkhorn distance $\mathcal{W}^{\gamma}_{2}$, and a lower bound (LB) on $\log \mathcal{Z}$; see~\Cref{sec:eval_criteria} for details on the metrics. The best results are highlighted in bold. Arrows ($\uparrow$, $\downarrow$) indicate whether higher or lower values are preferable. \textcolor{blue!50}{Blue} shading indicates that the method uses the underdamped Langevin equation.
    }
    \vspace{-0.2cm}
\label{table:main_results}
\begin{center}
\begin{small}
\resizebox{\textwidth}{!}{%
\renewcommand{\arraystretch}{1.2}
\begin{tabular}{l|rrr|rr|rr}
\toprule
& \multicolumn{3}{c|}{\textbf{Funnel} ($d=10$)} 
& \multicolumn{2}{c|}{\textbf{ManyWell ($d=50$)}} 
& \multicolumn{2}{c}{\textbf{LGCP ($d=1600$)}} 
\\ 
\midrule
\textbf{Method}
% & {$\text{ELBO}$ $\uparrow$}
& {$\Delta \log \mathcal{Z}$ $\downarrow$}
& {$\text{ESS}$ $\uparrow$}
& {$\mathcal{W}^{\gamma}_{2}$ $\downarrow$}
% & {$\text{ELBO}$ $\uparrow$}
& {$\Delta \log \mathcal{Z}$ $\downarrow$}
& {$\text{ESS}$ $\uparrow$}
& {$\log \mathcal{Z}$ (LB) $\uparrow$}
& {$\text{ESS} \times 10$ $\uparrow$}
\\
\midrule
%%%%%%%%%%%%%%%%%%%%%%%%%%%%%%%%%%%%%%%%%%%%%%%%%%%%%%%%%%%%%%%
 \multirow{2}{*}{ULA} 
% & $-1.126 \scriptstyle \pm 0.002$
& $0.310 \scriptstyle \pm 0.020$
& $0.140 \scriptstyle \pm 0.003$
& $169.859 \scriptstyle \pm 0.195$
% & $41.638 \scriptstyle \pm 0.006$
& $0.016 \scriptstyle \pm 0.003$
& $0.179 \scriptstyle \pm 0.008$
& $482.024 \scriptstyle \pm 0.009$
& $0.029 \scriptstyle \pm 0.003$
\\
%%%%%%%%%%%%%%%%%%%%%%%%%%%%%%%%%%%%%%%%%%%%%%%%%%%%%%%%%%%%%%%
% & \cellcolor{blue!5} $-0.809 \scriptstyle \pm 0.003$
& \cellcolor{blue!5} $0.130 \scriptstyle \pm 0.021$
& \cellcolor{blue!5} $0.151 \scriptstyle \pm 0.016$
& \cellcolor{blue!5} $159.212 \scriptstyle \pm 0.093$
% & \cellcolor{blue!5} $42.301 \scriptstyle \pm 0.004$
& \cellcolor{blue!5} $0.009 \scriptstyle \pm 0.002$
& \cellcolor{blue!5} $0.418 \scriptstyle \pm 0.002$
& \cellcolor{blue!5} $484.087 \scriptstyle \pm 0.063$
& \cellcolor{blue!5} $0.030 \scriptstyle \pm 0.004$
\\
\midrule
%%%%%%%%%%%%%%%%%%%%%%%%%%%%%%%%%%%%%%%%%%%%%%%%%%%%%%%%%%%%%%%
  \multirow{2}{*}{MCD} 
% & $-0.724 \scriptstyle \pm 0.003$
& $0.173 \scriptstyle \pm 0.046$
& $0.206 \scriptstyle \pm 0.026$
& $164.967 \scriptstyle \pm 0.334$
% & $42.656 \scriptstyle \pm 0.002$
& $0.005 \scriptstyle \pm 0.002$
& $0.737 \scriptstyle \pm 0.002$
& $483.137 \scriptstyle \pm 0.368$
& $0.031 \scriptstyle \pm 0.004$
\\
%%%%%%%%%%%%%%%%%%%%%%%%%%%%%%%%%%%%%%%%%%%%%%%%%%%%%%%%%%%%%%%
% & \cellcolor{blue!5} $-0.399 \scriptstyle \pm 0.005$
& \cellcolor{blue!5} $0.088 \scriptstyle \pm 0.008$
& \cellcolor{blue!5} $0.375 \scriptstyle \pm 0.016$
& \cellcolor{blue!5} $144.753 \scriptstyle \pm 0.153$
% & \cellcolor{blue!5} $42.744 \scriptstyle \pm 0.009$
& \cellcolor{blue!5} $0.005 \scriptstyle \pm 0.000$
& \cellcolor{blue!5} $0.866 \scriptstyle \pm 0.012$
& \cellcolor{blue!5} $484.933 \scriptstyle \pm 0.298$
& \cellcolor{blue!5} $0.032 \scriptstyle \pm 0.006$
\\
\midrule
%%%%%%%%%%%%%%%%%%%%%%%%%%%%%%%%%%%%%%%%%%%%%%%%%%%%%%%%%%%%%%%
 \multirow{2}{*}{CMCD} 
% & $-0.211 \scriptstyle \pm 0.002$
& $0.023 \scriptstyle \pm 0.003$
& $0.567 \scriptstyle \pm 0.023$
& $104.644 \scriptstyle \pm 0.710$
% & $42.745 \scriptstyle \pm 0.001$
& $\mathbf{0.004 \scriptstyle \pm 0.002}$
& $0.859 \scriptstyle \pm 0.001$
& $483.875 \scriptstyle \pm 0.275$
& $0.032 \scriptstyle \pm 0.004$
\\
%%%%%%%%%%%%%%%%%%%%%%%%%%%%%%%%%%%%%%%%%%%%%%%%%%%%%%%%%%%%%%%
% & \cellcolor{blue!5} $-0.754 \scriptstyle \pm 0.502$
& \cellcolor{blue!5} $0.268 \scriptstyle \pm 0.198$
& \cellcolor{blue!5} $0.369 \scriptstyle \pm 0.186$
& \cellcolor{blue!5} $148.990 \scriptstyle \pm 19.81$
% & \cellcolor{blue!5} $42.513 \scriptstyle \pm 0.037$
& \cellcolor{blue!5} $0.008 \scriptstyle \pm 0.003$
& \cellcolor{blue!5} $0.585 \scriptstyle \pm 0.034$
& \cellcolor{blue!5} $483.535 \scriptstyle \pm 0.232$
& \cellcolor{blue!5} $0.028 \scriptstyle \pm 0.004$
\\
\midrule
%%%%%%%%%%%%%%%%%%%%%%%%%%%%%%%%%%%%%%%%%%%%%%%%%%%%%%%%%%%%%%%
  \multirow{2}{*}{DIS} 
% & $-0.296 \scriptstyle \pm 0.002$
& $0.047 \scriptstyle \pm 0.003$
& $0.498 \scriptstyle \pm 0.021$
& $107.458 \scriptstyle \pm 0.826$
% & $42.704 \scriptstyle \pm 0.002$
& $0.006 \scriptstyle \pm 0.002$
& $0.798 \scriptstyle \pm 0.002$
& $405.686 \scriptstyle \pm 4.019$
& $0.015 \scriptstyle \pm 0.003$
\\
%%%%%%%%%%%%%%%%%%%%%%%%%%%%%%%%%%%%%%%%%%%%%%%%%%%%%%%%%%%%%%%
% & \cellcolor{blue!5} $-0.257 \scriptstyle \pm 0.002$
& \cellcolor{blue!5} $0.048 \scriptstyle \pm 0.009$
& \cellcolor{blue!5} $0.550 \scriptstyle \pm 0.039$
& \cellcolor{blue!5} $114.580 \scriptstyle \pm 0.457$
% & \cellcolor{blue!5} $42.743 \scriptstyle \pm 0.002$
& \cellcolor{blue!5} $0.005 \scriptstyle \pm 0.000$
& \cellcolor{blue!5} $0.856 \scriptstyle \pm 0.002$
& \cellcolor{blue!5} diverged
& \cellcolor{blue!5} diverged
\\
\midrule
%%%%%%%%%%%%%%%%%%%%%%%%%%%%%%%%%%%%%%%%%%%%%%%%%%%%%%%%%%%%%%%
 \multirow{2}{*}{DBS} 
% & $-0.187 \scriptstyle \pm 0.003$
& $0.021 \scriptstyle \pm 0.003$
& $0.603 \scriptstyle \pm 0.014$
& $102.653 \scriptstyle \pm 0.586$
% & $42.760 \scriptstyle \pm 0.003$
& $0.005 \scriptstyle \pm 0.001$
& $0.887 \scriptstyle \pm 0.004$
& $486.376 \scriptstyle \pm 1.020$
& $0.032 \scriptstyle \pm 0.002$
\\
%%%%%%%%%%%%%%%%%%%%%%%%%%%%%%%%%%%%%%%%%%%%%%%%%%%%%%%%%%%%%%%
% & \cellcolor{blue!5} $\mathbf{-0.103 \scriptstyle \pm 0.003}$
& \cellcolor{blue!5} $\mathbf{0.010 \scriptstyle \pm 0.001}$
& \cellcolor{blue!5} $\mathbf{0.779 \scriptstyle \pm 0.009}$
& \cellcolor{blue!5} $\mathbf{101.418 \scriptstyle \pm 0.425}$
% & \cellcolor{blue!5} $\mathbf{42.763 \scriptstyle \pm 0.002}$
& \cellcolor{blue!5} $0.005 \scriptstyle \pm 0.000$
& \cellcolor{blue!5} $\mathbf{0.898 \scriptstyle \pm 0.002}$
& \cellcolor{blue!5}  $\mathbf{497.545 \scriptstyle \pm 0.183}$
& \cellcolor{blue!5}  $\mathbf{0.174 \scriptstyle \pm 0.017}$
\\
%%%%%%%%%%%%%%%%%%%%%%%%%%%%%%%%%%%%%%%%%%%%%%%%%%%%%%%%%%%%%%%
\bottomrule
\end{tabular}
}
\end{small}
\end{center}
\vspace{-1em}
\end{table*}

%%%%%%%%%%%%%%%%%%%%%%%%%%%%%%%%%%%%%%%%%%%%%%%%%%%%%%%%%%%%%%%%%%%%%%%%%%%%%

\lowpriotodo[inline,caption={}]{
To be precise, we get the forward and reverse-time processes
\begin{subequations}
\label{eq:underdamped_fwd}
\begin{align}
\dd X_s &= M^{-1}Y_s \, \dd s, && X_0 \sim p_{\mathrm{prior}},\\
\dd Y_s  &= \left(\widetilde{f}(Z_s, s) - \tfrac{1}{2} \sigma\sigma^{\top}(s) Y_s + \sigma M^{1/2} u(Z_s, s) \right) \, \dd s + \sigma(s) M^{1/2}\, \vec{\dd} W_s, && Y_0 \sim \mathcal{N}(0,M),
\end{align}
\end{subequations}
and 
\begin{subequations}
\begin{align}
\dd X_s &= Y_s \, \dd s, && X_T \sim p_{\mathrm{target}},\\
\dd Y_s &= \left(\widetilde{f}(Z_s, s) - \tfrac{1}{2} \sigma\sigma^{\top}(s) Y_s - \sigma M^{1/2} v(Z_s, s) \right) \, \dd s +  \sigma(s) M^{1/2}\, \cev{\dd} W_s,  && Y_T \sim \mathcal{N}(0,M).
\end{align}
\end{subequations}}

\section{Numerical experiments} \label{sec: numerical experiments}
In this section, we present a comparative analysis of underdamped approaches against their overdamped counterparts. We consider five diffusion-based sampling methods, specifically, \textit{Unadjusted Langevin Annealing} (ULA) \citep{Thin:2021, Geffner:2021}, \textit{Monte Carlo Diffusions} (MCD) \citep{doucet2022annealed, geffner2022langevin}, \textit{Controlled Monte Carlo Diffusions} (CMCD) \citep{vargas2023transport}, \textit{Time-Reversed Diffusion Sampler} (DIS)\footnote{It is worth noting that we do not separately consider the Denoising Diffusion Sampler (DDS) \citep{vargas2023denoising}, as it can be viewed as a special case of DIS (see Appendix A.10.1 in \cite{berner2022optimal}).} \citep{berner2022optimal}, and \textit{Diffusion Bridge Sampler} (DBS) \citep{richter2023improved}. We stress that the underdamped versions of DIS and DBS have not been considered before.

To ensure a fair comparison, all experiments are conducted under identical settings. Our evaluation methodology adheres to the protocol suggested in \cite{blessing2024beyond}. For a comprehensive overview of the experimental setup and additional details, we refer to \Cref{app: computational details}. Moreover, we provide further numerical results in \Cref{sec:further_exp}, including the comparison to competing state-of-the-art methods. 
%\todo{Lorenz: Need to add repo.}
The code is publicly available\footnote{\url{https://github.com/DenisBless/UnderdampedDiffusionBridges}}.

\subsection{Benchmark Problems}

We evaluate the different methods on various real-world and synthetic benchmark examples.

\paragraph{Real-world benchmark problems.} We consider seven real-world benchmark problems: Four Bayesian inference tasks, namely \textit{Credit} ($d=25$), \textit{Cancer} ($d=31$), \textit{Ionosphere} ($d=35$), and \textit{Sonar} ($d=61$). Additionally, we choose \textit{Seeds} ($d=26$) and  \textit{Brownian} ($d=32$), where the goal is to perform inference over the parameters of a random effect regression model, and the time discretization of a Brownian motion, respectively. Lastly, we consider \textit{LGCP} ($d=1600$), a high-dimensional Log Gaussian Cox process \citep{moller1998log}. 

\paragraph{Synthetic benchmark problems.} We consider two synthetic benchmark problems in this work: The challenging \textit{Funnel} distribution ($d=10$) introduced by \cite{neal2003slice}, whose shape resembles a funnel, where one part is tight and highly concentrated, while the other is spread out over a wide region. Moreover, we choose the \textit{ManyWell} ($d=50$) target, a highly multi-modal distribution with $2^5=32$ modes.

%%%%%%%%%%%%%%%%%%%%%%%%%%%%%%%%%%%%%%%%%%%%%%%%%%%%%%%%%%%%%%%%%%%%%%%
\begin{figure*}[t!]
        \centering
        \begin{minipage}[t!]{\textwidth}
            \centering
            \begin{minipage}[t!]{0.24\textwidth}
            \includegraphics[width=\textwidth]{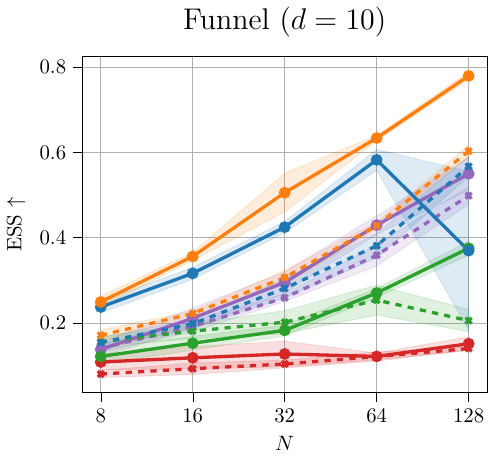}
            % \subcaption[]{\scriptsize{Credit}}
            \end{minipage}
            \begin{minipage}[t!]{0.24\textwidth}
            \includegraphics[width=\textwidth]{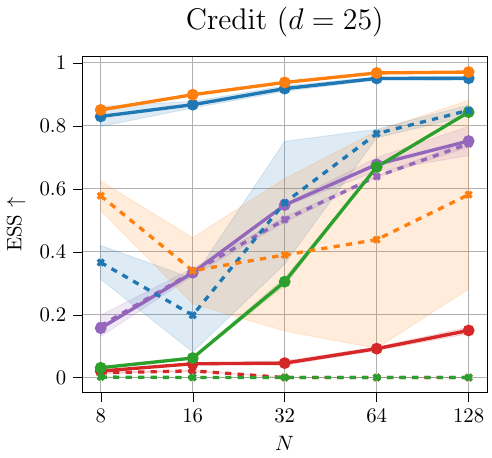}
            % \subcaption[]{\scriptsize{Credit}}
            \end{minipage}
            \begin{minipage}[t!]{0.24\textwidth}
            \includegraphics[width=\textwidth]{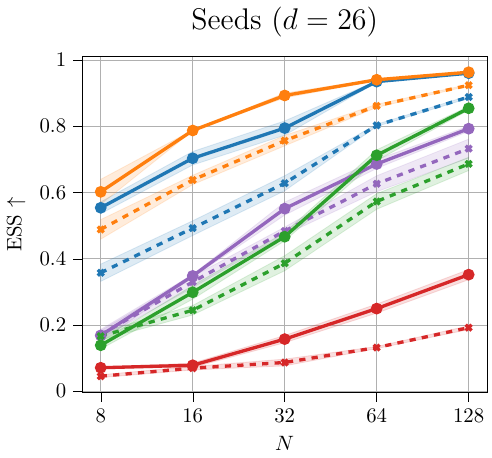}
            % \subcaption[]{\scriptsize{Seeds}}
            \end{minipage}
            \begin{minipage}[t!]{0.24\textwidth}
            \includegraphics[width=\textwidth]{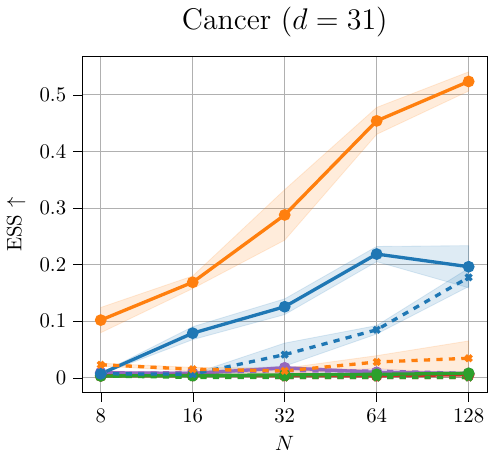}
            % \subcaption[]{\scriptsize{Cancer}}
            \end{minipage}
                        \centering
            \begin{minipage}[t!]{0.24\textwidth}
            \includegraphics[width=\textwidth]{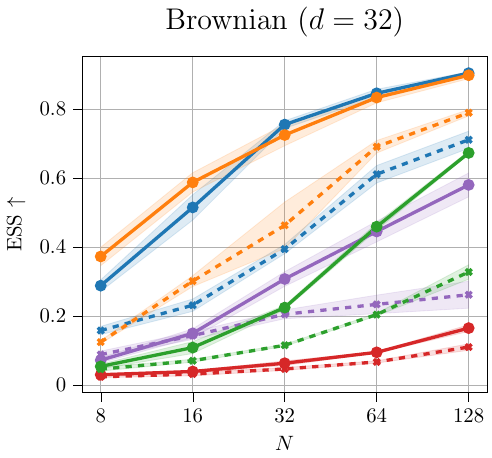}
            % \subcaption[]{\scriptsize{Brownian}}
            \end{minipage}
            \begin{minipage}[t!]{0.24\textwidth}
            \includegraphics[width=\textwidth]{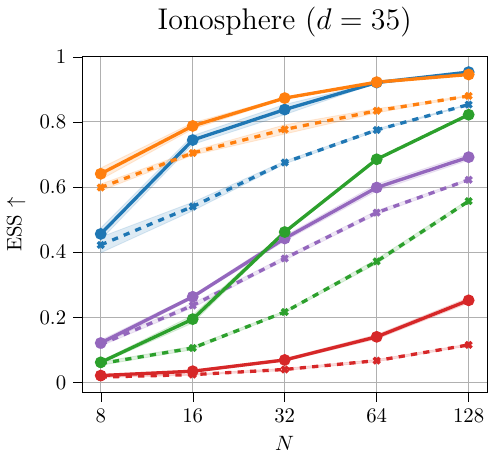}
            % \subcaption[]{\scriptsize{Ionosphere}}
            \end{minipage}
             \begin{minipage}[t!]{0.24\textwidth}
            \includegraphics[width=\textwidth]{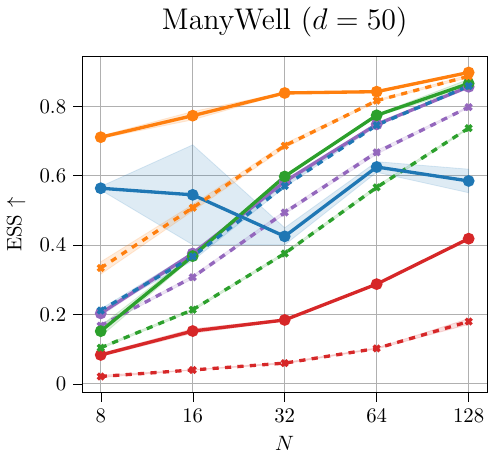}
            % \subcaption[]{\scriptsize{Credit}}
            \end{minipage}
            \begin{minipage}[t!]{0.24\textwidth}
            \includegraphics[width=\textwidth]{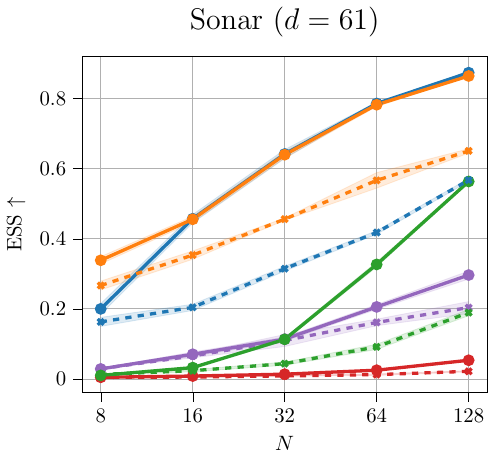}
            % \subcaption[]{\scriptsize{Sonar}}
            \end{minipage}
            % \begin{minipage}[t!]{0.3\textwidth}
            % \includegraphics[width=\textwidth]{figures/results/ess/cancer_ess.pdf}
            % \subcaption[]{\scriptsize{Cancer}}
            % \end{minipage}
                \begin{minipage}[t!]{0.9\textwidth}
            \centering
        \includegraphics[width=0.8\textwidth]{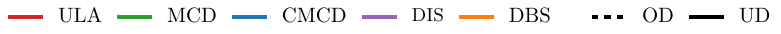}
            \end{minipage}
        \end{minipage}
        \vspace{-0.25em}
        \caption[ ]
        %Performance comparison between the EM and IMC algorithm for an increasing number of components.
        {
        Effective sample size (ESS) for real-world benchmark problems of various dimensions $d$, averaged across four seeds. Here, $N$ refers to the number of discretization steps. Dashed and solid lines indicate the usage of the overdamped (OD) and underdamped (UD) Langevin dynamics, respectively.
        }
        \vspace{-0.5em}
        \label{fig:ess}
    \end{figure*}
%%%%%%%%%%%%%%%%%%%%%%%%%%%%%%%%%%%%%%%%%%%%%%%%%%%%%%%%%%%%%%%%%%%%%%%

\subsection{Results}
\paragraph{Underdamped vs.\@ overdamped.} Our analysis of both real-world and synthetic benchmark problems reveals consistent improvements when using underdamped Langevin dynamics compared to its overdamped counterpart, as illustrated in Table \ref{table:main_results} and Figure \ref{fig:ess}. The underdamped diffusion bridge sampler (DBS) demonstrates particularly impressive performance, consistently outperforming alternative methods. Remarkably, even with as few as $N=8$ discretization steps, it often surpasses competing methods that utilize significantly more steps.

%%%%%%%%%%%%%%%%%%%%%%%%%%%%%%%%%%%%%%%%%%%%%%%%%%%%%%%%%%%%%%%%%%%%%%%
\begin{figure*}[t!]
        \centering
        \begin{minipage}[h!]{.75\textwidth}
        \centering
        \includegraphics[width=0.9\textwidth]{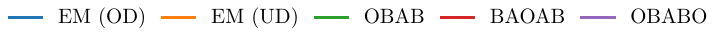}
        \end{minipage}
        \begin{minipage}[h!]{.48\textwidth}
        \centering
        \includegraphics[width=0.9\textwidth]{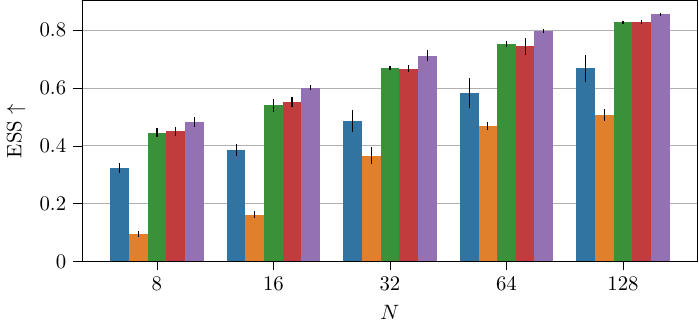}
        \end{minipage}
        \begin{minipage}[h!]{.48\textwidth}
        \centering
        \includegraphics[width=0.9\textwidth]{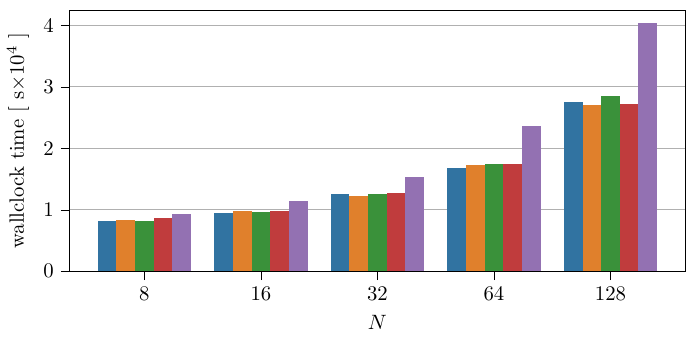}
        \end{minipage}
        \vspace{-0.5em}
        \caption[ ]
        {        
 Effective sample size (ESS) and wallclock time (in seconds) of the diffusion bridge sampler (DBS) for different integration schemes, averaged across multiple benchmark problems and four seeds. 
 Integration schemes include Euler-Maruyama (EM) for over- (OD) and underdamped (UD) Langevin dynamics and various splitting schemes (OBAB, BAOAB, OBABO).
        }
        \label{fig:avg_integrator}
        \vspace{-1em}
    \end{figure*}
%%%%%%%%%%%%%%%%%%%%%%%%%%%%%%%%%%%%%%%%%%%%%%%%%%%%%%%%%%%%%%%%%%%%%%%

\paragraph{Numerical integration schemes.} We further examine various numerical schemes for the diffusion bridge sampler (DBS) introduced in Section \ref{sec:underdamped}. Results and a discussion for other methods can be found in \Cref{sec:further_exp}. To provide a concise overview, we present the average effective sample size (ESS) and wallclock time across all tasks, excluding LGCP, in \Cref{fig:avg_integrator}. Detailed results for individual benchmarks can be found in \Cref{sec:further_exp}. While is is known that classical Euler methods are not well-suited for underdamped dynamics \citep{leimkuhler2004simulating}, our findings indicate that both OBAB and BAOAB schemes offer significant improvements %over the Euler-Maruyama method 
without incurring additional computational costs. The OBABO scheme yields the best results overall, albeit at the expense of increased computational demands due to the need for double evaluation of the control per discretization step. However, it is worth noting that in many real-world applications, target evaluations often constitute the primary computational bottleneck. In such scenarios, OBABO may be the preferred choice despite its higher computational requirements.

\paragraph{End-to-end hyperparameter learning.} Finally, we examine the impact of end-to-end learning of various hyperparameters on the performance of the underdamped diffusion bridge sampler. Our investigation focuses on optimizing the (diagonal) mass matrix $M$ (cf. \Cref{app:mass_matrix}), diffusion matrix $\sigma$, terminal time $T$, and prior distribution $\pi$. \Cref{fig:avg_params,fig:avg_params_8} illustrate the effective sample size, averaged across all tasks (excluding LGCP) for $N=64$ and $N=8$ diffusion steps, respectively. The results reveal that learning these parameters, particularly the terminal time and prior distribution, leads to substantial performance gains. We note that this feature improves the method's usability and accessibility by minimizing or eliminating the need for manual hyperparameter tuning.

%%%%%%%%%%%%%%%%%%%%%%%%%%%%%%%%%%%%%%%%%%%%%%%%%%%%%%%%%%%%%%%%%%%%%%%
 \begin{figure*}[t!]
 \centering
        \begin{minipage}[t!]{0.7\textwidth}
            \centering 
            \includegraphics[width=\textwidth]{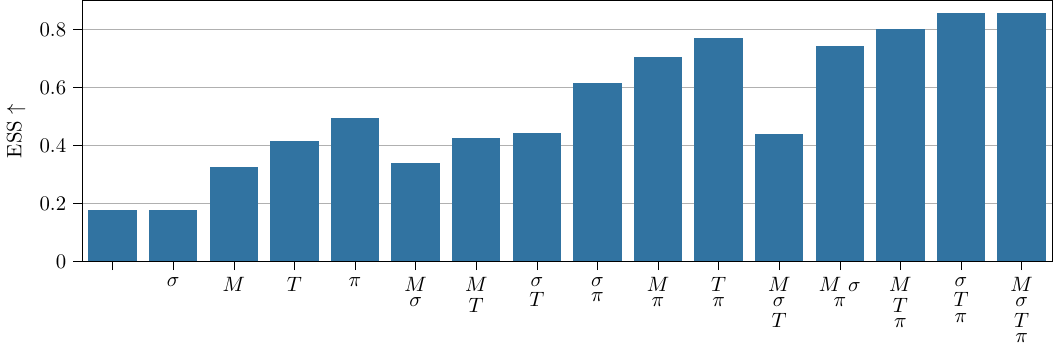}
        \end{minipage}
        \vspace{-0.5em}
        \caption[ ]
        {
        Effective sample size (ESS) of the underdamped diffusion bridge sampler (DBS) for various combinations of learned parameters, averaged across multiple benchmark problems and four seeds using $N = 64$ discretization steps. Hyperparameters include mass matrix $M$, diffusion matrix $\sigma$, terminal time $T$, and extended prior distribution $\pi$. See~\Cref{fig:avg_params_8} for the results with $N=8$ discretization steps.
        }
        \label{fig:avg_params}
        \vspace{-1em}
    \end{figure*}
%%%%%%%%%%%%%%%%%%%%%%%%%%%%%%%%%%%%%%%%%%%%%%%%%%%%%%%%%%%%%%%%%%%%%%%

\section{Conclusion and outlook}

In this work we have formulated a general framework for diffusion bridges including degenerate stochastic processes. 
In particular, we propose the novel \textit{underdamped diffusion bridge sampler}, which achieves state-of-the-art results on multiple sampling tasks without hyperparameter tuning and only a few discretization steps. 
We provide careful ablation studies showing that our improvements are due to the combination of underdamped dynamics, our novel numerical integrators, learning both the forward and backward processes as well as end-to-end learned hyperparameters. Our results also suggest to extend the method by~\cite{chen2021likelihood} and benchmark underdamped diffusion bridges for generative modeling using the ELBO derived in~\Cref{lem:elbo}.

Finally, our favorable findings encourage further investigation of the theoretical convergence rate of underdamped diffusion samplers. 
Similar to what has already been observed in generative modeling by \cite{dockhorn2021score}, we find significant and consistent improvements over overdamped versions, in particular also for high-dimensional targets with only a few discretization steps $N$. However, previous results showed that (for the case $v=0$), the improved convergence rates of underdamped Langevin dynamics do not carry over to the learned setting, since (different from the score $\nabla \log p_{\mathrm{target}}$ in Langevin dynamics) the control $u$ depends not only on the smooth process $X$, but also on $Y$~\citep{chen2022sampling}. Specifically, it can be shown that a small KL divergence between the path measures generally requires the step size $\dt$ to scale at least linearly in $d$ (instead of $\sqrt{d}$). While the tightness of our lower bounds on $\log \mathcal{Z}$ corresponds to such KL divergences, we believe that the results can still can be reconciled with our empirical findings due to the following reasons: (1) our samplers are initialized as Langevin dynamics (see \Cref{app: computational details})
%\todo{Check the details} 
such that theoretical benefits of the underdamped case hold at least initially (2) the learning problem becomes numerically better behaved (see~\eqref{eq:nelson_underdamped}), leading to better approximation of the optimal parameters, (3) learning both $u$ and $v$ %(compare DBS to DIS) 
as well as the prior $\pi$, diffusion coefficient $\sigma$, and terminal time $T$ (see \Cref{fig:avg_params}) can reduce the discretization error.

\section*{Acknowledgements}

J.B. acknowledges support from the Wally Baer and Jeri Weiss Postdoctoral Fellowship. The research of L.R.~was partially funded by Deutsche Forschungsgemeinschaft (DFG) through the grant CRC 1114 ``Scaling Cascades in Complex Systems'' (project A05, project number $235221301$). D.B. acknowledges support by funding from the pilot program Core Informatics of the Helmholtz Association (HGF) and the state of Baden-Württemberg through bwHPC, as well as the HoreKa supercomputer funded by the Ministry of Science, Research and the Arts Baden-Württemberg and by the German Federal Ministry of Education and Research.

%closedness of path space measures for 

\lowpriotodo[inline]{Lorenz: References should be proofread}

\bibliography{iclr2025_conference}
\bibliographystyle{iclr2025_conference}

\newpage

\appendix
\section{Appendix}

\lowpriotodo[inline]{Lorenz: Think about whether we can/want to make $\eta$ more general.}

\localtableofcontents

\subsection{Notation}
\label{sec:notation}
We denote by $\operatorname{Tr}(\Sigma)$ and $\Sigma^+$ the trace and the (Moore-Penrose) pseudoinverse of a real-valued matrix $\Sigma$, by $\|\mu\|$ the Euclidean norm of a vector $\mu$, and by $\mu_1 \cdot \mu_2$ the Euclidean inner product between vectors $\mu_1$ and $\mu_2$.

For a function $p \in C ( \R^D \times [0,T], \R)$, depending on the variables $z=(x,y) \in \R^d \times \R^{D-d}\simeq \R^D $ and $t\in [0,T]$, we denote by $\partial_t p$ it partial derivative w.r.t.\@ the time coordinate $t$ and by $\nabla_x p$ and $\nabla_y p$ its gradients w.r.t.\@ the spatial variables $x$ and $y$, respectively. Here, $C(A, B)$ denotes the set of all continuous functions mapping from the set $A$ to the set $B$. Moreover, we denote by 
\begin{equation}
\nabla p = \begin{bmatrix} \nabla_x p \\ \nabla_y p \end{bmatrix}
\end{equation}
the gradient w.r.t.\@ both spatial variables $z=(x,y)$. We analogously denote by $\nabla^2p$ the Hessian of $p$ w.r.t.\@ the spatial variables. Similarly, we define $\nabla \cdot f = \sum_{i=1}^D \partial_{x_i} f_i$ to be the divergence of a (time-dependent) vector field $f = (f_i)_{i=1}^D \in C( \R^D \times [0,T] , \R^D)$ w.r.t.\@ the spatial variables.

We denote by $\mathcal{N}(\mu,\Sigma)$ a multivariate normal distribution with mean $\mu \in \R^d$ and (positive semi-definite) covariance matrix $\Sigma \in \R^{d \times d}$ and write $\mathcal{N}(x;\mu,\Sigma)$ for the evaluation of its density (w.r.t.\@ the Lebesgue measure) at $x \in \R^d$. Moreover, we denote by $\mathrm{Unif}([0,T])$ the uniform distribution on $[0,T]$. For an $\R^d$-valued random variable $X$ with law $\P$ and a function $f \in ( \R^d , \R)$, we denote by
\begin{equation}
    \E_{X \sim \P}[f(X)] = \int f \, \mathrm{d}\P
\end{equation}
the expected value of the random variable $f(X)$.

For suitable continuous stochastic processes $Z = (Z_t)_{t \in [0,T]}$ and $Y = (Y_t)_{t \in [0,T]}$, we define forward and backward Itô integrals via the limits  
\begin{align}
\label{eq:stoch_int_limit_fwd}
    &\int_{\underline{t}}^{\overline{t}} X_s \cdot \vec{\dd} Y_s = \lim_{n \to \infty} \, \sum_{i=0}^{k_n} X_{t^n_i} \cdot (Y_{t^n_{i+1}} - Y_{t^n_i}), \\
    \label{eq:stoch_int_limit_bwd}
    &\int_{\underline{t}}^{\overline{t}} X_s \cdot \cev{\dd} Y_s = \lim_{n \to \infty} \, \sum_{i=0}^{k_n} X_{t^n_{i+1}} \cdot (Y_{t^n_{i+1}} - Y_{t^n_i}),
\end{align}
where $\underline{t} < t^n_{0} < \dots < t^n_{k_n} = \overline{t} $ is an increasing sequence of subdivisions of $\big[\underline{t},\overline{t}\big]$ with mesh size tending to zero; see~\cite{vargas2023transport} for details. The relation between forward and backward integrals is given in \Cref{lem:conversion}. 

We denote by $\mathcal{P}$ the set of probability measures on $C([0,T],\mathbbm{R}^{D})$, equipped with the Borel $\sigma$-field associated with the topology of uniform convergence on compact sets. For suitable vector fields $u$, $v$ and distributions $\pi$, $\tau$, we denote by $\P^{u,\pi}\in \mathcal{P}$ and $\P^{v,\tau}\in \mathcal{P}$ the forward and reverse-time \emph{path measures}, i.e., the laws or pushforwards on $C([0,T],\mathbbm{R}^{D})$, of the solutions $Z = (Z_t)_{t \in [0,T]}$ to the SDEs
\begin{align}
\label{eq: forward process Z app}
     Z_t  &= Z_0 + \int_{0}^t \left(f + \eta \, u \right)(Z_s, s) \, \dd s + \int_{0}^t \eta(s) \, \vec{\dd} W_s, && Z_0 \sim \pi, \\
\label{eq: backward process Z app}
     Z_t &= Z_T - \int_{t}^T \left(f + \eta \, v \right)(Z_s, s) \, \dd s - \int_{t}^T \eta(s) \, \cev{\dd} W_s, && Z_T \sim \tau,
\end{align}
respectively. In the above, $W$ denotes a standard $d$-dimensional Brownian motion satisfying the usual conditions, see, e.g.,~\cite{kunita2019stochastic}.
Note that we consider degenerate diffusion coefficients $\eta$ of the form $\eta = (\mathbf{0}, \sigma)^\top$.nMoreover, we assume that the first components of $f=(\widehat{f},\widetilde{f})^\top$, namely $\widehat{f}$, satisfies $\nabla_{x}\widehat{f}(z,\cdot) = 0$ for every $z=(x,y)\in \R^D$, i.e., $\widehat{f}$ only depends on $y$ but not on $x$. Finally, we denote the marginal of a path space measure $\P$ at time $t\in[0,T]$ by $\P_t$, which can be interpreted as the pushforward under the evaluation $Z \mapsto Z_t$. Moreover, we denote by $\mathbbm{P}_{s|t}$ the conditional distribution of $\mathbbm{P}_{s}$ given $\mathbbm{P}_{t}$.

\subsection{Assumptions}
\label{sec:ass}

Throughout the paper, we assume that all vector fields are smooth, i.e., for a vector field $g$ it holds $g \in C^\infty(\R^D \times [0, T], \R^d)$, and satisfy a global Lipschitz condition (uniformly in time), i.e., there exists a constant $C$ such that for all $z_1,z_2\in\R^D$ and $t\in[0,T]$ it holds that
\begin{equation}
\label{eq:lipschitz}
    \| g(z_1,t) - g(z_2,t) \| \le C \|z_1-z_2\|.
\end{equation}
These assumptions also define the set of \emph{admissible controls} $\mathcal{U} \subset C^\infty(\R^D \times [0, T], \R^d)$.

Moreover, we assume that the diffusion coefficients appearing in the dimensions with the control, $\sigma$, are invertible for all $t\in[0,T]$ and satisfy that $\sigma \in C^\infty([0,T],\R^{d\times d})$. Our continuity assumptions on the SDE coefficient functions and the global Lipschitz condition in~\eqref{eq:lipschitz} guarantee strong solutions with pathwise uniqueness (see, e.g.,~\citet[Section 8.2]{le2016brownian}) and are sufficient for Girsanov's theorem in~\Cref{thm:girsanov} to hold (see, e.g.,~\cite{delyon2006simulation}). Moreover, our conditions allow the definition of the forward and backward Itô integrals via limits of time discretizations as in~\eqref{eq:stoch_int_limit_fwd} and~\eqref{eq:stoch_int_limit_bwd} that are independent of the specific sequence of refinements~\citep{vargas2023transport}.

Finally, we assume that all SDEs admit densities of their time marginals (w.r.t.\@ the Lebesgue measure) that are sufficiently smooth\footnote{Sufficient conditions for the existence of densities can be found in~\citet[Proposition 4.1]{millet1989integration} and~\citet[Theorem 3.1]{haussmann1986time}. For time-independent SDE coefficient functions, a result by~\citet{kolmogoroff1931analytischen} guarantees that the Fokker-Planck equation is satisfied if the density is in $C^{2,1}(\R^d\times[0,T],\R)$; see also~\citet[Proposition 3.8] {pavliotis2014stochastic}. and~\citet[19.6 Proposition]{schilling2014brownian}.  However, we note that popular results by~\citet[Section 1.6]{friedman1964partial} (see also~\citet[Section 5]{friedman1975stochastic} and~\citet[Section 9.7]{durrett1984brownian}) for showing existence and uniqueness of solutions to Fokker-Planck equations require uniform ellipticity assumptions, which are not satisfied for our degenerate diffusion coefficients. We refer to~\citet[Sections 6.7(ii) and 9.8(i)-(iii)]{bogachev2022fokker} for existence and uniqueness in the degenerate case and note that we only make use of the Fokker-Planck equation for motivating our splitting schemes in~\Cref{sec:underdamped}.}
such that we have strong solutions to the corresponding Fokker-Planck equations.
The existence of continuously differentiable densities and our assumptions on the SDE coefficient functions are sufficient for Nelson's relation in~\Cref{lem:nelson} to hold; see, e.g.,~\cite{millet1989integration}. 
While we use the above assumptions to simplify the presentation, we note they can be significantly relaxed. 

\subsection{Further remarks}
\label{sec:rem}

\begin{remark}[Stochastic bridges and bridge sampling]
\label{rem:bridge}
By \emph{stochastic bridge} or \emph{diffusion bridge} (also referred to as \emph{general bridge} by~\cite{richter2023improved}), we refer to an SDE that satisfies the marginals $p_{\mathrm{prior}}$ and $p_{\mathrm{target}}$ at times $t=0$ and $t=T$, respectively. For a given diffusion coefficient of the SDE, there exist infinitely many drifts satisfying these constraints. In particular, for every sufficiently regular density evolution between the prior and target, we can find a drift (given by a unique gradient field) that establishes a corresponding stochastic bridge; see, e.g.,~\citet[Proposition 3.4]{vargas2023transport} and~\citet[Appendix B.3]{neklyudov2023action}.

However, any stochastic bridge solves our problem of sampling from $p_{\mathrm{target}}$ and the non-uniqueness can even lead to better performance in gradient-based optimization~\citep{sun2024dynamical,blessing2024beyond}. 
Other previous methods have obtained unique objectives by prescribing the density evolution, e.g., as diffusion process in DIS~\citep{berner2022optimal} or geometric annealing between prior and target in CMCD~\citep{vargas2023transport}.  

Another popular approach for obtaining uniqueness consists of minimizing the distance\footnote{In the context of generative modeling, also more general settings, referred to as \emph{mean-field games} or \emph{generalized Schrödinger bridges}, have been explored; see, e.g.,~\citet{liu2022deep,koshizukaneural,liu2023generalized}.} to a reference process (additionally to satisfying the marginals). In case the distance is measured via a Kullback-Leibler divergence 
%(more precisely, $D_{\mathrm{KL}}(\vec{\P}^{u,\pi} | \mathbb{Q})$ 
between the path measures %$\mathbb{Q}$ and $\vec{\P}^{u,\pi}$ 
of the bridge and reference process, this setting is often referred to as \emph{(dynamical) Schrödinger bridge problem}. In the context of samplers, reference processes have been chosen as scaled Brownian motions in PIS~\citep{zhang2021path} and ergodic processes in DDS~\citep{vargas2023denoising}; see also~\citet{richter2023improved} for an overview.

A special case of such a Schrödinger bridge problem is given if the marginals $p_{\mathrm{prior}}$ and $p_{\mathrm{target}}$ are Dirac measures. Sampling from the solution to such a problem is equivalent to sampling from the reference SDE conditioned on the start and end point at the times $t=0$ and $t=T$ (specified by the Dirac measures).
For instance, if the reference measure is a Brownian motion, solutions are commonly referred to as \emph{Brownian bridges}. %More generally such approaches are often referred to as \emph{bridge sampling} and 
As special cases of our considered bridges, solutions to such problems are also sometimes called \emph{diffusion bridges} and we refer to~\cite{schauer2013guided,heng2021simulating} for further details and numerical approaches.
However, our sampling problem is in some form orthogonal to such tasks: in case of a Dirac target distribution, sampling is trivial and one is interested in the conditional trajectories. For the sampling problem, the trajectories are not (directly) relevant and one is interested in samples from a general target distribution. 
\end{remark}

\begin{remark}[Log-variance loss] 
\label{rem:log-variance}
As an alternative to the KL divergence in~\eqref{eq: reverse KL}, we can consider the log-variance (LV) loss defined as
\begin{equation}
\label{eq: LV}
\mathcal{L}_\mathrm{LV}^w(u, v) := D^w_\mathrm{LV}\big(\vec{\mathbbm{P}}^{u,{\pi}} , \cev{\mathbbm{P}}^{v, {\tau}}\big) = \Var_{Z \sim \vec{\mathbbm{P}}^{w,{\pi}}}\left[ \log \frac{\dd \vec{\mathbbm{P}}^{u,{\pi}}}{\dd \cev{\mathbbm{P}}^{v, {\tau}}}(Z) \right],
\end{equation}
where the expectation is taken with respect to a path space measure corresponding to a forward process of the form \eqref{eq: forward process Z}, but with the control replaced by an arbitrary control $w \in \mathcal{U}$. This allows for off-policy training and avoids the need to differentiate through the simulation of the SDE. Moreover, the estimator achieves zero variance at the optimum $(u^*,v^*)$, see \cite{richter2020vargrad,nusken2021solving,richter2023improved}. 
\end{remark}

\begin{remark}[Higher-order Langevin equations]
\label{rem:higher_order_langevin}
We note that our general framework from \Cref{sec: general framework} can readily be used for higher-order dynamics and in particular higher-order Langevin equations, where next to a position and velocity variable one considers acceleration. As argued by \cite{shi2024generative}, corresponding trajectories become smoother the higher the order, which can lead to improved performance of (uncontrolled) Langevin dynamics. Also, \cite{mou2021high} observed improved convergence of third-order Langevin dynamics for convex potentials. We leave related extensions to diffusion bridges for future work.
\end{remark}

\subsection{Auxiliary results}

\begin{theorem}[Girsanov theorem]
\label{thm:girsanov}
For $\vec{\mathbbm{P}}^{u,{\pi}}$-almost every $Z\in C([0,T],\R^D)$ it holds that
\begin{subequations}
\begin{align}    \label{eq:rnd_continuous}
    \log \frac{\dd \vec{\mathbbm{P}}^{u,{\pi}}}{\dd \vec{\mathbbm{P}}^{w, {\pi}}}(Z) &= 
    -\int_0^T \left(\frac{1}{2}\|u - w\|^2 + (\eta^{+}f + w)\cdot (u - w)\right)(Z_s, s)\,\dd s + S \\
    &= 
    \frac{1}{2}\int_0^T\left(\|\eta^{+}f + w\|^2 - \|\eta^{+}f + u\|^2\right)(Z_s, s)\,\dd s + S, % \\
\end{align}
\end{subequations}
where
\begin{equation}
    S = \int_0^T (u- w)(Z_s, s) \cdot \eta^{+}(s)\, \vec{\dd} Z_s.
\end{equation}
In particular, for $Z \sim \vec{\P}^{u,\pi}$ we obtain that
\begin{align}
    \log \frac{\dd \vec{\mathbbm{P}}^{u,{\pi}}}{\dd \vec{\mathbbm{P}}^{w, {\pi}}}(Z) &= 
    -\frac{1}{2}\int_0^T \|u - w\|^2(Z_s, s)\,\dd s   +\int_0^T (u- w)(Z_s, s) \cdot \vec{\dd} B_s .
\end{align}
\end{theorem}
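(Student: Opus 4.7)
The claim is Girsanov's theorem for a pair of forward SDEs sharing the same initial distribution $\pi$ and the same (possibly degenerate) diffusion coefficient $\eta$, differing only through the controls $u$ and $w$ in their drifts. My strategy is to invoke the classical Girsanov theorem on $\R^d$ and then translate the resulting stochastic integral against the driving Brownian motion into one against $\eta^+ \vec{\dd} Z_s$. The decisive structural observation that makes Girsanov applicable despite the degeneracy is that the drift difference $\eta(u-w)$ lies in the range of $\eta(s)$: since $\eta = (\mathbf{0},\sigma)^\top$ with invertible $\sigma$, the pseudoinverse satisfies $\eta^+(s)\eta(s) = I_d$ on $\R^d$, so $\eta^+(s)\eta(s)(u-w)(Z_s,s) = (u-w)(Z_s,s)$.

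First I would work under $\vec{\P}^{w,\pi}$, where $Z$ solves the $w$-SDE driven by a Brownian motion $W$. Setting $\theta_s := (u-w)(Z_s,s)$ and using the smoothness and global Lipschitz assumptions of \Cref{sec:ass} to ensure Novikov's condition, the classical Girsanov theorem (in the form of, e.g., \citet[Section 8.6]{le2016brownian}, extended to the degenerate case as in \citet{delyon2006simulation}) yields
\[ \log \tfrac{\dd \vec{\P}^{u,\pi}}{\dd \vec{\P}^{w,\pi}}(Z) = \int_0^T \theta_s \cdot \vec{\dd} W_s - \tfrac{1}{2}\int_0^T \|\theta_s\|^2\,\dd s. \]
To convert the Brownian integral into one against $\vec{\dd} Z$, I would read off $\eta(s)\vec{\dd} W_s = \vec{\dd} Z_s - (f+\eta w)(Z_s,s)\,\dd s$ from the $w$-SDE, apply $\eta^+(s)$ on the left, and use $\eta^+\eta w = w$ to obtain $\vec{\dd} W_s = \eta^+(s)\vec{\dd} Z_s - (\eta^+ f + w)(Z_s,s)\,\dd s$. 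Substituting back produces the first identity with the Itô term $S$.

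The second identity follows by a completion-of-the-square computation: expanding $\|\eta^+f+w\|^2 - \|\eta^+f+u\|^2$ as $(w-u)\cdot(2\eta^+f + w+u)$ one directly checks that $-\tfrac{1}{2}\|u-w\|^2 - (u-w)\cdot(\eta^+f+w)$ equals $\tfrac{1}{2}(\|\eta^+f+w\|^2 - \|\eta^+f+u\|^2)$, purely algebraically. For the final specialization to $Z \sim \vec{\P}^{u,\pi}$, I would replace the driving noise: under $\vec{\P}^{u,\pi}$ the process $Z$ is driven by a Brownian motion $B$ with $\eta(s)\vec{\dd} B_s = \vec{\dd} Z_s - (f+\eta u)(Z_s,s)\,\dd s$, whence $\eta^+(s)\vec{\dd} Z_s = (\eta^+ f + u)(Z_s,s)\,\dd s + \vec{\dd} B_s$. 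Substituting this into $S$ inside the already-established second identity and simplifying the deterministic part yields the final expression.

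\textbf{Main obstacle.} The principal care point is the degeneracy of $\eta$: classical Girsanov demands that the drift change be absorbable by the diffusion matrix, which in our setup is guaranteed \emph{by construction} because the drift perturbation is parametrized as $\eta$ times an $\R^d$-valued control. Related delicacy arises when manipulating $\eta^+$, as $\eta^+\eta$ is only the identity on the $d$-dimensional control subspace, not on all of $\R^D$; however, every vector we left-multiply by $\eta^+\eta$ in the proof (namely $u$, $w$, and increments $\vec{\dd} W_s$, $\vec{\dd} B_s$) lives in $\R^d$, so this never causes trouble. A secondary technical point is justifying the forward-integral manipulations, which is handled by the approximation scheme~\eqref{eq:stoch_int_limit_fwd} once continuity and Lipschitz assumptions are in place.
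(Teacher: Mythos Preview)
Your proposal is correct and is precisely the standard Girsanov argument that the paper defers to via the citations \cite{sottinen2008application,ustunel2013transformation,chen2022sampling}; the paper itself provides no details beyond these references, so your write-up is in fact more explicit than the paper's own ``proof.'' One minor remark: the global Lipschitz assumption alone does not directly guarantee Novikov's condition (linear growth of $\theta_s$ in $Z_s$ does not control the exponential moment), but the reference \cite{delyon2006simulation}---which both you and the paper invoke---circumvents this via a localization argument, so the overall logic is sound.
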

\begin{proof}
    See \cite{sarkka2007application,ustunel2013transformation,chen2022sampling}.
    \lowpriotodo[inline]{We need a reference for the first general statement. The second statement (for specific $Z$) can be found in~\cite{sarkka2007application}.}
\end{proof}

\begin{theorem}[Reverse-time Girsanov theorem]
\lowpriotodo[inline]{Lorenz: Potentially write this as a remark under the Girsanov theorem.}
\label{thm:girsanov_bwd}
For $\vec{\mathbbm{P}}^{u,{\pi}}$-almost every $Z\in C([0,T],\R^D)$ it holds that
\begin{align}    \label{eq:rnd_continuous_bwd}
    \log \frac{\dd \cev{\mathbbm{P}}^{u,{\pi}}}{\dd \cev{\mathbbm{P}}^{w, {\pi}}}(Z) &= 
    \log \frac{\dd \vec{\mathbbm{P}}^{u,{\pi}}}{\dd \vec{\mathbbm{P}}^{w, {\pi}}}(Z) - \int_0^T (u- w)(Z_s, s) \cdot \eta^{+}(s)\, \vec{\dd} Z_s \\
    &\quad + \int_0^T (u- w)(Z_s, s) \cdot \eta^{+}(s)\, \cev{\dd} Z_s.
\end{align}
\lowpriotodo[inline]{This is interesting but not used in the paper: \begin{align}
    & ...=\log \frac{\dd \vec{\mathbbm{P}}^{u,{\pi}}}{\dd \vec{\mathbbm{P}}^{w, {\pi}}}(Z) + \int_0^T \nabla \cdot (\eta u- \eta w)(Z_s, s)\, \dd s.
    \end{align}
    This claim follows from~\Cref{lem:conversion}.}
\end{theorem}
\begin{proof}
    Using~\Cref{thm:girsanov} and the definitions in~\eqref{eq:stoch_int_limit_fwd} and~\eqref{eq:stoch_int_limit_bwd}, we observe that $\frac{\dd \cev{\mathbbm{P}}^{u,{\pi}}}{\dd \cev{\mathbbm{P}}^{w, {\pi}}}(Z)$ 
    equals the Radon-Nikodym derivative between the path spaces measures corresponding to forward SDEs as in~\eqref{eq: forward process Z} with initial conditions $\pi$ and all functions $f$, $u$, $w$, and $\eta$ reversed in time, evaluated at $t \mapsto Z_{T-t}$.
    We can now substitute $t \mapsto T-t$ to proof the claim; see also~\citet[Proof of Proposition 2.2]{vargas2023transport}.
\end{proof}

\begin{lemma}[Conversion formula]
%\todo[inline]{Need to check this}
\label{lem:conversion}
    For $Z \sim \mathbbm{P}^{w,{\pi}}$ and suitable $g\in C(\R^D\times [0,T], \R^D)$ it holds that
    \begin{equation}
        \int_{\underline{t}}^{\overline{t}} g(Z_s,s) \cdot  \, \cev{\dd} Z_s = \int_{\underline{t}}^{\overline{t}} g(Z_s,s) \cdot \, \vec{\dd} Z_s + \int_{\underline{t}}^{\overline{t}}\nabla \cdot  (\eta\eta^\top g) (Z_s,s) \, \dd s.
    \end{equation}
\end{lemma}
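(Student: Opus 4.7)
The plan is to identify the difference between the backward and forward Itô integrals as a quadratic covariation and then evaluate that covariation explicitly with Itô's formula.

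First I would use the defining Riemann-type sums in~\eqref{eq:stoch_int_limit_fwd} and~\eqref{eq:stoch_int_limit_bwd}. Since the backward integral samples $g$ at the right endpoint $g(Z_{t^n_{i+1}}, t^n_{i+1})$ while the forward integral samples at the left endpoint $g(Z_{t^n_i}, t^n_i)$, their difference is the limit of
\[
\sum_{i} \big(g(Z_{t^n_{i+1}}, t^n_{i+1}) - g(Z_{t^n_i}, t^n_i)\big)\cdot\big(Z_{t^n_{i+1}} - Z_{t^n_i}\big),
\]
which is the standard approximating sum, summed componentwise, for the quadratic covariation $\sum_j [g_j(Z_\cdot, \cdot), Z_j]_{\underline{t}}^{\overline{t}}$.

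Next I would compute this bracket using Itô's formula. Under $\mathbbm{P}^{w,\pi}$ the process $Z$ solves an SDE with drift $f+\eta w$ and diffusion $\eta$, so its martingale part is $\int \eta(s)\,\vec{\dd} W_s$. Applying Itô's formula componentwise to $g_j(Z_\cdot, \cdot)$ extracts the martingale part $\int (\nabla g_j)^\top \eta\,\vec{\dd} W_s$; the drift, time-derivative, and second-order Itô correction are of finite variation and contribute nothing to the bracket. Hence
\[
\dd\Big(\sum_j \big[g_j(Z_\cdot, \cdot), Z_j\big]_s\Big) = \sum_{j,k} (\eta\eta^\top)_{kj}(s)\,\partial_{z_k} g_j(Z_s, s)\,\dd s.
\]

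Finally I would match this with the right-hand side of the claimed identity. Since $\eta$ depends only on time,
\[
\nabla\cdot(\eta\eta^\top g) = \sum_k \partial_{z_k}\Big(\sum_j (\eta\eta^\top)_{kj}\,g_j\Big) = \sum_{j,k} (\eta\eta^\top)_{kj}\,\partial_{z_k} g_j,
\]
which matches the integrand above and yields the stated formula after integrating over $[\underline{t},\overline{t}]$. The main obstacle is purely technical: justifying that the mixed Riemann sum converges (in probability) to the quadratic covariation under the smoothness and global Lipschitz hypotheses of~\Cref{sec:ass}. This can be handled by standard localization, using that the bounded-variation part of $Z$ contributes nothing to the bracket, in the same spirit as the construction of the forward and backward stochastic integrals reviewed in~\cite{vargas2023transport}.
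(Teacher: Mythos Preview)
Your argument is correct. The paper takes a closely related but slightly different route: instead of identifying the backward-minus-forward difference directly as the quadratic covariation $\sum_j[g_j(Z_\cdot,\cdot),Z_j]$, it first isolates the diffusion part by writing $\widetilde g=\eta^\top g$ and reducing to integrals against $W$, then observes that the symmetrized sum $\sum_i\big(\widetilde g(Z_{t^n_{i+1}},t^n_{i+1})+\widetilde g(Z_{t^n_i},t^n_i)\big)\cdot(W_{t^n_{i+1}}-W_{t^n_i})$ is twice the Stratonovich integral, and finally invokes the standard It\^o--Stratonovich conversion $\int \widetilde g\circ\dd W=\int \widetilde g\cdot\vec{\dd}W+\tfrac12\int\nabla\cdot(\eta\widetilde g)\,\dd s$ from~\citet[Section 4.9]{kloeden1992stochastic}. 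Your approach is arguably more direct, since a single application of It\^o's formula to $g_j(Z_\cdot,\cdot)$ already produces the bracket integrand without the Stratonovich detour; the paper's route has the advantage that the correction term is delivered by a ready-made citation, so no explicit bracket computation needs to be written out. At the level of ideas the two arguments are equivalent: the Stratonovich correction is exactly half the covariation you compute.
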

\begin{proof}
    Similar to the conversion formula in~\citet[Remark 3]{vargas2023transport}, the result follows from combining~\eqref{eq:stoch_int_limit_fwd} and~\eqref{eq:stoch_int_limit_bwd}. First, we rewrite the problem by observing that
    \begin{align*}
    \int_{\underline{t}}^{\overline{t}} g(Z_s,s) \cdot \cev{\dd} Z_s  =\int_{\underline{t}}^{\overline{t}}  g(Z_s,s) \cdot \vec{\dd} Z_s + \int_{\underline{t}}^{\overline{t}} \widetilde{g}(Z_s,s) \cdot \cev{\dd} W_s  - \int_{\underline{t}}^{\overline{t}}  \widetilde{g}(Z_s,s) \cdot \vec{\dd} W_s,
    \end{align*}
    where $\widetilde{g} = \eta^\top g$. Then we can compute
    \begin{align*}
    \int_{\underline{t}}^{\overline{t}} \widetilde{g}(Z_s,s) \cdot \cev{\dd} W_s  &= \lim_{n \to \infty} \, \sum_{i=0}^{k_n} (  \widetilde{g}(Z_{t^n_{i+1}},t^n_{i+1}) +  \widetilde{g}(Z_{t^n_i},t^n_i)) \cdot (W_{t^n_{i+1}} - W_{t^n_i}) - \int_{\underline{t}}^{\overline{t}}  \widetilde{g}(Z_s,s) \cdot \vec{\dd} W_s\\
    & = 2\int_{\underline{t}}^{\overline{t}} \widetilde{g}(Z_s,s) \circ \dd W_s - \int_{\underline{t}}^{\overline{t}}  \widetilde{g}(Z_s,s) \cdot \vec{\dd} W_s,
    \end{align*}
    where $\circ$ denotes Stratonovich integration. The result now follows from the relationship between Itô and Stratonovich stochastic integrals, i.e.,
    \begin{equation}
        \int_{\underline{t}}^{\overline{t}} \widetilde{g}(Z_s,s) \circ \dd W_s = \int_{\underline{t}}^{\overline{t}} \widetilde{g}(Z_s,s) \cdot \vec{\dd}  W_s + \frac{1}{2}\int_{\underline{t}}^{\overline{t}} \nabla \cdot (\eta \widetilde{g}) (Z_s,s)\, \dd s,
    \end{equation}
    see, e.g.,~\citet[Section 4.9]{kloeden1992stochastic}.  
\end{proof}

\subsection{Proofs}
\label{app: proofs}

\begin{proof}[Proof of \Cref{prop: RND}]
     The proof follows the one by~\citet[proof of Proposition 2.2]{vargas2023transport}.
     Using disintegration \citep{leonard2014some}, we first observe that\footnote{Considering the (kinetic) Fokker-Planck equation in~\eqref{eq:fokker_planck}, the Lebesgue measure is an invariant measure of the SDE in~\eqref{eq: forward process Z} with control $w=-\eta^+f$ if and only if $\widehat{f}$ merely depends on the last $d$ coordinates.} $\frac{\dd \cev{\mathbbm{P}}^{w,{\tau}}}{\dd \vec{\mathbbm{P}}^{w, {\pi}}}(Z) = \frac{\tau(Z_T)}{\pi(Z_0)}$ for $w=-\eta^+f$. 
     \lowpriotodo[inline]{JB: Proof the previous statement rigorously.}
     Thus, it holds that
    \begin{equation}
        \log \frac{\dd \vec{\mathbbm{P}}^{u,{\pi}}}{\dd \cev{\mathbbm{P}}^{v, {\tau}}}(Z) = \log \frac{\dd \vec{\mathbbm{P}}^{u,{\pi}}}{\dd \vec{\mathbbm{P}}^{w, {\pi}}}(Z) + \log \frac{\dd \cev{\mathbbm{P}}^{w,{\tau}}}{\dd \cev{\mathbbm{P}}^{v, {\tau}}}(Z) + \log \frac{\pi(Z_0)}{\tau(Z_T)}.
    \end{equation}
    The result now follows by applying the Girsanov theorem; see \Cref{thm:girsanov} and \Cref{thm:girsanov_bwd}.
\end{proof}

\begin{proof}[Proof of \Cref{lem:elbo}]
   Using \Cref{lem:nelson} and the chain rule for the KL divergence, we observe that
\begin{equation}
    D_\mathrm{KL}(\vec{\mathbbm{P}}^{v, {\tau}} | \cev{\mathbbm{P}}^{u,{\pi}} ) = D_\mathrm{KL}(\vec{\mathbbm{P}}^{v, {\tau}} | \vec{\mathbbm{P}}^{\widetilde{v}, {\widetilde{\tau}}} ) = D_\mathrm{KL}(\vec{\mathbbm{P}}^{v, {\tau}} | \vec{\mathbbm{P}}^{\widetilde{v}, {\tau}}) + D_\mathrm{KL}(\tau | \cev{\mathbbm{P}}^{u,\pi}_0),
\end{equation} 
where $\widetilde{\tau} = \cev{\mathbbm{P}}^{u,\pi}_0$.
We note that the Girsanov theorem (see \Cref{thm:girsanov}) implies that the variational gap can equivalently be written as
\begin{equation*}
    D_\mathrm{KL}(\vec{\mathbbm{P}}^{v, {\tau}} | \vec{\mathbbm{P}}^{\widetilde{v}, {\tau}}) = \E_{Z \sim \vec{\mathbbm{P}}^{v, {\tau}}}\left[\frac{1}{2} \int_{0}^T \big\| v(Z_s,s) - u(Z_s,s) + \eta^\top(s) \nabla \log \cev{\mathbbm{P}}_s^{u, {\pi}}(Z_s)\big\|^2 \, \mathrm{d}s\right],
\end{equation*}
see also \citet[Appendix C]{vargas2023transport}.
\end{proof}

\begin{proof}[Proof of \Cref{prop: underdamped score matching}]
The proof extends the ones by~\citet[Appendix A]{huang2021variational},~\citet[Lemma A.11]{berner2022optimal}, and~\cite[Appendix C.2]{vargas2023transport} to the case of degenerate diffusion coefficients $\eta$.
Using~\Cref{prop:rnd_divergence} and a Monte Carlo approximation, we first observe that, for the case $v=0$, the ELBO can be represented as\lowpriotodo[inline]{Check the sign since we changed $v$ to $-v$.}
\begin{align}
\label{eq:mc}
    \textit{ELBO} &= \E_{Z \sim \vec{\mathbbm{P}}^{0, {\tau}}}\left[ \log \pi(Z_T)-\int_0^T \left( \frac{1}{2}\|u\|^2  - \nabla \cdot  ( \eta u + f) \right) (Z_s, s) \, \mathrm ds \right] \\
    & = -T\, \E_{Z \sim \vec{\mathbbm{P}}^{0, {\tau}}, \, s\sim \mathrm{Unif}([0,T])} \left[ \left( \frac{1}{2}\|u\|^2  - \nabla \cdot  (\eta u)  \right) (Z_s, s) \right] + \textit{const.},
\end{align}
where the last expression can be viewed as an extension of \emph{implicit score matching}~\citep{hyvarinen2005estimation} to degenerate $\eta$.

Completing the square and using the tower property in~\eqref{eq:mc}, it remains to show that 
\begin{equation}\label{eq:proof_dsm}
    \E[r(Z_s)|Z_0] =  -\E\left[ \nabla \cdot(\eta u)(Z_s,s)|Z_0\right]
\end{equation} 
for fixed $s\in [0,T]$, where we used the abbreviations
\begin{equation}
    p(z)\coloneqq \mathbbm{P}_{s|0}^{0, \tau}(z|Z_0) \quad \text{and}  \quad r(z) = u(z,s)\cdot \left(\eta^\top(s) \nabla \log p(z) \right) = \big(\eta(s)u(z,s)\big) \cdot \frac{\nabla p(z)}{p(z)}.
\end{equation}
Under suitable assumptions, the statement in~\eqref{eq:proof_dsm} follows from the computation
\begin{align}
    \E[r(Z_s)|Z_0] = \int_{\R^d} r(z)  p(z) \,\mathrm{d}z &= \underbrace{\int_{\R^d} \nabla \cdot(\eta u p)(z,s)\, \mathrm{d}z}_{=0} -  \int_{\R^d}\nabla \cdot\big(\eta u\big)(z,s) p(z) \, \mathrm{d}z \\
    & = -\E\left[ \nabla \cdot(\eta u)(Z_s,s)|Z_0\right],
\end{align}
where we used identities for divergences and Stokes' theorem.
\end{proof}

\subsection{Additional statements on diffusion models}

The following proposition is an alternative version of \Cref{prop: RND}, which, instead of backward integrations, depends on the divergence operation and does not rely on computing the pseudoinverse of $\eta$.

\begin{proposition}[Radon-Nikodym derivative]
\label{prop:rnd_divergence}
    For a process $Z \sim \vec{\mathbbm{P}}^{w,{\pi}}$ as defined in \eqref{eq: forward process Z} it holds
    \begin{align*}
    \begin{split}
        \log \frac{\dd \vec{\mathbbm{P}}^{u,{\pi}}}{\dd \cev{\mathbbm{P}}^{v, {\tau}}}(Z) &= \log \frac{\pi(Z_0)}{\tau(Z_T)}  + \int_0^T \left( \left(u - v \right)\cdot \left(w - \frac{u+v}{2} \right) - \nabla \cdot (f + \eta v) \right)(Z_s^w, s)\, \dd s \\
        & \qquad\qquad\qquad\qquad\qquad\qquad\qquad\qquad\qquad\qquad\quad + \int_0^T (u - v)(Z_s, s) \cdot \vec{\dd} W_s.
    \end{split}
    \end{align*}
\end{proposition}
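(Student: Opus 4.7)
The plan is to derive \Cref{prop:rnd_divergence} from \Cref{prop: RND} by trading the two stochastic integrals (one forward, one backward) for a single forward $W$-integral plus a divergence term. Concretely, start with the representation
\begin{equation*}
\log \frac{\dd \vec{\mathbbm{P}}^{u,{\pi}}}{\dd \cev{\mathbbm{P}}^{v, {\tau}}}(Z) = \log \frac{\pi(Z_0)}{\tau(Z_T)} + T_1 + T_2,
\end{equation*}
where $T_1$ collects the two quadratic $\tfrac{1}{2}\|\eta^+ f + \cdot\|^2$ terms and $T_2$ collects the two stochastic integrals against $\eta^+\vec{\dd}Z_s$ and $\eta^+\cev{\dd}Z_s$. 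The substitution of $Z\sim\vec{\mathbbm{P}}^{w,\pi}$, for which $\eta^+(s)\,\vec{\dd}Z_s = (\eta^+ f + w)(Z_s,s)\,\dd s + \vec{\dd}W_s$ (using $\eta^+\eta=\operatorname{Id}_d$, which holds because $\sigma$ is invertible), will allow us to rewrite every $\vec{\dd}Z$-integral in terms of $\vec{\dd}W$ plus a Lebesgue integral.

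Next I would apply the conversion formula (\Cref{lem:conversion}) to $g=(\eta^+)^\top(\eta^+ f + v)$, using the identity $\eta\eta^\top(\eta^+)^\top = \eta(\eta^+\eta)^\top = \eta$ so that $\eta\eta^\top g = \eta\eta^+ f + \eta v$. This gives
\begin{equation*}
\int_0^T(\eta^+ f + v)\cdot\eta^+\,\cev{\dd}Z_s = \int_0^T(\eta^+ f + v)\cdot\eta^+\,\vec{\dd}Z_s + \int_0^T \nabla\cdot(\eta\eta^+ f + \eta v)(Z_s,s)\,\dd s,
\end{equation*}
so that $T_2$ collapses to $\int_0^T(u-v)\cdot\eta^+\,\vec{\dd}Z_s$ minus the divergence term. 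Using the rewrite above, the $\vec{\dd}Z$ integral becomes $\int_0^T(u-v)\cdot(\eta^+ f + w)\,\dd s + \int_0^T(u-v)\cdot\vec{\dd}W_s$.

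Finally, I would expand the quadratic difference in $T_1$ via the polarization identity
\begin{equation*}
\tfrac{1}{2}\bigl(\|\eta^+ f + v\|^2 - \|\eta^+ f + u\|^2\bigr) = -(u-v)\cdot\bigl(\eta^+ f + \tfrac{u+v}{2}\bigr),
\end{equation*}
and add it to the rewritten $T_2$. The $\eta^+ f$-contributions cancel between $T_1$ and the $(u-v)\cdot\eta^+ f$-piece of $T_2$, leaving exactly $(u-v)\cdot\bigl(w - \tfrac{u+v}{2}\bigr)$ inside the $\dd s$ integral, alongside the divergence and Brownian-motion terms, matching the claimed identity.

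The only genuinely delicate step is the application of the conversion formula with degenerate $\eta$: one must verify that the relevant pseudoinverse identity $\eta\eta^\top(\eta^+)^\top = \eta$ holds and that the resulting divergence $\nabla\cdot(\eta\eta^+ f + \eta v)$ is well-defined under our smoothness assumptions (\Cref{sec:ass}). The rest is bookkeeping, i.e., an algebraic recombination of the $\eta^+ f$-terms and repeated use of $\eta^+\eta=\operatorname{Id}_d$.
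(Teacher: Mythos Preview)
Your proposal is correct and follows essentially the same route as the paper's proof: start from \Cref{prop: RND}, apply the conversion formula (\Cref{lem:conversion}) to turn the backward integral into a forward integral plus the divergence term $\nabla\cdot(\eta\eta^+ f + \eta v)$, then substitute $\eta^+\,\vec{\dd}Z_s = (\eta^+ f + w)\,\dd s + \vec{\dd}W_s$ and combine with the quadratic terms via polarization. Your explicit check of the pseudoinverse identity $\eta\eta^\top(\eta^+)^\top=\eta$ is exactly what is needed to make \Cref{lem:conversion} yield the correct divergence in the degenerate setting, and the remaining algebra matches the paper's computation line for line.
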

\begin{proof}
    This follows from combining \Cref{prop: RND} with \Cref{lem:conversion}. In particular, note that for $Z \sim \vec{\mathbbm{P}}^{w,{\pi}}$ it holds that
    \begin{align*}
    \begin{split}
        \log \frac{\dd \vec{\mathbbm{P}}^{u,{\pi}}}{\dd \cev{\mathbbm{P}}^{v, {\tau}}}(Z) &= \log \frac{\pi(Z_0)}{\tau(Z_T)}  - \frac{1}{2} \int_0^T \| ( \eta^+ f+  u ) \|^2(Z_s, s) \, \dd s  + \frac{1}{2} \int_0^T \|( \eta^+ f + v ) \|^2(Z_s, s) \, \dd s  \\
        & \quad +\int_0^T ( \eta^+ f+  u )(Z_s, s) \cdot \eta^+(s) \, \vec{\dd} Z_s - \int_0^T (\eta^+ f + v)(Z_s, s) \cdot \eta^+(s) \, \cev{\dd} Z_s  \\
        &= \log \frac{\pi(Z_0)}{\tau(Z_T)}  - \frac{1}{2} \int_0^T \| ( \eta^+ f+  u ) \|^2(Z_s, s) \, \dd s  + \frac{1}{2} \int_0^T \|( \eta^+ f + v ) \|^2(Z_s, s) \, \dd s  \\
        & \quad +\int_0^T ( u - v )(Z_s, s) \cdot \eta^+(s) \, \vec{\dd} Z_s - \int_0^T \nabla \cdot (\eta \eta^+ f + \eta v)(Z_s, s) \, \dd s  \\
        & = \log \frac{\pi(Z_0)}{\tau(Z_T)}  + \int_0^T \left( \left(u - v \right)\cdot \left(w - \frac{u+v}{2} \right) - \nabla \cdot (\eta\eta^+ f + \eta v) \right)(Z_s^w, s)\, \dd s \\
        & \quad + \int_0^T (u - v)(Z_s, s) \cdot \vec{\dd} W_s.
    \end{split}
    \end{align*}
    We note that $\eta \eta^+ = \begin{pmatrix} \mathbf{0} & \mathbf{0}\\ \mathbf{0} & \operatorname{Id}_{d \times d} \end{pmatrix}$. Together with our assumption that the first component $\widehat{f}$ of $f=(\widehat{f},\widetilde{f})^\top$ only depends on the last $d$ coordinates, we obtain that $\nabla \cdot (\eta \eta^+ f) = \nabla \cdot f$, which proves the claim.
\end{proof}

\begin{remark}[PDE perspective]
Similar to~\cite{berner2022optimal,sun2024dynamical}, we can also derive the expression in~\Cref{prop:rnd_divergence} using the underlying PDEs. To this end, we recall that the density $p$ of the solution $Z$ to the SDE in stated~\eqref{eq: forward process Z} is governed by the (kinetic) Fokker-Planck equation in~\eqref{eq:fokker_planck}.
Using the Hopf-Cole transformation $V := \log p$, we get the Hamilton–Jacobi–Bellman equation
\begin{equation*}
    \partial_t V = - \div(f + \eta u) - \nabla V \cdot (f + \eta u) + \tfrac{1}{2}\|\eta^\top \nabla V \|^2 + \tfrac{1}{2} \Tr(\eta\eta^\top \nabla^2 V ) .
\end{equation*}
Moreover, It\^{o}'s formula implies that
\begin{align*}
    V(Z_T, T) - V(Z_0, 0) &= \int_0^T \left( \partial_s V + \tfrac{1}{2} \Tr(\eta \eta^\top \nabla^2 V) + (f + \eta w) \cdot \nabla V \right)(Z_s, s) \, \mathrm ds \\
    &\qquad\qquad\qquad\qquad\qquad\qquad\qquad\qquad\quad + \int_0^T  \nabla V (Z_s, s) \cdot \eta(s) \, \mathrm d W_s,
\end{align*}
where $Z \sim \vec{\mathbbm{P}}^{w,{\pi}}$.
Following the same computations as in Proposition 3.1 in \cite{sun2024dynamical} and minimizing the squared residual of the above It\^{o} formula, we obtain the loss
\begin{align*}
\mathcal{L}^w(u, v) = \E_{Z \sim \vec{\P}^{w, \pi}}\left[\left( \log \frac{\dd \vec{\mathbbm{P}}^{u,{\pi}}}{\dd \cev{\mathbbm{P}}^{v, {\tau}}}(Z) \right)^2\right],
\end{align*}
where the Radon-Nikodym derivative is equivalent to the one given in \Cref{prop:rnd_divergence}.
\end{remark}

\lowpriotodo[inline]{Lorenz: we could write something about unique versions such as Schrödinger bridge, CMCD etc. here.}

\subsection{Including a mass matrix} \label{app:mass_matrix}
In \Cref{sec:underdamped}, we omitted the mass matrix $M$ for simplicity. Here, we give further details on the SDEs when the mass matrix is incorporated. It can be incorporated in the framework presented in \Cref{sec: general framework} by making the choices $D = 2d$, $Z = (X, Y)^\top$ and
    \begin{equation}
        f(x, y, s) = (y, \widetilde{f}(x, y, s) - \tfrac{1}{2} \sigma\sigma^{\top}(s) y)^\top, \qquad \eta = (\mathbf{0}_d, \sigma M^{1/2})^\top
    \end{equation}
    in \eqref{eq: forward process Z} and \eqref{eq: backward process Z}, where $\mathbf{0}_d \in \R^{d \times d}$, $\widetilde{f} \in C(\R^d \times [0, T], \R^d)$ is chosen appropriately and $\sigma, M \in C([0, T], \R^{d \times d})$. For the terminal conditions, the standard normal for the last $d$ components of the initial and terminal distributions is replaced by a Gaussian whose covariance matrix is given by the mass, i.e.,
\begin{equation}
    \pi(x,y) = p_{\mathrm{prior}}(x) \mathcal{N}(y; 0, M) \quad \text{and} \quad \tau(x,y) = p_{\mathrm{target}}(x) \mathcal{N}(y; 0, M).
\end{equation}
We, therefore, get the forward and reverse-time processes
\begin{subequations}
\label{eq:underdamped_fwd}
\begin{align}
\dd X_s &= M^{-1}Y_s \, \dd s, && X_0 \sim p_{\mathrm{prior}},\\
\dd Y_s  &= \left(\widetilde{f}(Z_s, s) - \tfrac{1}{2} \sigma\sigma^{\top}(s) Y_s + \sigma M^{1/2} u(Z_s, s) \right) \, \dd s + \sigma(s) M^{1/2}\, \vec{\dd} W_s, && Y_0 \sim \mathcal{N}(0,M),
\end{align}
\end{subequations}
and 
\begin{subequations}
\begin{align}
\dd X_s &= M^{-1}Y_s \, \dd s, && X_T \sim p_{\mathrm{target}},\\
\dd Y_s &= \left(\widetilde{f}(Z_s, s) - \tfrac{1}{2} \sigma\sigma^{\top}(s) Y_s + \sigma M^{1/2} v(Z_s, s) \right) \, \dd s +  \sigma(s) M^{1/2}\, \cev{\dd} W_s,  && Y_T \sim \mathcal{N}(0,M).
\end{align}
\end{subequations}
In a similar spirit to the diffusion matrix $\sigma$, one can also learn the mass matrix. However, our experiments (\Cref{sec: numerical experiments}) showed little improvements when doing so.

\subsection{Numerical discretization schemes} \label{app:discretization}
In this section we provide details on the numerical integration schemes discussed in this work, called OBAB, BAOAB, and OBABO. In particular, we derive the transition kernels $\vec{p}$ and $\cev{p}$ for computing the discrete-time approximation of the Radon-Nikodym derivative as 
\begin{equation}
    \frac{\dd \vec{\mathbbm{P}}^{u,{\pi}}}{\dd \cev{\mathbbm{P}}^{v, {\tau}}}(Z) \approx \frac{\pi(\widehat{Z}_0) \prod_{n=0}^{N-1}\vec{p}_{n+1|n}(\widehat{Z}_{n+1} \big| \widehat{Z}_{n})}{\tau (\widehat{Z}_N) \prod_{n=0}^{N-1}\cev{p}_{n|n+1}(\widehat{Z}_{n} \big| \widehat{Z}_{n+1})}.
\end{equation}
We note that such splitting schemes are well-studied in the uncontrolled setting, see, e.g., Section 7 in \citet{leimkuhler2015molecular} or Section 2.2.3.2 in \citet{stoltz2010free}. The controlled setting, and in particular the approximation of the Radon-Nikodym derivative between path space measures, has to the best of our knowledge only been considered for OBAB yet \cite{geffner2022langevin,doucet2022annealed}.

For convenience, let us recall the following splitting for the forward SDE that is used throughout this section, i.e.,
\begin{align}
\begin{bmatrix}
\dd X_s \\
\dd Y_s
\end{bmatrix}
=
\underbrace{\begin{bmatrix}
Y_s\\
0
\end{bmatrix}}_{\vec{\text{A}}}
\dd s 
+
\underbrace{\begin{bmatrix}
0 \\
\widetilde{f}(X_s, s)
\end{bmatrix}}_{\vec{\text{B}}}
\dd s 
+
\underbrace{\begin{bmatrix}
0 \\
\left(- \tfrac{1}{2} \sigma\sigma^{\top}(s) Y_s + \sigma u(Z_s, s)\right) \, \dd s +  \sigma(s) \vec{\dd} W_s
\end{bmatrix}}_{\vec{\text{O}}},
\end{align}
and let us use the following splitting for the reverse SDE
\begin{align}
\label{eq:backward SDE split}
\begin{bmatrix}
\dd X_s \\
\dd Y_s
\end{bmatrix}
=
\underbrace{\begin{bmatrix}
Y_s\\
0
\end{bmatrix}}_{\cev{\text{A}}}
\dd s 
+
\underbrace{\begin{bmatrix}
0 \\
\widetilde{f}(X_s, s)
\end{bmatrix}}_{\cev{\text{B}}}
\dd s 
+
\underbrace{\begin{bmatrix}
0 \\
\left(- \tfrac{1}{2} \sigma\sigma^{\top}(s) Y_s + \sigma v(Z_s, s)\right) \, \dd s +  \sigma(s) \cev{\dd} W_s
\end{bmatrix}}_{\cev{\text{O}}}.
\end{align} 
Here, we use arrows to indicate whether the corresponding splitting belongs to the generative or inference SDE.
To simplify the notation, we define $\sigma_n \coloneqq \sigma (n\dt)$, $\widetilde{f}_n \coloneqq \widetilde{f}(X_{n\dt}, n\dt)$, $\vec{p}_{{n+1|n}} \coloneqq \vec{p}_{{n+1|n}}(\widehat{Z}_{n+1}|\widehat{Z}_{n})$ and analogously for the backward transition $\cev{p}_{{n|n+1}}$.

\subsubsection{Euler-Maruyama}
We follow \cite{geffner2022langevin} and leverage a semi-implicit Euler-Maruyama (EM) scheme, where the velocity update is computed first and then used to move the position, i.e., 
\begin{align}
    \widehat{Y}_{n+1} & = \widehat{Y}_{n} (1 - \tfrac{1}{2}\sigma_{n}\sigma_{n}^{\top}\dt) +\sigma_{n}  u(\widehat{Z}_{n}, n \dt) \dt +  \widetilde{f}_{n} \dt+ \sigma_{n} \sqrt{\dt} \xi_n, \\ 
    \widehat{X}_{n+1} &= \widehat{X}_{n} + \widehat{Y}_{n+1}\dt,
\end{align}
with $\xi_n \sim \mathcal{N}\left(0, I\right)$ and step size $\dt > 0$.
We further define $\widehat{Z}_{n+1} = \Phi(\widehat{X}_{n}, \widehat{Y}_{n+1}) := (\widehat{X}_{n} + \widehat{Y}_{n+1}\dt,\widehat{Y}_{n+1})^{\top}$ which helps to simplify the discrete-time approximation of the Radon-Nikodym derivative as shown in the following.  We obtain the forward transition density
\begin{align*}
    \vec{p}_{{n+1|n}} = & \mathcal{N}\left(\widehat{Y}_{n+1}\Big|\widehat{Y}_{n} (1 - \tfrac{1}{2}\sigma_{n}\sigma_{n}^{\top}\dt) +\sigma_{n}  u(\widehat{Z}_{n}, n \dt) \dt + \widetilde{f}_{n} \dt, \sigma_{n}\sigma_{n}^{\top} \dt \right) \times  \delta_{\Phi(\widehat{X}_{n}, \widehat{Y}_{n+1})}(\widehat{Z}_{n+1}),
\end{align*}
where $\delta$ is the Dirac delta distribution. Integrating the reverse process \eqref{eq:backward SDE split} using the semi-implicit EM scheme, we obtain
\begin{align}
    \widehat{X}_{n} &= \widehat{X}_{n+1} - \widehat{Y}_{n+1}\dt \\
    \widehat{Y}_{n} & = \widehat{Y}_{n+1} (1 + \tfrac{1}{2}\sigma_{n+1}\sigma_{n+1}^{\top}\dt) -\sigma_{n+1}  v(\widehat{Z}_{n+1}, (n+1) \dt) \dt -  \widetilde{f}_{n+1} \dt+ \sigma_{n+1} \sqrt{\dt} \xi_{n+1},
\end{align}
with corresponding transition density
\begin{align}
\begin{split}
\cev{p}_{{n|n+1}} = & 
    \mathcal{N}\left(\widehat{Y}_{n} \Big|\widehat{Y}_{n+1} (1 + \tfrac{1}{2}\sigma_{n+1}\sigma_{n+1}^{\top}\dt) -\sigma_{n+1}  v(\widehat{Z}_{n+1}, (n+1) \dt) \dt -  \widetilde{f}_{n+1} \dt, \sigma_{n+1}\sigma_{n+1}^{\top} \dt\right) \\ & \times \delta_{\Phi^{-1}(\widehat{Z}_{n+1})}(\widehat{X}_{n}, \widehat{Y}_{n+1}),
\end{split}
\end{align}
resulting in the following ratio between forward and backward transitions
\begin{align}
    &\frac{\vec{p}_{{n+1|n}}}{\cev{p}_{{n|n+1}}} = 
\frac{\mathcal{N}\left(\widehat{Y}_{n+1}|\widehat{Y}_{n} (1 - \tfrac{1}{2}\sigma_{n}\sigma_{n}^{\top}\dt) +\sigma_{n}  u(\widehat{Z}_{n}, n \dt) \dt + \widetilde{f}_{n} \dt, \sigma_{n}\sigma_{n}^{\top} \dt \right)}
{\mathcal{N}\left(\widehat{Y}_{n} |\widehat{Y}_{n+1} (1 + \tfrac{1}{2}\sigma_{n+1}\sigma_{n+1}^{\top}\dt) -\sigma_{n+1}  v(\widehat{Z}_{n+1}, (n+1) \dt) \dt -  \widetilde{f}_{n+1} \dt, \sigma_{n+1}\sigma_{n+1}^{\top} \dt\right)},
%\label{eq:isw_geffner}
\end{align}
as the ratio between the two Dirac delta distribution cancel.

\subsubsection{OBAB} 
Composing the splitting terms as $\vec{\text{O}}\vec{\text{B}}\vec{\text{A}}\vec{\text{B}}$ yields the integrator
\begin{subequations}
\begin{alignat}{3}
% \nonumber \\
&\widehat{Y}_{n}^{'} \hspace{0.42cm} = \widehat{Y}_{n} (1 - \tfrac{1}{2}\sigma_{n}\sigma_{n}^{\top}\dt) +\sigma_{n}  u(\widehat{Z}_{n}, n \dt) \dt + \sigma_{n} \sqrt{\dt} \xi_n, \quad \xi_n \sim \mathcal{N}\left(0, I\right) \\
& \mkern-5mu
\begin{rcases}
\widehat{Y}_{n}^{''}  \hspace{0.35cm} = \widehat{Y}_{n}^{'} + \widetilde{f}_n \tfrac{\dt}{2}
\\
\widehat{X}_{n+1} = \widehat{X}_{n} + \widehat{Y}_{n}^{''}\dt 
\\
\widehat{Y}_{n+1} \hspace{0.1cm} = \widehat{Y}_{n}^{''} + \widetilde{f}_{n+1} \tfrac{\dt}{2}
\end{rcases} \Phi
\end{alignat}
\end{subequations}
with $\widehat{Z}_{n+1} = \Phi(\widehat{X}_{n}, \widehat{Y}_{n}^{'})$. The resulting forward transition is given by    
\begin{equation*}
    \vec{p}_{{n+1|n}} =  \delta_{\Phi(\widehat{X}_{n}, \widehat{Y}_{n}^{'})}(\widehat{Z}_{n+1})  \mathcal{N}\left(\widehat{Y}_{n}^{'}\Big|\widehat{Y}_{n} (1 - \tfrac{1}{2}\sigma_{n}\sigma_{n}^{\top}\dt) +\sigma_{n}  u(\widehat{Z}_{n}, n \dt) \dt, \sigma_{n}\sigma_{n}^{\top}\dt \right).
\end{equation*}
The inference SDE, i.e.,  $\cev{\text{O}}\cev{\text{B}}\cev{\text{A}}\cev{\text{B}}$, is integrated as 
\begin{alignat}{3}
& \mkern-5mu\begin{rcases}    
\widehat{Y}_{n}^{''}  = \widehat{Y}_{n+1} -  \widetilde{f}_{n+1} \tfrac{\dt}{2}
\\
\widehat{X}_{n} = \widehat{X}_{n+1} - \widehat{Y}_{n}^{''} \dt
\\
\widehat{Y}_{n}^{'} \hspace{0.1cm} = \widehat{Y}_{n}^{''} - \widetilde{f}_n \tfrac{\dt}{2}
\end{rcases} \Phi^{-1} \\
& \widehat{Y}_{n} \hspace{0.1cm} = \widehat{Y}_{n}^{'} (1 + \tfrac{1}{2} \sigma_{n}\sigma_{n}^{\top}\dt) -\sigma_{n} v(\widehat{Z}_n^{'}, n \dt)  \dt + \sigma_{n} \sqrt{\dt} \xi_n, \quad \xi_n \sim \mathcal{N}\left(0, I\right),
\end{alignat}
with $(\widehat{X}_{n}, \widehat{Y}_{n}^{'}) = \Phi^{-1}(\widehat{Z}_{n+1})$,
giving the backward transition
\begin{align*}
    \cev{p}_{{n|n+1}} & =  \delta_{\Phi^{-1}(\widehat{Z}_{n+1})}(\widehat{X}_{n}, \widehat{Y}_{n}^{'})\mathcal{N}\left(\widehat{Y}_{n} \Big| \widehat{Y}^{'}_{n} \left(1 + \tfrac{1}{2}\sigma_{n}\sigma_{n}^{\top}\dt\right)-  \sigma_{n} v(\widehat{Z}_n^{'}, n \dt) \dt, \sigma_{n}\sigma_{n}^{\top} \dt\right).
\end{align*}
This results in the following ratio between forward and backward transitions
\begin{equation}
    \frac{\vec{p}_{{n+1|n}}}{\cev{p}_{{n|n+1}}} =
\frac{\mathcal{N}\left(\widehat{Y}_{n}^{'}\Big|\widehat{Y}_{n} (1 - \tfrac{1}{2}\sigma_{n}\sigma_{n}^{\top}\dt) +\sigma_{n}  u(\widehat{Z}_{n}, n \dt) \dt, \sigma_{n}\sigma_{n}^{\top} \dt \right)}{\mathcal{N}\left(\widehat{Y}_{n} \Big| \widehat{Y}^{'}_{n} \left(1 + \tfrac{1}{2}\sigma_{n}\sigma_{n}^{\top}\dt\right)-  \sigma_{n} v(\widehat{Z}_n^{'}, n \dt) \dt, \sigma_{n}\sigma_{n}^{\top} \dt\right)}.
%\label{eq:isw_geffner}
\end{equation}
%%%%%%%%%%%%%%%%%%%%%%%%%%%%%%%%%%%%%%%%%%%%%%%%%%%%%%%%%%%%%%%%%
\subsubsection{BAOAB} 

Composing the splitting terms as  $\vec{\text{B}}\vec{\text{A}}\vec{\text{O}}\vec{\text{A}}\vec{\text{B}}$ yields the integrator
\begin{align}
& \mkern-5mu\begin{rcases}
\widehat{Y}_{n}^{'} \hspace{0.4cm} = \widehat{Y}_{n} +  \widetilde{f}_n \tfrac{\dt}{2} \\
\widehat{X}_{n}^{'} \hspace{0.33cm} = \widehat{X}_{n} + \widehat{Y}_{n}^{'} \tfrac{\dt}{2}
\end{rcases} \Phi_1 \\
& \widehat{Y}_{n}^{''}  \hspace{0.36cm} = \widehat{Y}_{n}^{'} (1 - \tfrac{1}{2}\sigma_{n}\sigma_{n}^{\top}\dt) +\sigma_{n} u(\widehat{X}_{n}^{'},\widehat{Y}_{n}^{'}, n \dt) \dt + \sigma_{n} \sqrt{\dt} \xi_n \\
& \mkern-5mu\begin{rcases}
\widehat{X}_{n+1} = \widehat{X}_{n}^{'} + \widehat{Y}_{n}^{''} \tfrac{\dt}{2} \\
\widehat{Y}_{n+1} \hspace{0.1cm} = \widehat{Y}_{n}^{''} + \widetilde{f}_{n+1} \tfrac{\dt}{2}
\end{rcases} \Phi_2
\end{align}
with $\xi_n \sim \mathcal{N}\left(0, I\right)$, $\ (\widehat{X}_{n}^{'}, \widehat{Y}_{n}^{'}) = \Phi_1(\widehat{Z}_n)$, and $\widehat{Z}_{n+1} = \Phi_2(\widehat{X}_{n}^{'}, \widehat{Y}_{n}^{''})$. Hence, we obtain the forward transition density
\begin{align}
\begin{split}
    \vec{p}_{{n+1|n}} = 
    & \delta_{\Phi_2(\widehat{X}_{n}^{'}, \widehat{Y}_{n}^{''})}(\widehat{Z}_{n+1}) \times  \delta_{\Phi_1(\widehat{Z}_n)}(\widehat{X}_{n}^{'}, \widehat{Y}_{n}^{'})
    \\ & \times \mathcal{N}\left(\widehat{Y}_{n}^{''}\Big|\widehat{Y}^{'}_{n} (1 - \tfrac{1}{2}\sigma_{n}\sigma_{n}^{\top}\dt) +\sigma_{n}  u(\widehat{X}_{n}^{'},\widehat{Y}_{n}^{'}, n \dt) \dt, \sigma_{n}\sigma_{n}^{\top}\dt \right).
\end{split}
\end{align}
For $\cev{\text{B}}\cev{\text{A}}\cev{\text{O}}\cev{\text{A}}\cev{\text{B}}$ we obtain 
\begin{align}
& \mkern-5mu\begin{rcases}
\widehat{Y}_{n}^{''} = \widehat{Y}_{n+1} - \widetilde{f}_{n+1} \tfrac{\dt}{2} \\
\widehat{X}_{n}^{'} = \widehat{X}_{n+1} - \widehat{Y}_{n}^{''} \tfrac{\dt}{2}
\end{rcases} \Phi^{-1}_2 \\
&\widehat{Y}_{n}^{'}  \hspace{0.1cm}= \widehat{Y}_{n}^{''} (1 + \tfrac{1}{2}\sigma_{n}\sigma_{n}^{\top}\dt) -\sigma_{n} v(\widehat{X}_{n}^{'},\widehat{Y}_{n}^{''}, n\dt) \dt + \sigma_{n} \sqrt{\dt} \xi_n \\
& \mkern-5mu\begin{rcases}
\widehat{X}_{n} \hspace{0.02cm} = \widehat{X}_{n}^{'}  - \widehat{Y}_{n}^{'} \tfrac{\dt}{2} \\
\widehat{Y}_{n} \hspace{0.1cm} = \widehat{Y}_{n}^{'} -  \widetilde{f}_n \tfrac{\dt}{2}
\end{rcases} \Phi^{-1}_1 
\end{align}
with $(\widehat{X}_{n}^{'} , \widehat{Y}_{n}^{''}) = \Phi^{-1}_2(\widehat{Z}_{n+1})$ and $\widehat{Z}_n = \Phi_1^{-1}(\widehat{X}_{n}^{'}, \widehat{Y}_{n}^{'})$. Moreover, we have
\begin{align}
\begin{split}
    \cev{p}_{{n|n+1}} = 
    & \delta_{\Phi_1^{-1}(\widehat{X}_{n}^{'}, \widehat{Y}_{n}^{'})}(\widehat{Z}_n)
    \times \delta_{\Phi^{-1}_2(\widehat{Z}_{n+1})}(\widehat{X}_{n}^{'} , \widehat{Y}_{n}^{''}) \\ & \times \mathcal{N}\left(\widehat{Y}_{n}^{'}\Big|\widehat{Y}^{''}_{n} (1 + \tfrac{1}{2}\sigma_{n}\sigma_{n}^{\top}\dt) -\sigma_{n}  v(\widehat{X}_{n}^{'},\widehat{Y}_{n}^{''}, n\dt) \dt, \sigma_{n}\sigma_{n}^{\top}\dt \right) .
\end{split}
\end{align}
We therefore obtain the following ratio between forward and backward transitions as
\begin{equation}
\frac{\vec{p}_{{n+1|n}}}{\cev{p}_{{n|n+1}}} =
\frac{ \mathcal{N}\left(\widehat{Y}_{n}^{''}\Big|\widehat{Y}^{'}_{n} (1 - \tfrac{1}{2}\sigma_{n}\sigma_{n}^{\top}\dt) +\sigma_{n}  u(\widehat{X}_{n}^{'},\widehat{Y}_{n}^{'}, n \dt) \dt, \sigma_{n}\sigma_{n}^{\top}\dt \right)}{\mathcal{N}\left(\widehat{Y}_{n}^{'}\Big|\widehat{Y}^{''}_{n} (1 + \tfrac{1}{2}\sigma_{n}\sigma_{n}^{\top}\dt) -\sigma_{n}  v(\widehat{X}_{n}^{'},\widehat{Y}_{n}^{''}, n\dt) \dt, \sigma_{n}\sigma_{n}^{\top}\dt \right)}.
%\label{eq:isw_geffner}
\end{equation}

\subsubsection{OBABO}

Composing the splitting terms as $\vec{\text{O}}\vec{\text{B}}\vec{\text{A}}\vec{\text{B}}\vec{\text{O}}$ yields the integrator
\begin{align}
& \widehat{Y}_{n}^{'}  \hspace{0.4cm} = \widehat{Y}_{n} (1 - \tfrac{1}{4}\sigma_{n}\sigma_{n}^{\top}\dt) +\sigma_{n} u(\widehat{Z}_{n}, n \dt) \tfrac{\dt}{2} + \sigma_{n} \sqrt{\tfrac{\dt}{2}} \xi_n^{(1)} \\
& \mkern-5mu\begin{rcases}
\widehat{Y}_{n}^{''}  \hspace{0.35cm} = \widehat{Y}_{n}^{'} + \widetilde{f}_n \tfrac{\dt}{2} \\
\widehat{X}_{n+1} = \widehat{X}_{n} + \widehat{Y}_{n}^{''}\dt \\
\widehat{Y}_{n}^{'''}  \hspace{0.28cm} = \widehat{Y}_{n}^{''} + \widetilde{f}_{n+1} \tfrac{\dt}{2}
\end{rcases} \Phi \\
& \widehat{Y}_{n+1} \hspace{0.1cm} = \widehat{Y}_{n}^{'''} (1 - \tfrac{1}{4}\sigma_{n}\sigma_{n}^{\top}\dt) +\sigma_{n} u(\widehat{X}_{n+1}, \widehat{Y}_{n}^{'''}, (n+\tfrac{1}{2}) \dt) \tfrac{\dt}{2} + \sigma_{n} \sqrt{\tfrac{\dt}{2}} \xi_n^{(2)}
\end{align}
with $\xi_n^{(1)}, \xi_n^{(2)} \sim \mathcal{N}\left(0, I\right)$ and $(\widehat{X}_{n+1}, \widehat{Y}_{n}^{'''}) = \Phi(\widehat{X}_{n}, \widehat{Y}_{n}^{'})$. The resulting forward transition density is given by    
\begin{align}
\begin{split}
    \vec{p}_{{n+1|n}} = & \mathcal{N}\left(\widehat{Y}_{n+1}\Big|\widehat{Y}_{n}^{'''} (1 - \tfrac{1}{4}\sigma_{n}\sigma_{n}^{\top}\dt) +\sigma_{n} u(\widehat{X}_{n+1}, \widehat{Y}_{n}^{'''}, (n+\tfrac{1}{2}) \dt) \tfrac{\dt}{2}, \tfrac{1}{2}\sigma_{n}\sigma_{n}^{\top}\dt \right) \\
    & \times \delta_{\Phi(\widehat{X}_{n}, \widehat{Y}_{n}^{'})}(\widehat{X}_{n+1}, \widehat{Y}_{n}^{'''}) \\
    & \times \mathcal{N}\left(\widehat{Y}_{n}^{'}\Big|\widehat{Y}_{n} (1 - \tfrac{1}{4}\sigma_{n}\sigma_{n}^{\top}\dt) +\sigma_{n} u(\widehat{Z}_{n}, n \dt) \tfrac{\dt}{2}, \tfrac{1}{2}\sigma_{n}\sigma_{n}^{\top}\dt \right).
\end{split}
\end{align}
The inference SDE, i.e., $\cev{\text{O}}\cev{\text{B}}\cev{\text{A}}\cev{\text{B}}\cev{\text{O}}$, is integrated as 
\begin{align}
& \widehat{Y}_{n}^{'''}  = \widehat{Y}_{n+1} (1 + \tfrac{1}{4}\sigma_{n}\sigma_{n}^{\top}\dt) -\sigma_{n} v(\widehat{Z}_{n+1}, (n+1)\dt) \tfrac{\dt}{2} + \sigma_{n} \sqrt{\tfrac{\dt}{2}} \xi_n^{(2)} \\
& \mkern-5mu\begin{rcases}    
\widehat{Y}_{n}^{''} \hspace{0.1cm} = \widehat{Y}_{n}^{'''} -  \widetilde{f}_{n+1} \tfrac{\dt}{2} \\
\widehat{X}_{n} \hspace{0.1cm} = \widehat{X}_{n+1} - \widehat{Y}_{n}^{''} \dt \\
\widehat{Y}_{n}^{'} \hspace{0.2cm} = \widehat{Y}_{n}^{''} - \widetilde{f}_n \tfrac{\dt}{2}
\end{rcases} \Phi^{-1} \\
& \widehat{Y}_{n} \hspace{0.2cm} = \widehat{Y}_{n}^{'} (1 + \tfrac{1}{4} \sigma_{n}\sigma_{n}^{\top}\dt) -\sigma_{n} v(\widehat{X}_n, \widehat{Y}_{n}^{'}, (n+\tfrac{1}{2}) \dt)  \tfrac{\dt}{2} + \sigma_{n} \sqrt{\tfrac{\dt}{2}} \xi_n^{(1)},
\end{align}
with $(\widehat{X}_{n}, \widehat{Y}_{n}^{'}) = \Phi^{-1}(\widehat{X}_{n+1}, \widehat{Y}_{n}^{'''})$,
and gives the following backward transition densties
\begin{align}
\begin{split}
    \cev{p}_{{n|n+1}} = & \mathcal{N}\left(\widehat{Y}_{n}\Big|\widehat{Y}_{n}^{'} (1 + \tfrac{1}{4}\sigma_{n}\sigma_{n}^{\top}\dt) -\sigma_{n} v(\widehat{X}_n, \widehat{Y}_{n}^{'}, (n+\tfrac{1}{2}) \dt) \tfrac{\dt}{2}, \tfrac{1}{2}\sigma_{n}\sigma_{n}^{\top}\dt \right) \\
    & \times \delta_{\Phi^{-1}(\widehat{X}_{n+1}, \widehat{Y}_{n}^{'''})}(\widehat{X}_{n}, \widehat{Y}_{n}^{'}) \\
    & \times \mathcal{N}\left(\widehat{Y}_{n}^{'''}\Big|\widehat{Y}_{n+1} (1 + \tfrac{1}{4}\sigma_{n}\sigma_{n}^{\top}\dt) -\sigma_{n} v(\widehat{Z}_{n+1}, (n+1)\dt) \tfrac{\dt}{2}, \tfrac{1}{2}\sigma_{n}\sigma_{n}^{\top}\dt \right),
\end{split}
\end{align}
resulting in the following ratio between forward and backward transitions
\begin{align*}
    \frac{\vec{p}_{{n+1|n}}}{\cev{p}_{{n|n+1}}} = & 
     \frac{\mathcal{N}\left(\widehat{Y}_{n+1}\Big|\widehat{Y}_{n}^{'''} (1 - \tfrac{1}{4}\sigma_{n}\sigma_{n}^{\top}\dt) +\sigma_{n} u(\widehat{X}_{n+1}, \widehat{Y}_{n}^{'''}, (n+\tfrac{1}{2}) \dt) \tfrac{\dt}{2}, \tfrac{1}{2}\sigma_{n}\sigma_{n}^{\top}\dt \right)}{\mathcal{N}\left(\widehat{Y}_{n}^{'''}\Big|\widehat{Y}_{n+1} (1 + \tfrac{1}{4}\sigma_{n}\sigma_{n}^{\top}\dt) -\sigma_{n} v(\widehat{Z}_{n+1}, (n+1)\dt) \tfrac{\dt}{2}, \tfrac{1}{2}\sigma_{n}\sigma_{n}^{\top}\dt \right)}
    \\
    & \times  \frac{\mathcal{N}\left(\widehat{Y}_{n}^{'}\Big|\widehat{Y}_{n} (1 - \tfrac{1}{4}\sigma_{n}\sigma_{n}^{\top}\dt) +\sigma_{n} u(\widehat{Z}_{n}, n \dt) \tfrac{\dt}{2}, \tfrac{1}{2}\sigma_{n}\sigma_{n}^{\top}\dt \right)}{\mathcal{N}\left(\widehat{Y}_{n}\Big|\widehat{Y}_{n}^{'} (1 + \tfrac{1}{4}\sigma_{n}\sigma_{n}^{\top}\dt) -\sigma_{n} v(\widehat{X}_n, \widehat{Y}_{n}^{'}, (n+\tfrac{1}{2}) \dt) \tfrac{\dt}{2}, \tfrac{1}{2}\sigma_{n}\sigma_{n}^{\top}\dt \right)}.
\end{align*}

\subsection{Underdamped version of previous diffusion-based sampling methods}
\label{app: special underdamped diffusion samplers}

In this section we outline how our framework in~\Cref{sec: general framework} includes previous diffusion-based sampling methods. First, we note that setting the drift $\widetilde{f}$ and controls $u$ and $v$ in~\eqref{eq: sde split} to specific values recovers underdamped methods of ULA, MCD, and CMCD, see~\Cref{table:categoriazation}. Moreover, we can also introduce reference processes with controls $\widetilde{u}$ and $\widetilde{v}$ that satisfy
\begin{equation}
    \frac{\dd \vec{\mathbbm{P}}^{\widetilde{u}, {\widetilde{\pi}}}}{\dd\cev{\mathbbm{P}}^{\widetilde{v},{\widetilde{\tau}}}} \equiv 1,
\end{equation}
where $\widetilde{\pi}$ and $\widetilde{\tau}$ are known reference distributions. In other words, this assumes having knowledge of a perfect time-reversal for specific controls $\widetilde{u}$, $\widetilde{v}$ and marginals $\widetilde{\pi}, \widetilde{\tau}$ (cf. Section 3.3 in \cite{richter2023improved}). We remark that these processes take a role similar to the Brownian motion used in the proof of \Cref{prop: RND}. In particular, by applying~\Cref{prop: RND} twice, we obtain that
\begin{align*}
   \log \frac{\dd \vec{\mathbbm{P}}^{u,{\pi}}}{\dd \cev{\mathbbm{P}}^{v, {\tau}}}(Z) &= \log \frac{\dd \vec{\mathbbm{P}}^{u,{\pi}}}{\dd \cev{\mathbbm{P}}^{v, {\tau}}}(Z) - \log \frac{\dd \vec{\mathbbm{P}}^{\widetilde{u}, {\widetilde{\pi}}}}{\dd\cev{\mathbbm{P}}^{\widetilde{v},{\widetilde{\tau}}}}(Z) 
   \\&=\log \frac{\pi(Z_0)}{\widetilde{\pi}(Z_0)} - \log \frac{\tau(Z_T)}{\widetilde{\tau}(Z_T)} \\
   &\quad + \frac{1}{2} \int_0^T \big((v - \widetilde{v} )\cdot (2 \eta^+ f+  v + \widetilde{v} ) - (u - \widetilde{u} )\cdot (2 \eta^+ f+  u + \widetilde{u} )\big)(Z_s, s) \, \dd s \\
        & \quad +\int_0^T (u - \widetilde{u} )(Z_s, s) \cdot \eta^+(s) \, \vec{\dd} Z_s - \int_0^T (v - \widetilde{v})(Z_s, s) \cdot \eta^+(s) \, \cev{\dd} Z_s.
\end{align*}
Several previous methods, such as versions of PIS and DDS, can be recovered by fixing $v$ and using the choices $\widetilde{v}=v$ as well as $\widetilde{\pi}=\pi$, which significantly simplifies the above expression~\citep{vargas2023transport,richter2023improved}.

\subsection{Further computational details}
\label{app: computational details}

In this section we provide additional computational details.

\subsubsection{Algorithm}

In \Cref{alg:underdamped sampler} we display the training process of our underdamped diffusion bridge sampler.

\subsubsection{SDE preconditioning} \label{app: sde precond}
Here, we introduce a preconditioning technique that modifies the SDEs to improve numerical behavior while preserving the optimal control for both overdamped and underdamped diffusions.

\paragraph{Overdamped diffusions.} Consider the controlled forward and backward pair of SDEs given by 
\begin{align}
\label{eq: forward process X}
    \dd X_s  &= \left(f + \sigma \, u \right)(X_s, s) \, \dd s + \sigma(s) \, \vec{\dd} W_s, && X_0 \sim p_{\text{prior}}, \\
\label{eq: backward process X}
    \dd X_s &= \left(f + \sigma \, v \right)(X_s, s) \, \dd s +  \sigma(s) \, \cev{\dd} W_s, && X_T \sim p_{\text{target}},
\end{align}
with forward and backward Brownian motion $\vec{\dd} W_s$ and $\cev{\dd} W_s$ , $f \in C(\R^d \times [0, T], \R^d)$, diffusion coefficient $\sigma \in C([0, T], \R^{d \times d})$, control functions $u, v \in C(\R^d \times [0, T], \R^d)$ and path measures $\vec{\P}^{u,p_0}$ and $\cev{\P}^{v,p_T}$ corresponding to \eqref{eq: forward process X} and \eqref{eq: backward process X}, respectively, with $p_0 \coloneqq p_{\text{prior}}$ and $p_T \coloneqq p_{\text{target}}$.  Note that for denoising diffusion sampler such as DIS, we have\footnote{Please note that the sign differs from most existing literature as the process initialized at the target starts at $t=T$ instead of $t=0$.} $f(x,\cdot) = 2\sigma\sigma^{\top}x$ and $v=0$, i.e., an Ornstein-Uhlenbeck (OU) process. While such a choice for $f$ ensures for a suitable $\sigma$ that $\cev{\P}^{v,p_T}_0\approx\mathcal{N}(0,I)$, it may lead to a poor initialization for the path measure $\vec{\P}^{u,p_0}$. Similarly, for DBS it helps significantly to choose $f = \nabla \log p_{\text{target}}$ or $f = \nabla \log\nu$ where $\nu(x,s) \propto p_{\text{prior}}^{1-\beta(s)}(x) p_{\text{target}}^{\beta(s)}(x)$; see \Cref{sec:further_exp} to obtain a good initialization for $\vec{\P}^{u,p_0}$. However, this may again lead to a poor initialization for $\cev{\P}^{v,p_T}$. We can alleviate these problems by preconditioning of the SDEs: For DIS, we set $\sigma u = \sigma \widetilde u - 2 f$ with $f(x,\cdot) = 2\sigma\sigma^{\top}x$, resulting in the forward SDE
\begin{align}
\label{eq: preconditioned forward process X}
    \dd X_s  &= \left(-f + \sigma \, \widetilde u \right)(X_s, s) \, \dd s + \sigma(s) \, \vec{\dd} W_s, && X_0 \sim p_{\text{prior}}.
\end{align}
Note that when initializing $\widetilde u =0$, we have $\vec{\P}^{\widetilde u,p_0}_t = \mathcal{N}(0,I)$ for all $t\in[0,T]$ which may lead to more stable training.
Similarly, for ULA, MCD, and DBS, we set $\sigma v = \sigma \widetilde v - 2f$ to obtain the backward SDE 
\begin{align}
\label{eq: preconditioned backward process X}
    \dd X_s &= \left(-f + \sigma \, \widetilde v \right)(X_s, s) \, \dd s +  \sigma(s) \, \cev{\dd} W_s, && X_T \sim p_{\text{target}},
\end{align}
where $f = \nabla \log\nu$ for ULA and MCD. Moreover, DBS uses preconditioning for all choices of $f$ discussed in \Cref{sec:further_exp}.
The impact of preconditioning is illustrated in \Cref{fig:preconditioning_overdamped}. A few remarks are in order.

\begin{remark}[Preconditioning for CMCD]
\label{rem:preconditioned cmcd}
The controlled Monte Carlo diffusion sampler (CMCD, \cite{vargas2023transport}) prescribes a path of densities $(\nu(\cdot, t))_{t\in[0,T]}$ and uses $f=\tfrac{1}{2}\sigma\sigma^{\top}\nabla\log\nu$. Therefore, Nelsons relation (see \Cref{lem:nelson}) yields $u-v=\sigma^{\top}\nabla\log\nu$ resulting in the pair of SDEs
\begin{align}
\label{eq: CMCD forward process X}
    \dd X_s  &= \left(\tfrac{1}{2}\sigma\sigma^{\top}\nabla\log\nu + \sigma \, u \right)(X_s, s) \, \dd s + \sigma(s) \, \vec{\dd} W_s, && X_0 \sim p_{\text{prior}}, \\
\label{eq: CMCD backward process X}
    \dd X_s &= \left(-\tfrac{1}{2}\sigma\sigma^{\top}\nabla\log\nu + \sigma \, u \right)(X_s, s) \, \dd s +  \sigma(s) \, \cev{\dd} W_s, && X_T \sim p_{\text{target}},
\end{align}
by replacing $v= u-\sigma^{\top}\nabla\log\nu$. As a consequence, CMCD does not require preconditioning as the sign flip of $f$ naturally occurs due to Nelsons relation.
\end{remark}

\begin{remark}[Preconditioning for DIS]
\label{rem:preconditioned dis}
The time-reversed diffusion sampler (DIS, \cite{berner2022optimal}) and related methods \citep{zhang2021path, vargas2023denoising} incorporate $\nabla \log p_{\text{target}}$ (or $\nabla \log \nu$) into the SDE by treating it as part of the control, i.e.,
\begin{equation}
    u(x,s) = u_1(x,s) + u_2(s) \nabla \log \rho_{\text{target}}(x).
\end{equation}
However, we found that this approach can lead to worse numerical behavior. As a result, we opted not to use this form of preconditioning.
\end{remark}

%%%%%%%%%%%%%%%%%%%%%%%%%%%%%%%%%%%%%%%%%%%%%%%%%%%%%%%%%%%%%%%%%%%%%%%
\begin{figure*}[t!]
        \centering
        \begin{minipage}[h!]{.9\textwidth}
        \centering
        \includegraphics[width=\textwidth]{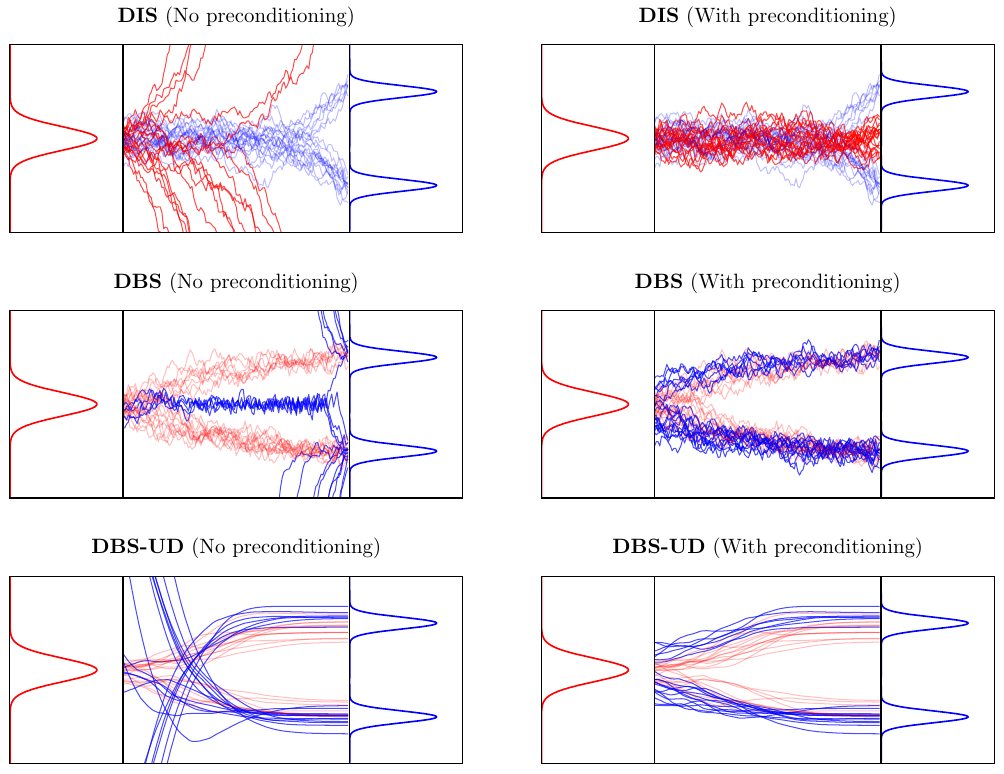}
        \end{minipage}
        \caption[ ]
        {Illustration of the effect of preconditioning as explained in \Cref{app: sde precond} at initialization, i.e. with $u,v=0$. The red trajectories start from the prior distribution (uni-modal Gaussian distribution) and are integrated forward in time (from left to right), whereas the blue trajectories start at the target (bi-modal distribution) and move backward in time (from right to left). The non-transparent trajectories correspond to the SDE that is affected by preconditioning.
        }
        \label{fig:preconditioning_overdamped}
    \end{figure*}
%%%%%%%%%%%%%%%%%%%%%%%%%%%%%%%%%%%%%%%%%%%%%%%%%%%%%%%%%%%%%%%%%%%%%%%

\paragraph{Underdamped diffusions.} 
We consider the pair of SDEs given by \eqref{eq: forward process Z} and \eqref{eq: backward process Z} where the drift is chosen as in \eqref{eq: underdamped f}, i.e.,
\begin{equation}
\label{eq: underdamped f2}
    f(x, y, s) = \left(y, \widetilde{f}(x, s) - \tfrac{1}{2} \sigma\sigma^{\top}(s) y\right)^\top.
\end{equation}
In a similar fashion to the overdamped case, we can precondition the underdamped SDEs: Assume for now that $\widetilde f, u=0$. Then, \eqref{eq: underdamped f2} results in an OU process for $Y$ which is desirable as both initial and terminal density of the velocity are Gaussian, see \eqref{eq:pi_tau_underdamped}. However, the sign flip when performing the time-reversal may again lead to a bad initialization for the path measure $\cev{\P}^{v, \tau}$. We therefore transform the control in the backward process \eqref{eq: backward process Z} as $\eta v(\cdot,y,\cdot) = \eta \widetilde v(\cdot,y,\cdot) + \sigma \sigma^{\top} y$, resulting in the backward SDE
\begin{align}
\label{eq: preconditioned backward process Y}
    \dd Y_s &= \left(\widetilde f + \eta \, \widetilde v \right)(Y_s, s) \, \dd s + \tfrac{1}{2}\sigma(s)\sigma^{\top}(s)Y_s \, \dd s +\eta(s) \, \cev{\dd} W_s, && Y_T \sim \mathcal{N}(0,\operatorname{Id}).
\end{align}
The impact of preconditioning for the underdamped setting is illustrated in \Cref{fig:preconditioning_overdamped}.

{\setlength{\textfloatsep}{4pt}
\begin{algorithm}[t!]
\caption{Training of underdamped diffusion bridge sampler}
\label{alg:underdamped sampler}
\small
\begin{algorithmic}
\Require \Comment{See~\Cref{app: computational details} for details}\\
\begin{itemize}
    \item \textbf{model:} neural networks $u_\theta, v_\gamma$ with initial parameters $\theta^{(0)}, \gamma^{(0)}$
    \item \textbf{fixed hyperparameters:} number of gradient steps $K$, number of discretization steps $N$, batch size $m$, optimizer method $\texttt{step}$, integrator method $\texttt{integrate}$ 
    \item \textbf{learned hyperparameters:} prior distribution 
    $p_{\text{prior}}= \mathcal{N}(\zeta_\mu, \operatorname{diag}(\operatorname{softplus}(\eta_\Sigma)))$, diffusion and mass matrices $\sigma=\operatorname{diag}(\operatorname{softplus}(\eta_\sigma))$ and $M=\operatorname{diag}(\operatorname{softplus}(\eta_M))$, and terminal time $T=N\operatorname{softplus}(\eta_\Delta)$ with initial parameters $\eta_{\mu}^{(0)},\eta_{\Sigma}^{(0)},\eta_{\sigma}^{(0)}, \eta^{(0)}_{M},\eta^{(0)}_{\Delta}$
\end{itemize}
\vspace{0.1em}
%\State
\State $\Theta^{(0)} = \{\theta^{(0)}, \gamma^{(0)}, \eta_{\mu}^{(0)},\eta_{\Sigma}^{(0)},\eta_{\sigma}^{(0)}, \eta^{(0)}_{M},\eta^{(0)}_{\Delta}\}$, \ $\pi=p_{\mathrm{prior}} \otimes \mathcal{N}(0,M)$, \ $\widetilde{\tau}=\rho_{\mathrm{target}} \otimes \mathcal{N}(0,M)$
\For{$k\gets 0,\dots, K-1$}
\For{$i \gets 1,\dots,m$} \Comment{Approximate cost (batched in practice)}
\State $\widehat{Z}_{0} \sim \pi$ \Comment{See~\eqref{eq:pi_tau_underdamped}}
\State $\text{rnd}_{i,0} \gets \log  \pi(\widehat{Z}_{0})$
\For{$n \gets 0,\dots,N-1$}
\State $\widehat{Z}_{n+1} \gets \texttt{integrate}(\widehat{Z}_{n}, \Theta^{(k)})$ \Comment{See~\Cref{app:discretization}}
\State $\text{rnd}_{i,n+1} \gets \text{rnd}_{i,n} + \log \vec{p}_{n+1|n}(\widehat{Z}_{n+1} \big| \widehat{Z}_{n}) - \log \cev{p}_{n|n+1}(\widehat{Z}_{n} \big| \widehat{Z}_{n+1})$ \Comment{See~\eqref{eq:discrete_rnd}}
\EndFor
\State $\text{rnd}_{i,N} \gets \text{rnd}_{i,N} - \log \widetilde{\tau}(\widehat{Z}_{N})$
\EndFor
\State $\widehat{\mathcal{L}}\gets \frac{1}{m} \sum_{i=1}^m \text{rnd}_{i,N}$ \Comment{Compute loss}
\State $\Theta^{(k+1)} \gets \operatorname{step}\left( \Theta^{(k)}, \nabla_\Theta \widehat{\mathcal{L}} \right)$  \Comment{Gradient descent}
\EndFor
\State
\Return optimized parameters $\Theta^{(K)}$
\end{algorithmic}
\end{algorithm}}
\subsubsection{Experimental setup}

Here, we provide further details on our experimental setup. Moreover, we provide an algorithmic description of the training of an underdamped diffusion sampler in \Cref{alg:underdamped sampler}.

\paragraph{General setting.} All experiments are conducted using the Jax library \citep{bradbury2021jax}. Our default experimental setup, unless specified otherwise, is as follows: We use a batch size of $2000$ (halved if memory-constrained) and train for $140k$ gradient steps to ensure approximate convergence. We use the Adam optimizer \citep{kingma2014adam}, gradient clipping with a value of $1$, and a learning rate scheduler that starts at $5 \times 10^{-3}$ and uses a cosine decay starting at $60$k gradient steps. We utilized 128 discretization steps and the EM and OBABO schemes to integrate the overdamped and underdamped Langevin equations, respectively. The control functions $u_{\theta}$ and $v_{\gamma}$ with parameters $\theta$ and $\gamma$, respectively, were parameterized as two-layer neural networks with 128 neurons.
Unlike \cite{zhang2021path}, we did not include the score of the target density as part of the parameterized control functions $u_{\theta}$ and $v_{\gamma}$. Inspired by \cite{nichol2021improved}, we applied a cosine-square scheduler for the discretization step size, i.e., $\dt = a\cos^2\left(\frac{\pi}{2}\frac{n}{N}\right)$ at step $n$, where $a: [0, \infty) \rightarrow (0, \infty)$ is learned. Furthermore, we use preconditioning as explained in \Cref{app: sde precond}.\lowpriotodo[inline]{JB: the RHS depends on $n$? Should we write "at step $n$" and $\delta_n$? Note that this is currently also not reflected in our algorithm.} The diffusion matrix $\sigma$ and the mass matrix $M$ were parameterized as diagonal matrices, and we learned the parameters $\mu$ and $\Sigma$ for the prior distribution $p_{\text{prior}} = \mathcal{N}(\mu, \Sigma)$, with $\Sigma$ also set as a diagonal matrix. We enforced non-negativity of $a$ and made $\sigma$, $M$, and $\Sigma$ positive semidefinite via an element-wise softplus transformation.

For the methods that use geometric annealing (see \Cref{table:categoriazation}), that is,
$\nu(x,s) \propto p_{\text{prior}}^{1-\beta(s)}(x) p_{\text{target}}^{\beta(s)}(x)$, where $\beta \colon [0, T] \to [0, 1]$ is a monotonically increasing function satisfying $\beta(0) = 0$ and $\beta(T) = 1$, we additionally learn the annealing schedule $\beta$. Similar to prior works \citep{doucet2022annealed}, we parameterize an increasing sequence of $N$ steps using
unconstrained parameters $b(s)$. We map these to our annealing
schedule with
\begin{equation}
    \beta(n \Delta) = \frac{\sum_{n' \leq n}\operatorname{softplus}(b(n'\Delta))}{\sum_{n = 1}^N\operatorname{softplus}(b(n\Delta))},
\end{equation}
where $\operatorname{softplus}$ ensures non-negativity. Further, we fix $\beta(0) = 0$ and $\beta(T) = 1$, ensuring $\beta(n' \Delta) \leq \beta(n \Delta)$ when $n' \leq n$. We initialized $b$ such that $\beta$ is a linear interpolation between 0 and 1.
%\todo[inline]{JB: the following sentence is now superfluous given the SDE preconditioning section?} Note that if not otherwise specified, we use $ \nabla_x \log\nu$ as $x$-component for the drift $\widetilde{f}$. 

Moreover, we initialized $\sigma = M = \Sigma = \operatorname{Id}$ and $\mu = 0$ for all experiments. In the case of the \textit{Brownian}, \textit{LGCP}, and \textit{ManyWell} tasks, we set $a = 0.1$, while for the remaining benchmark problems, we chose $a = 0.01$ to avoid numerical instabilities encountered with $a = 0.1$.

%%%%%%%%%%%%%%%%%%%%%%%%%%%%%%%%%%%%%%%%%%%%%%%%%%%%%%%%%%%%%%%%%%%%%%%%%%%%%%%%%%%%%%%%
\begin{table*}[t!]
    \caption{
    Comparison of different diffusion-based sampling methods based on $\widetilde{f}$, $u$, $v$, $\nu$ as defined in the text.
    \lowpriotodo[inline]{Lorenz: $\nu$ seems only defined after the table is mentioned.}
}\begin{center}
\begin{small}
%
% \resizebox{\textwidth}{!}{%
\renewcommand{\arraystretch}{1.2}
\begin{tabular}{l|cccc}
\toprule
\textbf{Method}
& $\widetilde{f}$
& $u$
& $v$
\\
\midrule
ULA
& $\sigma \sigma^{\top}\nabla_x \log \nu$
& 0
& 0
\\
%%%%%%%%%%%%%%%%%%%%%%%%%%%%%%%%%%%%%%%%%%%%%%%%%%%%%%%%%%%%%%%
\midrule
MCD
& $\sigma \sigma^{\top}\nabla_x \log \nu$
& 0
& learned
\\
%%%%%%%%%%%%%%%%%%%%%%%%%%%%%%%%%%%%%%%%%%%%%%%%%%%%%%%%%%%%%%%
\midrule
CMCD
& $\frac{1}{2}\sigma \sigma^\top \nabla_x \log \nu$
& learned
& $\sigma^\top \nabla_x \log \nu - u$
\\
%%%%%%%%%%%%%%%%%%%%%%%%%%%%%%%%%%%%%%%%%%%%%%%%%%%%%%%%%%%%%%%
\midrule
DIS
& $-\sigma \sigma^{\top} \nabla_x \log p_{\mathrm{prior}}$
& learned
& 0
\\
%%%%%%%%%%%%%%%%%%%%%%%%%%%%%%%%%%%%%%%%%%%%%%%%%%%%%%%%%%%%%%%
% \midrule
% PIS
% & 0
% & -
% & -
% & -
% \\
% %%%%%%%%%%%%%%%%%%%%%%%%%%%%%%%%%%%%%%%%%%%%%%%%%%%%%%%%%%%%%%%
% \midrule
% DDS
% & -
% & -
% & -
% & -
% \\
% %%%%%%%%%%%%%%%%%%%%%%%%%%%%%%%%%%%%%%%%%%%%%%%%%%%%%%%%%%%%%%%
\midrule
DBS
& arbitrary
& learned
& learned
\\
%%%%%%%%%%%%%%%%%%%%%%%%%%%%%%%%%%%%%%%%%%%%%%%%%%%%%%%%%%%%%%%
\bottomrule
\end{tabular}
% }
%
\end{small}
\end{center}
\label{table:categoriazation}
\end{table*}
%%%%%%%%%%%%%%%%%%%%%%%%%%%%%%%%%%%%%%%%%%%%%%%%%%%%%%%%%%%%%%%%%%%%%%%%%%%%%

\paragraph{Evaluation protocol and model selection.}
We follow the evaluation protocol of prior work \citep{blessing2024beyond} and evaluate all performance criteria 100 times during training, using 2000 samples for each evaluation. To smooth out short-term fluctuations and to obtain more robust results within a single run, we apply a running average with a window of 5 evaluations. We conduct each experiment using four different random seeds and average the best results of each run.

\paragraph{Benchmark problem details.} All benchmark problems, with the exception of \textit{ManyWell}, were taken from the benchmark suite of \cite{blessing2024beyond}. In their work, the authors used an uninformative prior for the parameters in the Bayesian logistic regression models for the \textit{Credit} and \textit{Cancer} tasks, which frequently caused numerical instabilities. To maintain the challenge of the tasks while ensuring stability, we opted for a Gaussian prior with zero mean and variance of $100$. For more detailed descriptions of the tasks, we refer readers to \cite{blessing2024beyond}.

The \textit{ManyWell} target involves a $d$-dimensional \emph{double well} potential, corresponding to the (unnormalized) density
\[
\rho_{\text{target}}(x) = \exp\left(-\sum_{i=1}^m(x_i^2 - \delta)^2 - \frac{1}{2}\sum_{i=m+1}^{d} x_i^2\right),
\]
with $m \in \mathbb{N}$ representing the number of combined double wells (resulting in $2^m$ modes), and a separation parameter $\delta \in (0, \infty)$ (see also~\citet{wu2020stochastic}). In our experiments, we set $d=50$, $m=5$ and $\delta=2$. Since $\rho_{\text{target}}$ factorizes across dimensions, we can compute a reference solution for $\log \mathcal{Z}$ via numerical integration, as described in \cite{midgley2022flow}.

\subsubsection{Evaluation criteria}
\label{sec:eval_criteria}
Here, we provide further information on how our evaluation criteria are computed. To evaluate our metrics, we consider $n=2 \times 10^3$ samples $(x^{(i)})_{i=1}^n$.

\paragraph{Effective sample size (ESS).} We compute the (normalized) ESS as
 \begin{equation}
     \operatorname{ESS} \coloneqq \frac{\left(\sum_{i=1}^n w^{(i)}\right)^2}{n \sum_{i=1}^n \left(w^{(i)}\right)^2},
 \end{equation}
where $(w^{(i)})_{i=1}^n$ are the unnormalized importance weights of the samples $(Z^{(i)})_{i=1}^n$ in path space given as 
\begin{equation}
   w^{(i)} \coloneqq \mathcal{Z} \frac{\dd \cev{\mathbbm{P}}^{v, {\tau}}}{\dd \vec{\mathbbm{P}}^{u,{\pi}}}(Z^{(i)}).
\end{equation}
Note that the dependence on $\mathcal{Z}$ vanishes when replacing  $\frac{\dd \cev{\mathbbm{P}}^{v, {\tau}}}{\dd \vec{\mathbbm{P}}^{u,{\pi}}}$ with the expression in \Cref{prop: RND}.

\paragraph{Sinkhorn distance.} We estimate the Sinkhorn distance $\mathcal{W}^2_\gamma$~\citep{cuturi2013sinkhorn}, i.e., an entropy regularized optimal transport distance between a set of samples from the model and target using the Jax \texttt{ott} library \citep{cuturi2022optimal}.

\paragraph{Log-normalizing constant.} For the computation of the log-normalizing constant $\log \mathcal{Z}$ in the general diffusion bridge setting, we note that for any $u, v \in \mathcal{U}$ it holds that
\begin{equation}
    \E_{Z \sim \vec{\mathbbm{P}}^{u,{\pi}}}\left[ \frac{\dd \cev{\mathbbm{P}}^{v, {\tau}}}{\dd \vec{\mathbbm{P}}^{u,{\pi}}}(Z) \right] = 1.
\end{equation}
Together with \Cref{prop: RND}, this shows that 
\begin{equation}
\label{eq:log_norm_rw}
    \log \mathcal{Z}  =   \E_{Z \sim \vec{\mathbbm{P}}^{u,{\pi}}}\left[\log \frac{\dd \cev{\mathbbm{P}}_{\cdot|T}^{v, {\tau}}}{\dd \vec{\mathbbm{P}}_{\cdot|0}^{u,{\pi}}}(Z) + \log\frac{\widetilde{\tau}(Z_T)}{\pi(Z_0)}\right],
\end{equation}
where $\widetilde{\tau}(Z_T) = \rho_{\text{target}}(X_T)\mathcal{N}(0,\operatorname{Id})$ and $\vec{\mathbbm{P}}_{\cdot|0}^{u,{\pi}}$ denotes the path space measure of the process $Z$ with initial condition $Z_0=\widehat{Z}_0 \in \mathbb{R}^{2d}$ (analogously for $\cev{\mathbbm{P}}_{\cdot|T}^{v, {\tau}}$), see e.g. \cite{leonard2013survey}.

If $u=u^*$ and $v = v^*$, the expression in the expectation is almost surely constant, which implies
\begin{equation}
\label{eq: log Z at optimal u, v}
    \log \mathcal{Z} = \log \frac{\dd \cev{\mathbbm{P}}_{\cdot|T}^{v^*, {\tau}}}{\dd \vec{\mathbbm{P}}_{\cdot|0}^{u^*,{\pi}}}(Z) + \log \frac{\widetilde{\tau}(Z_T)}{\pi(Z_0)}
\end{equation}
If we only have approximations of $u^*$ and $v^*$, Jensen's inequality shows that the right-hand side in \eqref{eq: log Z at optimal u, v} yields a lower bound to $\log \mathcal{Z}$. For other methods, the log-normalizing constants can be computed analogously, by replacing $u,v$ accordingly, see e.g.~\citet{berner2022optimal} for DIS. Our experiments use the lower bound as an estimator for $\log \mathcal{Z}$ when labeled with ``LB''.

\subsubsection{Further experiments and comparisons}
\label{sec:further_exp}

\paragraph{Comparison with competing methods.} We extend our evaluation by comparing DBS against several state-of-the-art techniques, including \textit{Gaussian Mixture Model Variational Inference} (GMMVI) \citep{arenz2022unified}, \textit{Sequential Monte Carlo} (SMC) \citep{del2006sequential}, \textit{Continual Repeated Annealed Flow Transport} (CRAFT) \citep{arbel2021annealed, matthews2022continual}, and \textit{Flow Annealed Importance Sampling Bootstrap} (FAB) \citep{midgley2022flow}. The results, presented in Table \ref{table:elbo_comparison}, are primarily drawn from \cite{blessing2024beyond}, where hyperparameters were carefully optimized. Since our experimental setup differs for the \textit{Credit} and \textit{Cancer} tasks (detailed in Section \ref{app: computational details}), we adhered to the tuning recommendations provided by \cite{blessing2024beyond}. Across most tasks, we observe that the underdamped variants of DBS and CMCD consistently yield similar or tighter bounds on $\log \mathcal{Z}$ compared to the competing methods, without the necessity for hyperparameter tuning. Notably, the underdamped version of DBS consistently performs well across \textit{all} tasks and demonstrates robustness, as evidenced by the low variance between different random seeds.

%%%%%%%%%%%%%%%%%%%%%%%%%%%%%%%%%%%%%%%%%%%%%%%%%%%%%%%%%%%%%%%%%%%%%%%%%%%%%%%%%%%%%%%%
\begin{table*}[t!]
    \caption{Results for lower bounds on $\log \mathcal{Z}$ for various real-world benchmark problems. Higher values indicate better performance. The best results are highlighted in bold. \textcolor{blue!50}{Blue} shading indicates that the method uses underdamped Langevin dynamics.  \textcolor{red!50}{Red} shading indicate competing state-of-the-art methods.
    }
\begin{center}
\begin{small}
\resizebox{\textwidth}{!}{%
\renewcommand{\arraystretch}{1.2}
\begin{tabular}{l|r|r|r|r|r|r}
\toprule
\textbf{Method}
& {\textbf{Credit}} 
& {\textbf{Seeds}} 
& {\textbf{Cancer}} 
& {\textbf{Brownian}} 
& {\textbf{Ionosphere}} 
& {\textbf{Sonar}} 
\\ 
\midrule
\multirow{2}{*}{DBS}& 
$-585.524 \scriptstyle \pm 0.414$
& $-73.437 \scriptstyle \pm 0.001$
& $-83.395 \scriptstyle \pm 4.184$
& $1.081 \scriptstyle \pm 0.004$
& $-111.673 \scriptstyle \pm 0.002$
& $-108.595 \scriptstyle \pm 0.006$
\\
& \cellcolor{blue!5} $-585.112 \scriptstyle \pm 0.001$
& \cellcolor{blue!5} $-73.423 \scriptstyle \pm 0.001$
& \cellcolor{blue!5} $\mathbf{-77.881 \scriptstyle \pm 0.014}$
& \cellcolor{blue!5} $\mathbf{1.136 \scriptstyle \pm 0.001}$
& \cellcolor{blue!5} $\mathbf{-111.636 \scriptstyle \pm 0.001}$
& \cellcolor{blue!5} $\mathbf{-108.458 \scriptstyle \pm 0.004}$
\\
\midrule
%%%%%%%%%%%%%%%%%%%%%%%%%%%%%%%%%%%%%%%%%%%%%%%%%%%%%%%%%%%%%%%%%%%%%%%%%%%%%%%%%%%%%%%%
GMMVI
& \cellcolor{red!5} $\mathbf{-585.098 \scriptstyle \pm 0.000}$ % Credit
& \cellcolor{red!5} $\mathbf{-73.415 \scriptstyle \pm 0.002}$ % Seeds
& \cellcolor{red!5} $-77.988 \scriptstyle \pm 0.054$ % Cancer
& \cellcolor{red!5} $1.092 \scriptstyle \pm 0.006$ % Brownian
& \cellcolor{red!5} $-111.832 \scriptstyle \pm 0.007$ % Ionosphere
& \cellcolor{red!5} $-108.726 \scriptstyle \pm 0.007$ % Sonar
\\
%%%%%%%%%%%%%%%%%%%%%%%%%%%%%%%%%%%%%%%%%%%%%%%%%%%%%%%%%%%%%%%%%%%%%%%%%%%%%%%%%%%%%%%%
SMC
& \cellcolor{red!5} $-698.403 \scriptstyle \pm 4.146$ % Credit
& \cellcolor{red!5} $-74.699 \scriptstyle \pm 0.100$ % Seeds
& \cellcolor{red!5} $-194.059 \scriptstyle \pm 0.613$ % Cancer
& \cellcolor{red!5} $-1.874 \scriptstyle \pm 0.622$ % Brownian
& \cellcolor{red!5} $-114.751 \scriptstyle \pm 0.238$ % Ionosphere
& \cellcolor{red!5} $-111.355 \scriptstyle \pm 1.177$ % Sonar
\\
%%%%%%%%%%%%%%%%%%%%%%%%%%%%%%%%%%%%%%%%%%%%%%%%%%%%%%%%%%%%%%%%%%%%%%%%%%%%%%%%%%%%%%%%
CRAFT
& \cellcolor{red!5} $-594.795 \scriptstyle \pm 0.411$ % Credit
& \cellcolor{red!5} $-73.793 \scriptstyle \pm 0.015$ % Seeds
& \cellcolor{red!5} $-95.737 \scriptstyle \pm 1.067$ % Cancer
& \cellcolor{red!5} $0.886 \scriptstyle \pm 0.053$ % Brownian
& \cellcolor{red!5} $-112.386 \scriptstyle \pm 0.182$ % Ionosphere
& \cellcolor{red!5} $-115.618 \scriptstyle \pm 1.316$ % Sonar
\\
%%%%%%%%%%%%%%%%%%%%%%%%%%%%%%%%%%%%%%%%%%%%%%%%%%%%%%%%%%%%%%%%%%%%%%%%%%%%%%%%%%%%%%%%
FAB
& \cellcolor{red!5}${{-585.102 \scriptstyle \pm 0.001}}$
& \cellcolor{red!5}${-73.418 \scriptstyle \pm 0.002}$ % Seeds
& \cellcolor{red!5}${-78.287 \scriptstyle \pm 0.835}$
& \cellcolor{red!5}${1.031 \scriptstyle \pm 0.010}$ % Brownian
& \cellcolor{red!5}${-111.678 \scriptstyle \pm 0.003}$ % Ionosphere
& \cellcolor{red!5}${-108.593 \scriptstyle \pm 0.008}$ % Sonar
\\
\bottomrule
\end{tabular}
}
\end{small}
\end{center}
\label{table:elbo_comparison}
\end{table*}
%%%%%%%%%%%%%%%%%%%%%%%%%%%%%%%%%%%%%%%%%%%%%%%%%%%%%%%%%%%%%%%%%%%%%%%%%%%%%

\paragraph{Choice of integrator.} To complement the results from Section \ref{sec: numerical experiments}, we conducted an ablation study evaluating the performance and runtime of different integrators for ULA, MCD, CMCD, and DIS. The results are presented in \Cref{fig:avg_integrator_all_methods,fig:wallclock}. Consistent with previous findings, the OBABO integrator delivers the best overall performance, with the exception of ULA. We hypothesize that in the case of ULA, simulating the controlled part (O) twice offers little advantage, as both the forward and backward processes are uncontrolled for this method.

%%%%%%%%%%%%%%%%%%%%%%%%%%%%%%%%%%%%%%%%%%%%%%%%%%%%%%%%%%%%%%%%%%%%%%%
\begin{figure*}[t!]
        \centering
        \begin{minipage}[h!]{.48\textwidth}
        \centering
        \includegraphics[width=\textwidth]{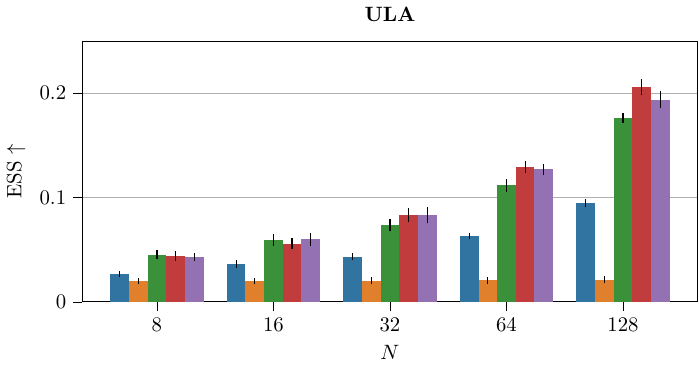}
        \end{minipage}
        \begin{minipage}[h!]{.48\textwidth}
        \centering
        \includegraphics[width=\textwidth]{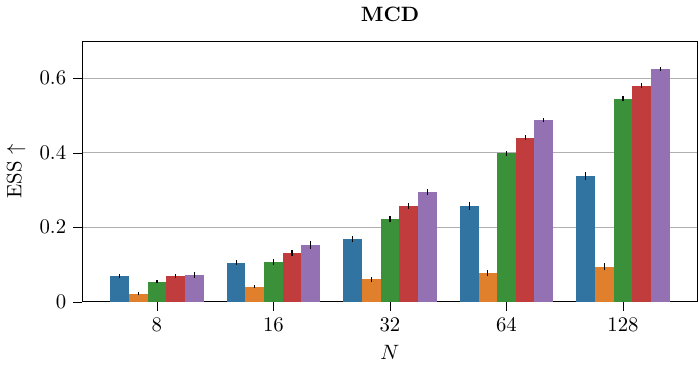}
        \end{minipage}
        \begin{minipage}[h!]{.48\textwidth}
        \centering
        \includegraphics[width=\textwidth]{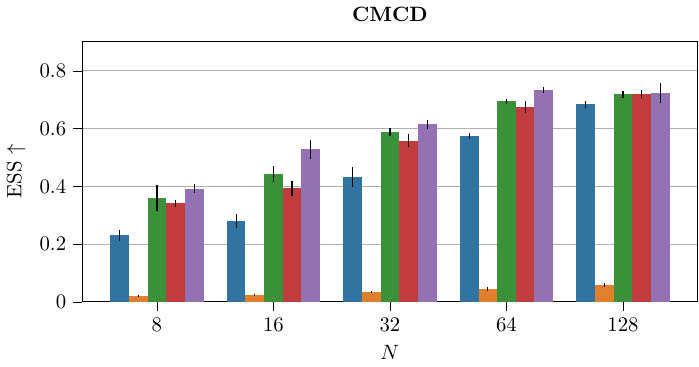}
        \end{minipage}
        \begin{minipage}[h!]{.48\textwidth}
        \centering
        \includegraphics[width=\textwidth]{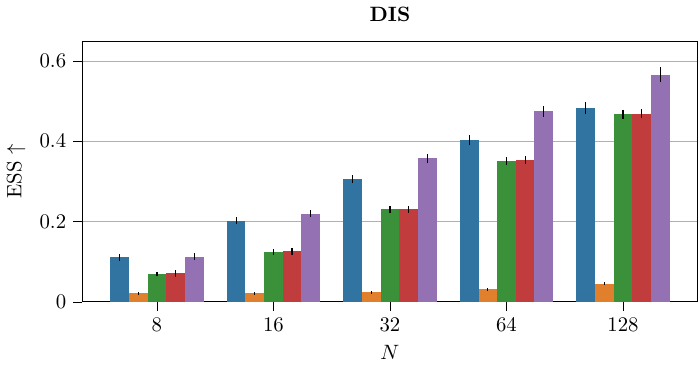}
        \end{minipage}
        \begin{minipage}[h!]{.8\textwidth}
        \centering
        \includegraphics[width=\textwidth]{figures/results/integrators/integrator_legend.pdf}
        \end{minipage}
        \caption[ ]
        {        
 Effective sample size (ESS) for various methods (ULA, MCD, CMCD, DIS) and different integration schemes, averaged across multiple benchmark problems and four seeds. 
 Integration schemes include Euler-Maruyama (EM) for over- (OD) and underdamped (OD) Langevin dynamics and various splitting schemes (OBAB, BAOAB, OBABO).
        }
        \label{fig:avg_integrator_all_methods}
    \end{figure*}
%%%%%%%%%%%%%%%%%%%%%%%%%%%%%%%%%%%%%%%%%%%%%%%%%%%%%%%%%%%%%%%%%%%%%%%

\begin{figure*}[h!]
        \centering
        \begin{minipage}[h!]{\textwidth}
            \centering
            \begin{minipage}[t!]{0.32\textwidth}
            \includegraphics[width=\textwidth]{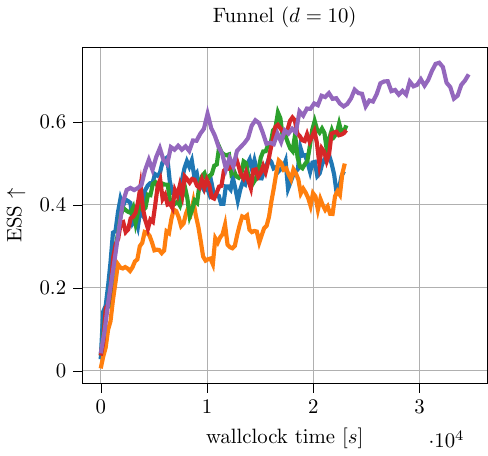}
            \end{minipage}
            \begin{minipage}[t!]{0.32\textwidth}
            \includegraphics[width=\textwidth]{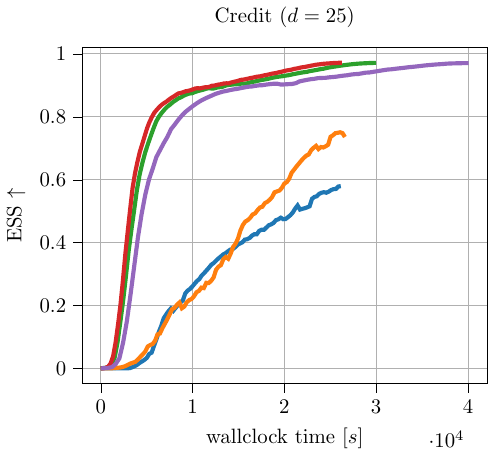}
            \end{minipage}
            \begin{minipage}[t!]{0.32\textwidth}
            \includegraphics[width=\textwidth]{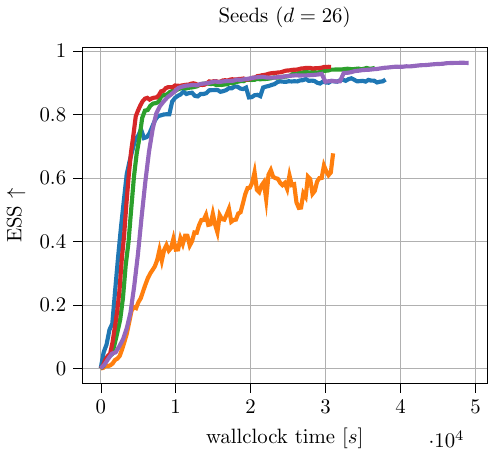}
            \end{minipage}
            \begin{minipage}[t!]{0.32\textwidth}
            \centering
            \includegraphics[width=\textwidth]{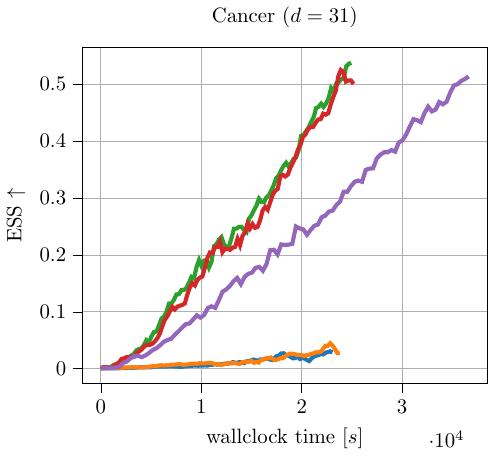}
            \end{minipage}
            %\begin{minipage}[t!]{0.32\textwidth}%
            %\includegraphics[width=\textwidth]{}
            %\end{minipage}%
            %\centering
            \begin{minipage}[t!]{0.32\textwidth}
            \includegraphics[width=\textwidth]{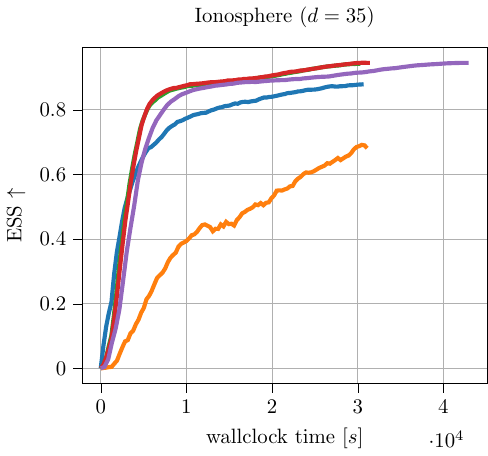}
            \end{minipage}
            %\begin{minipage}[t!]{0.33\textwidth}
            %\includegraphics[width=\textwidth]{}
            %\end{minipage}
            \begin{minipage}[t!]{0.32\textwidth}
            \includegraphics[width=\textwidth]{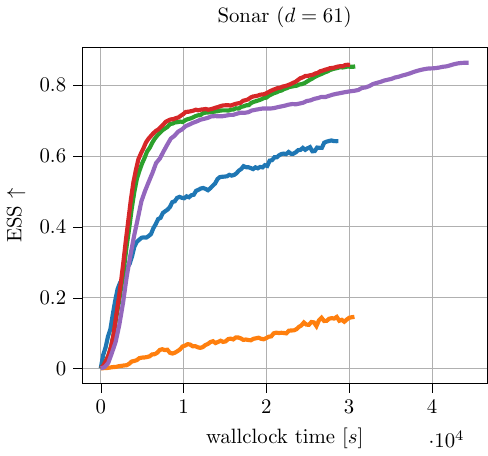}
            \end{minipage}
            \begin{minipage}[t!]{0.8\textwidth}
            \centering
            \includegraphics[width=0.7\textwidth]{figures/results/integrators/integrator_legend.pdf}
            \end{minipage}
        \end{minipage}
        \caption[ ]
        {
         Effective sample size (ESS) over wallclock time of the diffusion bridge sampler (DBS) with $128$ diffusion steps for different integration schemes, multiple benchmark problems, and four seeds. 
         Integration schemes include Euler-Maruyama (EM) for over- (OD) and underdamped (UD) Langevin dynamics and various splitting schemes (OBAB, BAOAB, OBABO). 
        }
        \label{fig:wallclock}
    \end{figure*}

\paragraph{Additional results for DBS.} We present further details regarding the results discussed in \Cref{sec: numerical experiments}. Specifically, we provide a breakdown of the performance of different integration schemes across all tasks in \Cref{fig:integrator} (ESS values) and \Cref{table:integrator table} ($\log \mathcal{Z}$ (LB) values). Overall, we observe a notable improvement in performance with (symmetric) splitting schemes compared to Euler-Maruyama discretization. However, as the number of discretization steps increases, the performance differences between OBAB, BAOAB, and OBABO become less pronounced. Interestingly, OBABO tends to yield substantial performance gains when the number of discretization steps is small. Furthermore, we examine the impact of parameter learning for $N=8$ discretization steps, with the results shown in \Cref{fig:avg_params_8}. Surprisingly, while learning either the terminal time $T$ or the parameters of the prior distribution yields modest improvements, learning both leads to a remarkable $5\times$ performance increase.

%%%%%%%%%%%%%%%%%%%%%%%%%%%%%%%%%%%%%%%%%%%%%%%%%%%%%%%%%%%%%%%%%%%%%%%
\begin{figure*}[t!]
        \centering
        \begin{minipage}[h!]{\textwidth}
            \centering
            \begin{minipage}[t!]{0.24\textwidth}
            \includegraphics[width=\textwidth]{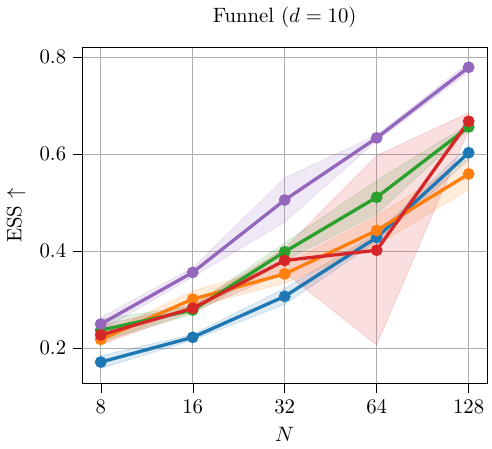}
            \end{minipage}
            \begin{minipage}[t!]{0.24\textwidth}
            \includegraphics[width=\textwidth]{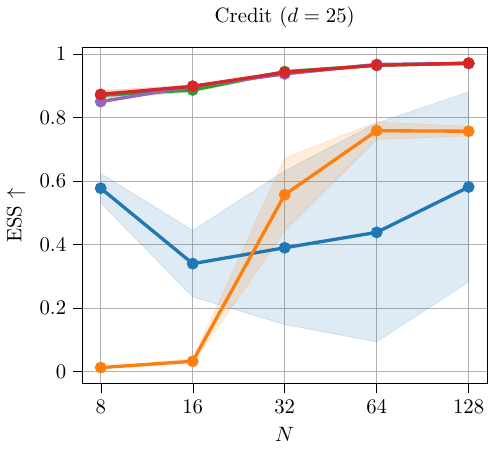}
            \end{minipage}
            \begin{minipage}[t!]{0.24\textwidth}
            \includegraphics[width=\textwidth]{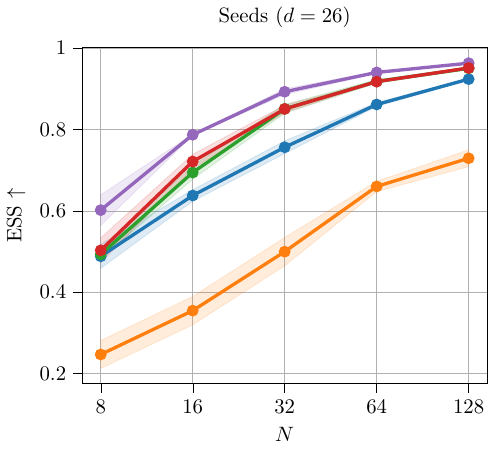}
            \end{minipage}
            \begin{minipage}[t!]{0.24\textwidth}
            \includegraphics[width=\textwidth]{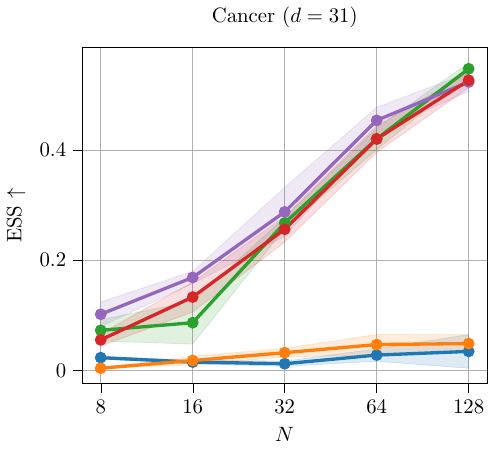}
            \end{minipage}
            \begin{minipage}[t!]{0.24\textwidth}
            \includegraphics[width=\textwidth]{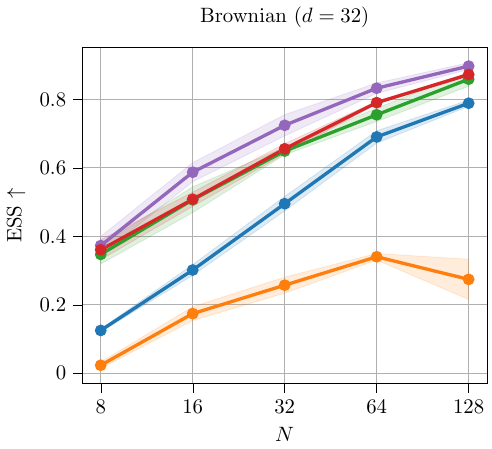}
            \end{minipage}
                \centering
            \begin{minipage}[t!]{0.24\textwidth}
            \includegraphics[width=\textwidth]{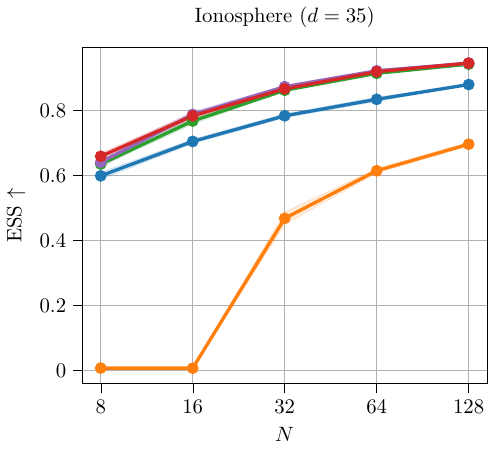}
            \end{minipage}
            \begin{minipage}[t!]{0.24\textwidth}
            \includegraphics[width=\textwidth]{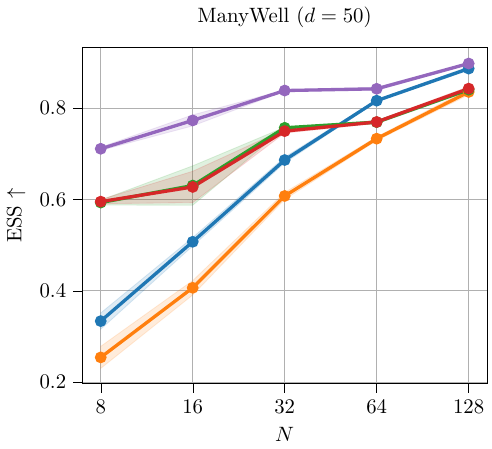}
            \end{minipage}
            \begin{minipage}[t!]{0.24\textwidth}
            \includegraphics[width=\textwidth]{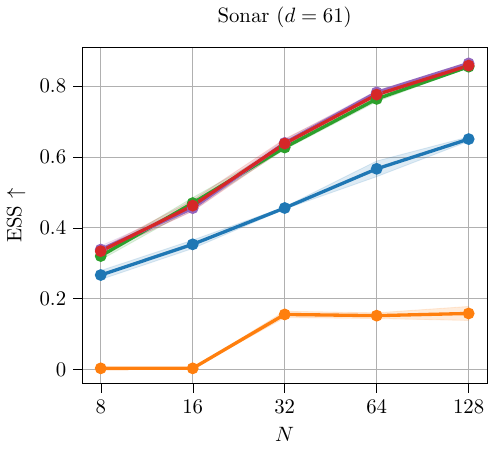}
            \end{minipage}
            \begin{minipage}[t!]{0.8\textwidth}
            \centering
            \includegraphics[width=0.7\textwidth]{figures/results/integrators/integrator_legend.pdf}
            \end{minipage}
        \end{minipage}
        \caption[ ]
        {
         Effective sample size (ESS) of the diffusion bridge sampler (DBS) for different integration schemes, multiple benchmark problems, and four seeds. 
         Integration schemes include Euler-Maruyama (EM) for over- (OD) and underdamped (UD) Langevin dynamics and various splitting schemes (OBAB, BAOAB, OBABO). 
        }
        \label{fig:integrator}
    \end{figure*}
%%%%%%%%%%%%%%%%%%%%%%%%%%%%%%%%%%%%%%%%%%%%%%%%%%%%%%%%%%%%%%%%%%%%%%%

%%%%%%%%%%%%%%%%%%%%%%%%%%%%%%%%%%%%%%%%%%%%%%%%%%%%%%%%%%%%%%%%%%%%%%%
 \begin{figure*}[t!]
 \centering
        \begin{minipage}[t!]{0.9\textwidth}
            \centering 
            \includegraphics[width=\textwidth]{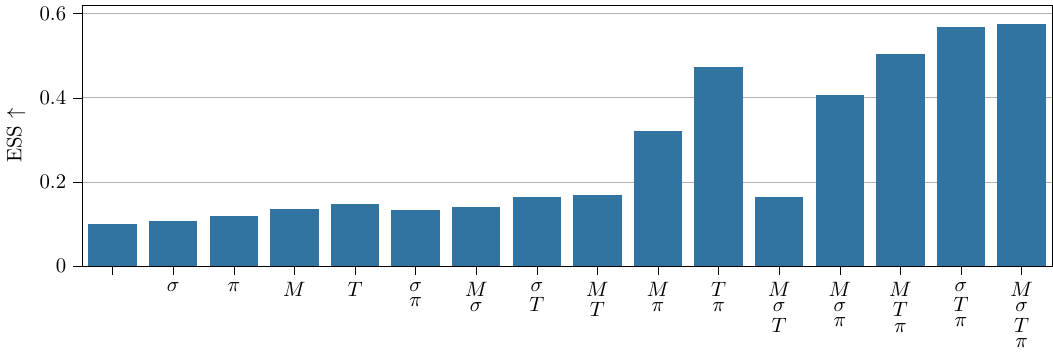}
        \end{minipage}
        \caption[ ]
        {Effective sample size (ESS) of the underdamped diffusion bridge sampler (DBS) for various combinations of learned parameters, averaged across multiple benchmark problems and four seeds with $N = 8$ discretization steps. Parameters include mass matrix $M$, diffusion matrix $\sigma$, terminal time $T$, and extended prior distribution $\pi$.
        }
        \label{fig:avg_params_8}
    \end{figure*}
%%%%%%%%%%%%%%%%%%%%%%%%%%%%%%%%%%%%%%%%%%%%%%%%%%%%%%%%%%%%%%%%%%%%%%%

\paragraph{Choice of drift for DBS.} The drift term $\widetilde{f}$ in the diffusion bridge sampler (DBS) can be freely chosen. To explore the impact of different drift choices, we conducted an ablation study. We tested several options: no drift, $\nabla_x \log p_{\text{prior}}$, $\nabla_x \log p_{\text{target}}$, and a geometric annealing path, represented by $\nabla_x \log \nu$, where $\nu(x,s) \propto p_{\text{prior}}^{1-\beta(s)}(x) p_{\text{target}}^{\beta(s)}(x)$. We also tested using a learned function for $\beta$. The results of these experiments are presented in \Cref{table:drift ablation table} and \Cref{figure:drift ablation figure}.
The findings suggest that the most consistent performance is achieved when using the learned geometric annealing path as the drift $\widetilde{f}$. Interestingly, using the score of the target distribution, $\nabla_x \log p_{\text{target}}$, resulted in worse performance compared to no drift for overdamped DBS and only marginal improvements for underdamped DBS.

%%%%%%%%%%%%%%%%%%%%%%%%%%%%%%%%%%%%%%%%%%%%%%%%%%%%%%%%%%%%%%%%%%%%%%%
\begin{figure*}[t!]
        \centering
        \begin{minipage}[h!]{.48\textwidth}
        \centering
        \includegraphics[width=\textwidth]{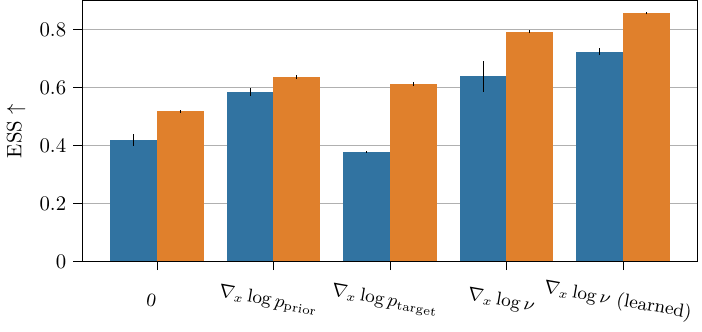}
        \end{minipage}
        \begin{minipage}[h!]{.48\textwidth}
        \centering
        \includegraphics[width=\textwidth]{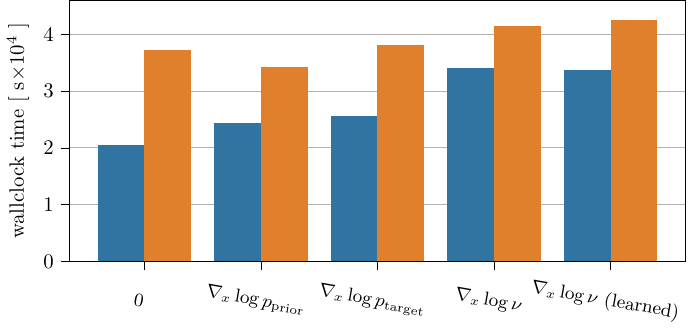}
        \end{minipage}
        \begin{minipage}[h!]{\textwidth}
        \centering
        \includegraphics[width=.4\textwidth]{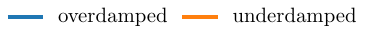}
        \end{minipage}
        \caption[ ]
        {        
 Effective sample size (ESS) and wallclock time for various drifts $\widetilde f$ of the underdamped and overdamped diffusion bridge sampler, averaged across multiple benchmark problems and four seeds. 
Here, $\nu(x,s) \propto p_{\text{prior}}^{1-\beta(s)}(x) p_{\text{target}}^{\beta(s)}(x)$, where ``learned'' indicates that $\beta$ is learned (end-to-end).
        }
        \label{figure:drift ablation figure}
    \end{figure*}
%%%%%%%%%%%%%%%%%%%%%%%%%%%%%%%%%%%%%%%%%%%%%%%%%%%%%%%%%%%%%%%%%%%%%%%

\begin{table*}[t!]
    \caption{Lower bounds on $\log \mathcal{Z}$ for different drift function $\widetilde{f}$ for DBS on various benchmark problems. Higher values indicate better performance. The best results are highlighted in bold. Here, $\nu(x,s) \propto p_{\text{prior}}^{1-\beta(s)}(x) p_{\text{target}}^{\beta(s)}(x)$, where ``learned'' indicates that $\beta$ is learned (end-to-end). \textcolor{blue!50}{Blue} shading indicates that the method uses underdamped Langevin dynamics.
    \lowpriotodo[inline]{could generate a nice plot here for the initialization}
    }
\begin{center}
\begin{small}
\resizebox{\textwidth}{!}{%
\renewcommand{\arraystretch}{1.2}
\begin{tabular}{l|r|r|r|r|r|r|r|r}
\toprule
$\widetilde{f}$
& {\textbf{Funnel}} 
& {\textbf{Credit}} 
& {\textbf{Seeds}} 
& {\textbf{Cancer}} 
& {\textbf{Brownian}} 
& {\textbf{Ionosphere}} 
& {\textbf{ManyWell}} 
& {\textbf{Sonar}} 
\\ 
\midrule
 \multirow{2}{*}{0} 
& $-0.212 \scriptstyle \pm 0.001$
& $-585.208 \scriptstyle \pm 0.008$
& $-73.501 \scriptstyle \pm 0.001$
& $-81.712 \scriptstyle \pm 0.151$
& $0.466 \scriptstyle \pm 0.096$
& $-111.778 \scriptstyle \pm 0.005$
& $38.609 \scriptstyle \pm 0.829$
& $-108.936 \scriptstyle \pm 0.014$
\\
& \cellcolor{blue!5} $-0.155 \scriptstyle \pm 0.004$
& \cellcolor{blue!5} $-585.155 \scriptstyle \pm 0.007$
& \cellcolor{blue!5} $-73.505 \scriptstyle \pm 0.009$
& \cellcolor{blue!5} $-81.307 \scriptstyle \pm 0.114$
& \cellcolor{blue!5} $0.449 \scriptstyle \pm 0.042$
& \cellcolor{blue!5} $-111.845 \scriptstyle \pm 0.007$
& \cellcolor{blue!5} $42.771 \scriptstyle \pm 0.002$
& \cellcolor{blue!5} $-109.718 \scriptstyle \pm 0.013$
\\
%%%%%%%%%%%%%%%%%%%%%%%%%%%%%%%%%%%%%%%%%%%%%%%%%%%%%%%%%%%%%%%%%%%%%%%%%%%%%%%%%%%%%%%%
\midrule
 \multirow{2}{*}{$\nabla_x \log p_{\text{prior}}$} 
& $-0.216 \scriptstyle \pm 0.001$
& $-585.173 \scriptstyle \pm 0.005$
& $-73.483 \scriptstyle \pm 0.001$
& $-81.792 \scriptstyle \pm 0.142$
& $0.787 \scriptstyle \pm 0.011$
& $-111.741 \scriptstyle \pm 0.002$
& $42.772 \scriptstyle \pm 0.000$
& $-108.893 \scriptstyle \pm 0.050$
\\
& \cellcolor{blue!5} $-0.145 \scriptstyle \pm 0.005$
& \cellcolor{blue!5} $-585.146 \scriptstyle \pm 0.004$
& \cellcolor{blue!5} $-73.460 \scriptstyle \pm 0.000$
& \cellcolor{blue!5} $-81.080 \scriptstyle \pm 0.520$
& \cellcolor{blue!5} $0.972 \scriptstyle \pm 0.004$
& \cellcolor{blue!5} $-111.760 \scriptstyle \pm 0.003$
& \cellcolor{blue!5} $\mathbf{42.787 \scriptstyle \pm 0.001}$
& \cellcolor{blue!5} $-109.035 \scriptstyle \pm 0.025$
\\
\midrule
 \multirow{2}{*}{$\nabla_x \log p_{\text{target}}$} 
& $-0.186 \scriptstyle \pm 0.001$
& $-685.852 \scriptstyle \pm 2.400$
& $-73.467 \scriptstyle \pm 0.000$
& $-126.194 \scriptstyle \pm 15.528$
& $0.901 \scriptstyle \pm 0.004$
& $-111.979 \scriptstyle \pm 0.018$
& N/A
& $-109.463 \scriptstyle \pm 0.010$
\\
& \cellcolor{blue!5} $\mathbf{-0.096 \scriptstyle \pm 0.004}$
& \cellcolor{blue!5} $-585.271 \scriptstyle \pm 0.022$
& \cellcolor{blue!5} $-73.445 \scriptstyle \pm 0.006$
& \cellcolor{blue!5} $-81.250 \scriptstyle \pm 0.219$
& \cellcolor{blue!5} $1.061 \scriptstyle \pm 0.004$
& \cellcolor{blue!5} $-111.826 \scriptstyle \pm 0.005$
& \cellcolor{blue!5} $42.782 \scriptstyle \pm 0.000$
& \cellcolor{blue!5} $-109.264 \scriptstyle \pm 0.025$
\\
\midrule
 \multirow{2}{*}{$\nabla_x \log \nu$} 
& $-0.183 \scriptstyle \pm 0.002$
& $-4990.364 \scriptstyle \pm 4405.152$
& $-73.442 \scriptstyle \pm 0.000$
& $-83.981 \scriptstyle \pm 2.105$
& $1.055 \scriptstyle \pm 0.010$
& $-111.678 \scriptstyle \pm 0.000$
& $42.772 \scriptstyle \pm 0.003$
& $-108.616 \scriptstyle \pm 0.005$
\\
& \cellcolor{blue!5} $-0.110 \scriptstyle \pm 0.000$
& \cellcolor{blue!5} $-585.127 \scriptstyle \pm 0.000$
& \cellcolor{blue!5} $-73.432 \scriptstyle \pm 0.000$
& \cellcolor{blue!5} $-78.086 \scriptstyle \pm 0.015$
& \cellcolor{blue!5} $1.106 \scriptstyle \pm 0.001$
& \cellcolor{blue!5} $-111.661 \scriptstyle \pm 0.001$
& \cellcolor{blue!5} $42.756 \scriptstyle \pm 0.013$
& \cellcolor{blue!5} $-108.530 \scriptstyle \pm 0.002$
\\
%%%%%%%%%%%%%%%%%%%%%%%%%%%%%%%%%%%%%%%%%%%%%%%%%%%%%%%%%%%%%%%%%%%%%%%%%%%%%%%%%%%%%%%%
\midrule
%%%%%%%%%%%%%%%%%%%%%%%%%%%%%%%%%%%%%%%%%%%%%%%%%%%%%%%%%%%%%%%%%%%%%%%%%%%%%%%%%%%%%%%%
 % \multirow{2}{*}{$\nabla_x \log \pi (\beta)$} 
$\nabla_x \log \nu$
& $-0.175 \scriptstyle \pm 0.003$
& $-585.166 \scriptstyle \pm 0.017$
& $-73.438 \scriptstyle \pm 0.000$
& $-78.853 \scriptstyle \pm 0.168$
& $1.074 \scriptstyle \pm 0.005$
& $-111.673 \scriptstyle \pm 0.001$
& $42.769 \scriptstyle \pm 0.002$
& $-108.593 \scriptstyle \pm 0.008$
\\
(learned)
& \cellcolor{blue!5} $-0.102 \scriptstyle \pm 0.003$
& \cellcolor{blue!5} $\mathbf{-585.112 \scriptstyle \pm 0.000}$
& \cellcolor{blue!5} $\mathbf{-73.422 \scriptstyle \pm 0.001}$
& \cellcolor{blue!5} $\mathbf{-77.866 \scriptstyle \pm 0.007}$
& \cellcolor{blue!5} $\mathbf{1.137 \scriptstyle \pm 0.001}$
& \cellcolor{blue!5} $\mathbf{-111.636 \scriptstyle \pm 0.000}$
& \cellcolor{blue!5} $42.765 \scriptstyle \pm 0.005$
& \cellcolor{blue!5} $\mathbf{-108.454 \scriptstyle \pm 0.003}$
\\
%%%%%%%%%%%%%%%%%%%%%%%%%%%%%%%%%%%%%%%%%%%%%%%%%%%%%%%%%%%%%%%%%%%%%%%%%%%%%%%%%%%%%%%%
\bottomrule
\end{tabular}
}
\end{small}
\end{center}
\label{table:drift ablation table}
\end{table*}

\begin{table*}[t!]
    \caption{
    Results for lower bounds on $\log \mathcal{Z}$ for various benchmark problems, integration methods, and discretization steps $N$ for DBS. Higher values indicate better performance. The best results are highlighted in bold. \textcolor{blue!50}{Blue} shading indicates that the method uses underdamped Langevin dynamics.
    }
\begin{center}
\begin{small}
\resizebox{\textwidth}{!}{%
\renewcommand{\arraystretch}{1.2}
\begin{tabular}{l|r|r|r|r|r|r|r|r}
\toprule
\textbf{Integrator}
& {\textbf{Funnel ($d=10$)}} 
& {\textbf{Credit ($d=25$)}} 
& {\textbf{Seeds ($d=26$)}} 
& {\textbf{Cancer ($d=31$)}} 
& {\textbf{Brownian ($d=32$)}} 
& {\textbf{Ionosphere ($d=35$)}} 
& {\textbf{ManyWell ($d=50$)}} 
& {\textbf{Sonar ($d=61$)}} 
\\
\midrule
& \multicolumn{8}{c}{$N=8$}

\\
\midrule
EM (OD)
& $-0.860 \scriptstyle \pm 0.010$
& $-585.400 \scriptstyle \pm 0.054$
& $-73.643 \scriptstyle \pm 0.003$
& $-80.960 \scriptstyle \pm 0.169$
& $0.198 \scriptstyle \pm 0.074$
& $-111.858 \scriptstyle \pm 0.003$
& $42.162 \scriptstyle \pm 0.002$
& $-109.046 \scriptstyle \pm 0.017$
\\
EM (UD)
& \cellcolor{blue!5} $-0.725 \scriptstyle \pm 0.001$
& \cellcolor{blue!5} $-590.417 \scriptstyle \pm 0.859$
& \cellcolor{blue!5} $-73.852 \scriptstyle \pm 0.036$
& \cellcolor{blue!5} $-96.286 \scriptstyle \pm 2.349$
& \cellcolor{blue!5} $-3.185 \scriptstyle \pm 1.159$
& \cellcolor{blue!5} $-123.426 \scriptstyle \pm 0.022$
& \cellcolor{blue!5} $42.002 \scriptstyle \pm 0.018$
& \cellcolor{blue!5} $-137.601 \scriptstyle \pm 0.025$
\\
OBAB
& \cellcolor{blue!5} $-0.670 \scriptstyle \pm 0.003$
& \cellcolor{blue!5} $-585.168 \scriptstyle \pm 0.004$
& \cellcolor{blue!5} $-73.607 \scriptstyle \pm 0.005$
& \cellcolor{blue!5} $\mathbf{-79.167 \scriptstyle \pm 0.176}$
& \cellcolor{blue!5} $0.801 \scriptstyle \pm 0.004$
& \cellcolor{blue!5} $-111.822 \scriptstyle \pm 0.005$
& \cellcolor{blue!5} $42.502 \scriptstyle \pm 0.008$
& \cellcolor{blue!5} $-108.937 \scriptstyle \pm 0.015$
\\
BAOAB
& \cellcolor{blue!5} $-0.674 \scriptstyle \pm 0.008$
& \cellcolor{blue!5} $\mathbf{-585.164 \scriptstyle \pm 0.010}$
& \cellcolor{blue!5} $-73.603 \scriptstyle \pm 0.011$
& \cellcolor{blue!5} $-79.252 \scriptstyle \pm 0.183$
& \cellcolor{blue!5} $0.807 \scriptstyle \pm 0.003$
& \cellcolor{blue!5} $\mathbf{-111.811 \scriptstyle \pm 0.007}$
& \cellcolor{blue!5} $42.497 \scriptstyle \pm 0.004$
& \cellcolor{blue!5} $\mathbf{-108.906 \scriptstyle \pm 0.019}$
\\
OBABO
& \cellcolor{blue!5} $\mathbf{-0.557 \scriptstyle \pm 0.004}$
& \cellcolor{blue!5} $-585.179 \scriptstyle \pm 0.005$
& \cellcolor{blue!5} $\mathbf{-73.560 \scriptstyle \pm 0.008}$
& \cellcolor{blue!5} $-78.951 \scriptstyle \pm 0.050$
& \cellcolor{blue!5} $\mathbf{0.835 \scriptstyle \pm 0.017}$
& \cellcolor{blue!5} $-111.818 \scriptstyle \pm 0.009$
& \cellcolor{blue!5} $\mathbf{42.625 \scriptstyle \pm 0.002}$
& \cellcolor{blue!5} $-108.933 \scriptstyle \pm 0.007$
\\
\midrule
& \multicolumn{8}{c}{$N=16$}

\\
\midrule
EM (OD)
& $-0.645 \scriptstyle \pm 0.002$
& $-585.792 \scriptstyle \pm 0.243$
& $-73.561 \scriptstyle \pm 0.003$
& $-81.628 \scriptstyle \pm 0.345$
& $0.683 \scriptstyle \pm 0.010$
& $-111.780 \scriptstyle \pm 0.005$
& $42.460 \scriptstyle \pm 0.004$
& $-108.902 \scriptstyle \pm 0.009$
\\
EM (UD)
& \cellcolor{blue!5} $-0.568 \scriptstyle \pm 0.007$
& \cellcolor{blue!5} $-587.429 \scriptstyle \pm 0.323$
& \cellcolor{blue!5} $-73.752 \scriptstyle \pm 0.018$
& \cellcolor{blue!5} $-82.696 \scriptstyle \pm 2.850$
& \cellcolor{blue!5} $0.421 \scriptstyle \pm 0.036$
& \cellcolor{blue!5} $-123.426 \scriptstyle \pm 0.022$
& \cellcolor{blue!5} $42.354 \scriptstyle \pm 0.013$
& \cellcolor{blue!5} $-137.601 \scriptstyle \pm 0.025$
\\
OBAB
& \cellcolor{blue!5} $-0.491 \scriptstyle \pm 0.004$
& \cellcolor{blue!5} $-585.153 \scriptstyle \pm 0.004$
& \cellcolor{blue!5} $-73.520 \scriptstyle \pm 0.004$
& \cellcolor{blue!5} $-79.118 \scriptstyle \pm 0.723$
& \cellcolor{blue!5} $0.943 \scriptstyle \pm 0.004$
& \cellcolor{blue!5} $-111.735 \scriptstyle \pm 0.006$
& \cellcolor{blue!5} $42.546 \scriptstyle \pm 0.048$
& \cellcolor{blue!5} $-108.766 \scriptstyle \pm 0.010$
\\
BAOAB
& \cellcolor{blue!5} $-0.490 \scriptstyle \pm 0.003$
& \cellcolor{blue!5} $-585.149 \scriptstyle \pm 0.003$
& \cellcolor{blue!5} $-73.516 \scriptstyle \pm 0.003$
& \cellcolor{blue!5} $-78.685 \scriptstyle \pm 0.261$
& \cellcolor{blue!5} $0.944 \scriptstyle \pm 0.004$
& \cellcolor{blue!5} $-111.726 \scriptstyle \pm 0.007$
& \cellcolor{blue!5} $42.548 \scriptstyle \pm 0.037$
& \cellcolor{blue!5} $\mathbf{-108.754 \scriptstyle \pm 0.009}$
\\
OBABO
& \cellcolor{blue!5} $\mathbf{-0.381 \scriptstyle \pm 0.007}$
& \cellcolor{blue!5} $\mathbf{-585.149 \scriptstyle \pm 0.002}$
& \cellcolor{blue!5} $\mathbf{-73.491 \scriptstyle \pm 0.003}$
& \cellcolor{blue!5} $\mathbf{-78.454 \scriptstyle \pm 0.027}$
& \cellcolor{blue!5} $\mathbf{0.977 \scriptstyle \pm 0.003}$
& \cellcolor{blue!5} $\mathbf{-111.725 \scriptstyle \pm 0.002}$
& \cellcolor{blue!5} $\mathbf{42.685 \scriptstyle \pm 0.009}$
& \cellcolor{blue!5} $-108.760 \scriptstyle \pm 0.010$
\\
\midrule
& \multicolumn{8}{c}{$N=32$}

\\
\midrule
EM (OD)
& $-0.452 \scriptstyle \pm 0.002$
& $-585.941 \scriptstyle \pm 0.778$
& $-73.503 \scriptstyle \pm 0.001$
& $-84.032 \scriptstyle \pm 2.197$
& $0.898 \scriptstyle \pm 0.008$
& $-111.730 \scriptstyle \pm 0.005$
& $42.626 \scriptstyle \pm 0.004$
& $-108.758 \scriptstyle \pm 0.005$
\\
EM (UD)
& \cellcolor{blue!5} $-0.425 \scriptstyle \pm 0.006$
& \cellcolor{blue!5} $-585.388 \scriptstyle \pm 0.120$
& \cellcolor{blue!5} $-73.627 \scriptstyle \pm 0.003$
& \cellcolor{blue!5} $-80.207 \scriptstyle \pm 0.338$
& \cellcolor{blue!5} $0.595 \scriptstyle \pm 0.014$
& \cellcolor{blue!5} $-111.973 \scriptstyle \pm 0.009$
& \cellcolor{blue!5} $42.552 \scriptstyle \pm 0.009$
& \cellcolor{blue!5} $-109.378 \scriptstyle \pm 0.026$
\\
OBAB
& \cellcolor{blue!5} $-0.346 \scriptstyle \pm 0.003$
& \cellcolor{blue!5} $\mathbf{-585.126 \scriptstyle \pm 0.002}$
& \cellcolor{blue!5} $-73.465 \scriptstyle \pm 0.005$
& \cellcolor{blue!5} $-78.224 \scriptstyle \pm 0.014$
& \cellcolor{blue!5} $1.024 \scriptstyle \pm 0.004$
& \cellcolor{blue!5} $-111.680 \scriptstyle \pm 0.002$
& \cellcolor{blue!5} $42.665 \scriptstyle \pm 0.005$
& \cellcolor{blue!5} $-108.612 \scriptstyle \pm 0.006$
\\
BAOAB
& \cellcolor{blue!5} $-0.347 \scriptstyle \pm 0.003$
& \cellcolor{blue!5} $-585.127 \scriptstyle \pm 0.002$
& \cellcolor{blue!5} $-73.463 \scriptstyle \pm 0.003$
& \cellcolor{blue!5} $-78.206 \scriptstyle \pm 0.008$
& \cellcolor{blue!5} $1.035 \scriptstyle \pm 0.004$
& \cellcolor{blue!5} $-111.677 \scriptstyle \pm 0.003$
& \cellcolor{blue!5} $42.661 \scriptstyle \pm 0.006$
& \cellcolor{blue!5} $-108.602 \scriptstyle \pm 0.006$
\\
OBABO
& \cellcolor{blue!5} $\mathbf{-0.249 \scriptstyle \pm 0.003}$
& \cellcolor{blue!5} $-585.129 \scriptstyle \pm 0.004$
& \cellcolor{blue!5} $\mathbf{-73.448 \scriptstyle \pm 0.002}$
& \cellcolor{blue!5} $\mathbf{-78.189 \scriptstyle \pm 0.069}$
& \cellcolor{blue!5} $\mathbf{1.048 \scriptstyle \pm 0.005}$
& \cellcolor{blue!5} $\mathbf{-111.673 \scriptstyle \pm 0.004}$
& \cellcolor{blue!5} $\mathbf{42.729 \scriptstyle \pm 0.002}$
& \cellcolor{blue!5} $\mathbf{-108.601 \scriptstyle \pm 0.008}$
\\
\midrule
& \multicolumn{8}{c}{$N=64$}

\\
\midrule
EM (OD)
& $-0.295 \scriptstyle \pm 0.002$
& $-586.567 \scriptstyle \pm 1.871$
& $-73.463 \scriptstyle \pm 0.002$
& $-80.890 \scriptstyle \pm 1.226$
& $1.027 \scriptstyle \pm 0.001$
& $-111.692 \scriptstyle \pm 0.004$
& $42.718 \scriptstyle \pm 0.004$
& $-108.661 \scriptstyle \pm 0.005$
\\
EM (UD)
& \cellcolor{blue!5} $-0.328 \scriptstyle \pm 0.009$
& \cellcolor{blue!5} $-585.231 \scriptstyle \pm 0.012$
& \cellcolor{blue!5} $-73.554 \scriptstyle \pm 0.003$
& \cellcolor{blue!5} $-79.747 \scriptstyle \pm 0.382$
& \cellcolor{blue!5} $0.702 \scriptstyle \pm 0.017$
& \cellcolor{blue!5} $-111.837 \scriptstyle \pm 0.009$
& \cellcolor{blue!5} $42.661 \scriptstyle \pm 0.006$
& \cellcolor{blue!5} $-109.410 \scriptstyle \pm 0.019$
\\
OBAB
& \cellcolor{blue!5} $-0.228 \scriptstyle \pm 0.002$
& \cellcolor{blue!5} $-585.116 \scriptstyle \pm 0.001$
& \cellcolor{blue!5} $-73.441 \scriptstyle \pm 0.002$
& \cellcolor{blue!5} $-77.968 \scriptstyle \pm 0.005$
& \cellcolor{blue!5} $1.082 \scriptstyle \pm 0.002$
& \cellcolor{blue!5} $-111.652 \scriptstyle \pm 0.002$
& \cellcolor{blue!5} $42.683 \scriptstyle \pm 0.003$
& \cellcolor{blue!5} $-108.517 \scriptstyle \pm 0.005$
\\
BAOAB
& \cellcolor{blue!5} $-0.606 \scriptstyle \pm 0.643$
& \cellcolor{blue!5} $-585.116 \scriptstyle \pm 0.001$
& \cellcolor{blue!5} $-73.441 \scriptstyle \pm 0.001$
& \cellcolor{blue!5} $-77.979 \scriptstyle \pm 0.011$
& \cellcolor{blue!5} $1.091 \scriptstyle \pm 0.002$
& \cellcolor{blue!5} $-111.650 \scriptstyle \pm 0.002$
& \cellcolor{blue!5} $42.684 \scriptstyle \pm 0.004$
& \cellcolor{blue!5} $-108.509 \scriptstyle \pm 0.003$
\\
OBABO
& \cellcolor{blue!5} $\mathbf{-0.164 \scriptstyle \pm 0.005}$
& \cellcolor{blue!5} $\mathbf{-585.113 \scriptstyle \pm 0.002}$
& \cellcolor{blue!5} $\mathbf{-73.431 \scriptstyle \pm 0.003}$
& \cellcolor{blue!5} $\mathbf{-77.945 \scriptstyle \pm 0.010}$
& \cellcolor{blue!5} $\mathbf{1.104 \scriptstyle \pm 0.003}$
& \cellcolor{blue!5} $\mathbf{-111.648 \scriptstyle \pm 0.002}$
& \cellcolor{blue!5} $\mathbf{42.730 \scriptstyle \pm 0.004}$
& \cellcolor{blue!5} $\mathbf{-108.501 \scriptstyle \pm 0.003}$
\\
\midrule
& \multicolumn{8}{c}{$N=128$}

\\
\midrule
EM (OD)
& $-0.187 \scriptstyle \pm 0.003$
& $-585.524 \scriptstyle \pm 0.414$
& $-73.437 \scriptstyle \pm 0.001$
& $-83.395 \scriptstyle \pm 4.184$
& $1.081 \scriptstyle \pm 0.004$
& $-111.673 \scriptstyle \pm 0.002$
& $42.760 \scriptstyle \pm 0.003$
& $-108.595 \scriptstyle \pm 0.006$
\\
EM (UD)
& \cellcolor{blue!5} $-0.249 \scriptstyle \pm 0.003$
& \cellcolor{blue!5} $-585.235 \scriptstyle \pm 0.009$
& \cellcolor{blue!5} $-73.508 \scriptstyle \pm 0.005$
& \cellcolor{blue!5} $-79.704 \scriptstyle \pm 0.177$
& \cellcolor{blue!5} $0.684 \scriptstyle \pm 0.038$
& \cellcolor{blue!5} $-111.786 \scriptstyle \pm 0.006$
& \cellcolor{blue!5} $42.731 \scriptstyle \pm 0.002$
& \cellcolor{blue!5} $-109.351 \scriptstyle \pm 0.075$
\\
OBAB
& \cellcolor{blue!5} $-0.151 \scriptstyle \pm 0.003$
& \cellcolor{blue!5} $\mathbf{-585.112 \scriptstyle \pm 0.001}$
& \cellcolor{blue!5} $-73.428 \scriptstyle \pm 0.001$
& \cellcolor{blue!5} $\mathbf{-77.856 \scriptstyle \pm 0.007}$
& \cellcolor{blue!5} $1.121 \scriptstyle \pm 0.004$
& \cellcolor{blue!5} $-111.637 \scriptstyle \pm 0.001$
& \cellcolor{blue!5} $42.731 \scriptstyle \pm 0.002$
& \cellcolor{blue!5} $-108.459 \scriptstyle \pm 0.001$
\\
BAOAB
& \cellcolor{blue!5} $-0.159 \scriptstyle \pm 0.005$
& \cellcolor{blue!5} $\mathbf{-585.112 \scriptstyle \pm 0.001}$
& \cellcolor{blue!5} $-73.428 \scriptstyle \pm 0.001$
& \cellcolor{blue!5} $-77.874 \scriptstyle \pm 0.010$
& \cellcolor{blue!5} $1.131 \scriptstyle \pm 0.002$
& \cellcolor{blue!5} $-111.637 \scriptstyle \pm 0.002$
& \cellcolor{blue!5} $42.733 \scriptstyle \pm 0.006$
& \cellcolor{blue!5} $\mathbf{-108.457 \scriptstyle \pm 0.004}$
\\
OBABO
& \cellcolor{blue!5} $\mathbf{-0.103 \scriptstyle \pm 0.003}$
& \cellcolor{blue!5} $\mathbf{-585.112 \scriptstyle \pm 0.001}$
& \cellcolor{blue!5} $\mathbf{-73.423 \scriptstyle \pm 0.001}$
& \cellcolor{blue!5} $-77.881 \scriptstyle \pm 0.014$
& \cellcolor{blue!5} $\mathbf{1.136 \scriptstyle \pm 0.001}$
& \cellcolor{blue!5} $\mathbf{-111.636 \scriptstyle \pm 0.001}$
& \cellcolor{blue!5} $\mathbf{42.763 \scriptstyle \pm 0.002}$
& \cellcolor{blue!5} $-108.458 \scriptstyle \pm 0.004$
\\
\bottomrule
\end{tabular}
}
\end{small}
\end{center}
\label{table:integrator table}
\end{table*}

%%%%%%%%%%%%%%%%%%%%%%%%%%%%%%%%%%%%%%%%%%%%%%%%%%%%%%%%%%%%%%%%%%%%%%%
\end{document}